\documentclass{article}
\PassOptionsToPackage{numbers,sort&compress}{natbib}

\usepackage[utf8]{inputenc} 
\usepackage[T1]{fontenc}    
\usepackage[hidelinks]{hyperref}       
\usepackage{url}            
\usepackage{booktabs}       
\usepackage{amsmath, amssymb, amsthm, amsfonts}       
\usepackage{nicefrac, xfrac}       
\usepackage{microtype}      
\usepackage[dvipsnames,table,xcdraw]{xcolor}
\usepackage{bm}
\usepackage{mdframed}
\usepackage{graphicx}
\usepackage{subcaption}
\usepackage{tikz}
\usetikzlibrary{decorations.pathreplacing}
\usetikzlibrary{shapes.geometric}
\usetikzlibrary{positioning, arrows.meta, shapes.geometric}
\usepackage{algorithm}
\usepackage[noend]{algpseudocode}

\algnewcommand{\IfThen}[2]{
  \State \algorithmicif\ #1\ \algorithmicthen\ #2}
\usepackage{natbib}[numbers,sort&compress]
\usepackage{geometry}
\usepackage{authblk,textcomp}
\usepackage{mathtools}
\usepackage{pifont}

\usepackage[cal=euler]{mathalfa}
\usepackage{libertine}
\usepackage[capitalize,noabbrev]{cleveref}
\usepackage{mathrsfs}

\usepackage{array}
\newcolumntype{M}[1]{>{\centering\arraybackslash}m{#1}}

\usepackage[shortlabels,inline]{enumitem}
\newlist{condenum}{enumerate}{1} 
\setlist[condenum]{label=(\roman*), ref=(\roman*)}
\crefname{condenumi}{Assumption}{Assumptions}

\theoremstyle{plain}
\newtheorem{theorem}{Theorem}[section]
\newtheorem{proposition}[theorem]{Proposition}
\newtheorem{lemma}[theorem]{Lemma}
\newtheorem{corollary}[theorem]{Corollary}

\newtheorem{remark}[theorem]{Remark}
\theoremstyle{definition}
\newtheorem{definition}[theorem]{Definition}
\newtheorem{assumption}[theorem]{Assumption}
\crefname{assumption}{assumption}{assumptions}
\newtheorem{algorithm_hack}[theorem]{Algorithm}
\newtheorem*{algorithm_hack*}{Algorithm}
\Crefname{equation}{Eq.}{Eqs.}
\crefname{equation}{eq.}{eqs.}
\newtheorem{fact}[theorem]{Fact}
\newtheorem*{fact*}{Fact}


\newcommand{\Paren}[1]{\left(#1\right)}


\newcommand{\abs}[1]{\lvert#1\rvert}
\newcommand{\Abs}[1]{\left\lvert#1\right\rvert}

\newcommand{\card}[1]{\lvert#1\rvert}
\newcommand{\Card}[1]{\left\lvert#1\right\rvert}

\newcommand{\set}[1]{\{#1\}}
\newcommand{\Set}[1]{\left\{#1\right\}}

\newcommand{\norm}[1]{\lVert#1\rVert}







\newcommand{\Esymb}{\mathbb{E}}
\newcommand{\Psymb}{\mathbb{P}}

\DeclareMathOperator*{\E}{\Esymb}

\DeclareMathOperator*{\ProbOp}{\Psymb}
\renewcommand{\Pr}{\ProbOp}

\newcommand{\suchthat}{\;\middle\vert\;}



















\newcommand\bdot\bullet




\DeclareMathOperator{\poly}{poly}

\DeclareMathOperator{\argmax}{argmax}







\newcommand{\N}{\mathbb N}
\newcommand{\R}{\mathbb R}



\newcommand{\cA}{\mathcal A}

\newcommand{\cD}{\mathcal D}

\newcommand{\cN}{\mathcal N}

\newcommand{\cT}{\mathcal T}

\newcommand{\cX}{\mathcal X}

\renewcommand{\leq}{\leqslant}
\renewcommand{\le}{\leqslant}
\renewcommand{\geq}{\geqslant}

\let\epsilon=\varepsilon
\numberwithin{equation}{section}
\newcommand\MYcurrentlabel{xxx}
\newcommand{\MYstore}[2]{%
  \global\expandafter \def \csname MYMEMORY #1 \endcsname{#2}%
}
\newcommand{\MYload}[1]{%
  \csname MYMEMORY #1 \endcsname%
}
\newcommand{\MYnewlabel}[1]{%
  \renewcommand\MYcurrentlabel{#1}%
  \MYoldlabel{#1}%
}
\newcommand{\MYdummylabel}[1]{}
\newcommand{\torestate}[1]{%
  \let\MYoldlabel\label%
  \let\label\MYnewlabel%
  #1%
  \MYstore{\MYcurrentlabel}{#1}%
  \let\label\MYoldlabel%
}
\newcommand{\restatetheorem}[1]{%
  \let\MYoldlabel\label
  \let\label\MYdummylabel
  \begin{theorem*}[Restatement of \cref{#1}]
    \MYload{#1}
  \end{theorem*}
  \let\label\MYoldlabel
}
\newcommand{\restatelemma}[1]{%
  \let\MYoldlabel\label
  \let\label\MYdummylabel
  \begin{lemma*}[Restatement of \cref{#1}]
    \MYload{#1}
  \end{lemma*}
  \let\label\MYoldlabel
}
\newcommand{\restateprop}[1]{%
  \let\MYoldlabel\label
  \let\label\MYdummylabel
  \begin{proposition*}[Restatement of \cref{#1}]
    \MYload{#1}
  \end{proposition*}
  \let\label\MYoldlabel
}
\newcommand{\restatefact}[1]{%
  \let\MYoldlabel\label
  \let\label\MYdummylabel
  \begin{fact*}[Restatement of \cref{#1}]
    \MYload{#1}
  \end{fact*}
  \let\label\MYoldlabel
}
\newcommand{\restate}[1]{%
  \let\MYoldlabel\label
  \let\label\MYdummylabel
  \MYload{#1}
  \let\label\MYoldlabel
}

\newcommand{\e}{\epsilon}

\allowdisplaybreaks
\sloppy









\newcommand{\braket}[1]{\left\langle #1 \right\rangle}









\usepackage{bm}

\newcommand{\wtilde}{\tilde{w}}

\newcommand{\cuLDA}{\mathcal{A}_{\mathrm{aLD}}}
\newcommand{\csLDA}{\mathcal{A}_{\mathrm{sLD}}}
\newcommand{\csLD}{\mathrm{sLD}}
\newcommand{\ckLDA}{\mathcal{A}_{\mathrm{kLD}}}

\newcommand{\muhat}{\hat{\mu}}
\newcommand{\alphahat}{\hat{\alpha}}
\newcommand{\innerstage}{\operatorname{InnerStage}}

\newcommand{\ds}{\gamma}
\newcommand{\tth}{^{\mathrm{th}}}
\newcommand{\amin}{\alpha_{\mathrm{low}}}

\newcommand{\ceil}[1]{\lceil #1 \rceil}
\newcommand{\floor}[1]{\lfloor #1 \rfloor}

\newcommand{\wmin}{w_{\mathrm{low}}}

\newcommand{\rme}{\mathrm{RME}}

\newcommand{\cLD}{\operatorname{cor-aLD}}
\newcommand{\sLD}{\mathrm{sLD}}
\newcommand{\ckLD}{\operatorname{cor-kLD}}

\newcommand{\setfirst}{T}

\newcommand{\data}{X}


\newcommand{\Snote}[1]{}
\newcommand{\DDnote}[1]{}
\newcommand{\Rnote}[1]{}
\newcommand{\fy}[1]{}
\newcommand{\Gnote}[1]{}
\newcommand{\as}[1]{}
\newcommand{\AWnote}[1]{}

\geometry{
 a4paper,
 left=20mm,
 right=20mm,
 top=20mm,
}

\hypersetup{pdfauthor={},pdftitle={},%
            colorlinks, linktocpage=true, pdfstartpage=1, pdfstartview=FitV,%
    breaklinks=true, pdfpagemode=UseNone, pageanchor=true, pdfpagemode=UseOutlines,%
    plainpages=false, bookmarksnumbered, bookmarksopen=true, bookmarksopenlevel=1,%
    hypertexnames=true, pdfhighlight=/O,%
    urlcolor=orange, linkcolor=blue, citecolor=blue
        }

\title{Robust Mixture Learning when Outliers Overwhelm Small Groups}

\author[1*]{Daniil Dmitriev}
\author[1*]{Rares-Darius Buhai}
\author[1]{Stefan Tiegel}
\author[2]{Alexander Wolters}
\author[3]{Gleb Novikov}
\author[4]{Amartya Sanyal}
\author[1]{David Steurer}
\author[1]{Fanny Yang}
\affil[1]{\small ETH Zurich}
\affil[2]{\small TU Munich}
\affil[3]{Lucerne School of Computer Science and Information Technology}
\affil[4]{\small University of Copenhagen}

\affil[ *]{\textit {Equal contribution}}
\makeatletter
\newtheorem*{rep@theorem}{\rep@title}
\newcommand{\newreptheorem}[2]{%
\newenvironment{rep#1}[1]{%
 \def\rep@title{#2 \ref{##1}}%
 \begin{rep@theorem}}%
 {\end{rep@theorem}}}
\makeatother

\theoremstyle{plain}
\numberwithin{theorem}{section}
\newreptheorem{theorem}{Theorem}

\theoremstyle{remark}
\date{}

\begin{document}
\maketitle

\begin{abstract}
We study the problem of estimating the means of well-separated mixtures when an 
adversary may add arbitrary outliers.
While strong guarantees are available when the outlier fraction is significantly smaller than the minimum mixing weight, much less is known when outliers may crowd out low-weight clusters --  a setting we refer to as list-decodable mixture learning (LD-ML). In this case, adversarial outliers can simulate  additional spurious mixture components. Hence, if all means of the mixture must be recovered up to a small error in the output list, the list size needs to be larger than the number of (true) components. 
We propose an algorithm that obtains
order-optimal error guarantees for each mixture mean with a minimal
list-size overhead, significantly improving upon list-decodable mean estimation, the only existing method that is applicable for LD-ML. Although improvements are observed even when the mixture is non-separated,
our algorithm achieves particularly strong guarantees when the mixture is separated: it can leverage the mixture structure to partially cluster the samples before carefully iterating a base learner for list-decodable mean estimation at different scales.
\end{abstract}

\section{Introduction}
Estimating the mean of a distribution from empirical data is one of the most fundamental problems in statistics.
The mean often serves as the primary summary statistic of the dataset or is the ultimate 
quantity of interest that is often not precisely measurable.
In practical applications, data frequently originates from a mixture of multiple groups (also called subpopulations) and a natural goal is to estimate the distinct means of each group separately. 
For example, we might like to use representative \fy{hm} individuals to study how a complex decision or procedure would impact different subpopulations.
In other applications, such as genetics~\cite{bickel2003robust} or astronomy~\cite{feigelson2012statistical} research, finding the means themselves
can be a crucial first step towards scientific discovery.
In both scenarios, the algorithm should output a list of estimates that are close to the unobservable true means.

However, in practice,
the data may also contain outliers, for example due to measurement errors or abnormal events. 
We would like to find good mean estimates for all inlier groups even 
when the proportion  of such  \emph{additive adversarial contaminations} is larger than some smaller groups that we want to properly represent. 
The central open question that motivates our work is thus:
\begin{center}
\textit{What is the cost of efficiently recovering small groups that may be outnumbered by outliers?}
\end{center}

More specifically, consider a scenario where the practitioner would like to recover the means of small but significant enough inlier groups which constitute at least $\wmin \in (0,1)$ proportion of the (corrupted) data.
If $k$ is 
the number of such inlier groups, for all $i\in [k]$, we then denote by $w_i\geq \wmin$ the unknown weight of the $i-$th group with mean $\mu_i$.
Further, we use \(\e\) to refer to the
proportion of additive contamination -- the data that comes from an unknown adversarial distribution. The goal is to estimate the unknown means $\mu_i$ for all $i \in [k]$.

Existing works on robust mixture learning such as \cite{diakonikolas2018list,bakshi2022robustly}
consider the problem when the fraction of additive adversarial outliers is smaller than the weight of the smallest subgroup, i.e. \(\e < \wmin\).
However, for large outlier proportions where $\e \geq \wmin$, these algorithms are not guaranteed to recover small clusters with  \(w_i \leq \e\). 
In this case, outliers can form additional spurious clusters
that are indistinguishable 
from small inlier groups. As a consequence, generating a list of size equal to the number of components would possibly lead to neglecting the means of small groups.
In order to ensure that the output contains 
a precise estimate for each of the small group means, 
it is thus necessary 
the estimation algorithm to provide a list whose size is strictly larger than the number of components.
We call this paradigm \emph{list-decodable mixture learning} (LD-ML),
following the footsteps of a long line of work on list-decodable learning (see \Cref{sec:prelims,sec:relatedwork}). 

Specifically, the main challenge in LD-ML is to provide a \emph{short} list that contains good mean estimates for all inlier groups.
We first note that there is a minimum list size the algorithm necessarily has to output to guarantee that all groups are recovered. For example, consider an outlier distribution that includes several copies of the smallest inlier group distribution with means spread out throughout the domain. Since inlier groups are indisntinguishable from spurious outlier ones, the shortest list that includes means of all inlier groups must be of size at least $|L| \geq k + \frac{\epsilon}{\min_i w_i}$. Here, $\frac{\epsilon}{\min_i w_i}$ can be interpreted as the minimal list-size overhead that is necessary due to "caring" about groups with weight smaller than $\epsilon$.
The key question is hence how good the error guarantees of an LD-ML algorithm can be when the list size overhead stays close to $\frac{\epsilon}{\min w_i}$,
while being agnostic to $w_i$ aside from the knowledge of $\wmin$.
Furthermore, we are interested in \textit{computationally efficient} algorithms for LD-ML, especially when dealing with high-dimensional data.
To the best of our knowledge, the only existing efficient algorithms that are guaranteed to recover inlier groups with weights \(w_i \leq \e\) are \emph{list-decodable mean estimation} (LD-ME) algorithms. 
LD-ME algorithms model the data as a mixture of one inlier and outlier distribution with weights \(\alpha \leq 1/2\) and \(1 - \alpha\) respectively. Provided with the weight parameter \(\alpha\), they output a list that contains an estimate
close to the inlier mean with high probability. However, for the LD-ML setting,
the inlier weights $w_i$ are not known and we would have to use LD-ME algorithms
with $\wmin$ as weight estimates for each group.
This leads to suboptimal error in particular for 
large groups, that hence (somewhat counter intuitively) would have to "pay" for
the explicit constraint to recover small groups.
Furthermore, even if LD-ME were provided with \(w_i\), by design it would treat inlier points from other components also as outliers, unnecessarily inflating the fraction of outliers to \(1 - w_i\)
instead of \(\e\). 
\paragraph{Contributions}
In this paper, we propose an algorithm that (i) correctly estimates the weight of each component only given a lower bound and (ii) does not overestimate proportion of outliers when components are well-separated. 
In particular, we construct a meta-algorithm that uses mean estimation algorithms as base learners that are designed to deal with adversarial corruptions.
This meta-algorithm inherits guarantees from the base learner and any improvement of the latter translates to better results for LD-ML. For example, if the base learner runs in polynomial time, so does our meta-algorithm.
Our approach of using the output of weak base learners to achieve better performance is reminiscent of the \emph{boosting} paradigm that is common in machine learning practice. 

\begin{table}[tp]
\begin{center}
\setlength{\tabcolsep}{2.4pt}
\begin{tabular}{l|l|l|l}
\toprule
\textbf{Type of inlier mixture} & \textbf{Best prior work} & \textbf{Ours} & \textbf{Inf.-theor. lower bound} \\ \hline
Large \((\forall j: \: \e \leq w_j)\), sep. groups & \begin{tabular}{@{}c@{}}\rule{0pt}{3ex}\(\widetilde O(\e / w_i)\)\end{tabular} & \begin{tabular}{@{}c@{}}\rule{0pt}{3ex}\(\widetilde O(\e / w_i)\)\end{tabular} & \begin{tabular}{@{}c@{}}\rule{0pt}{3ex}\(\Omega(\e / w_i)\), see~\cite{diakonikolas2023algorithmic}\end{tabular} \\ 
Small \((\exists j: \: \e \geq w_j)\), sep. groups & \begin{tabular}{@{}c@{}}\rule{0pt}{3ex}\(O\left(\sqrt{\log \frac{1}{\wmin}}\right)\)\end{tabular} & \begin{tabular}{@{}c@{}}\rule{0pt}{3ex}\(O\left(\sqrt{\log\frac{\e + w_i}{w_i}}\right)\)\end{tabular} & \begin{tabular}{@{}c@{}}\rule{0pt}{3ex}\(\Omega\left(\sqrt{\log\frac{\e + w_i}{w_i}}\right)\),~Prop.~\ref{lemma:it_lb_ours}\end{tabular} \\ 
Non-separated groups & \begin{tabular}{@{}c@{}}\rule{0pt}{3ex}\(O\left(\sqrt{\log\frac{1}{\wmin}}\right)\)\end{tabular} & \begin{tabular}{@{}c@{}}\rule{0pt}{3ex}\(O\left(\sqrt{\log\frac{1}{w_i}}\right)\)\end{tabular} & \begin{tabular}{@{}c@{}}\rule{0pt}{3ex}\(\Omega\left(\sqrt{\log\frac{1}{w_i}}\right)\), see~\cite{regev2017learning}\end{tabular}\\
\bottomrule
\end{tabular}

\caption{
\small{For a mixture of Gaussian components \(\cN(\mu_i, I_d)\), we show upper and lower bounds for the \textbf{error of the $i$-component} given a output list $L$ (of the respective algorithm) \(\min_{\muhat \in L} \norm{\muhat - \mu_i}\).
When the error doesn't depend on $i$, all means have the same error guarantee irrespective of their weight. Note that depending on the type of inlier mixture, different methods in~\cite{diakonikolas2018list} are used as the 'best prior work': robust mixture learning for the first row and list-decodable mean estimation for the rest. }
}
\label{tab:method_comparison_two}
\end{center}
\end{table}

Our algorithm achieves significant improvements in error and list-size guarantees for multiple settings. For ease of comparison, we summarize error improvements for inlier Gaussian mixtures in~\Cref{tab:method_comparison_two}. 
The main focus of our contributions is represented in the second row; that is the  setting where outliers outnumber some inlier groups with weight $w_j \leq \epsilon$
and the inlier components are \emph{well-separated}, i.e., \(\norm{\mu_i - \mu_j} \gtrsim\)\footnote{We adopt the following standard notation: \(f \lesssim g\), \(f = O(g)\), and \(g = \Omega(f)\) mean that \(f \leq Cg\) for some universal constant \(C > 0\). \(\widetilde O\)-notation hides polylogarithmic terms.} \(\sqrt{\log \frac{1}{\wmin}}\), where \(\mu_i\)'s are the inlier component means.
As we mentioned before, robust mixture learning algorithms, such as~\cite{bakshi2022robustly, ivkov2022list}, are not applicable here and the best error guarantees in prior work is achieved by an LD-ME algorithm, e.g. from~\cite{diakonikolas2018list}. While its error bounds are of order \(O(\sqrt{\log \frac{1}{\wmin}})\) for a list size of \(O(\frac{1}{\wmin})\), 
our approach guarantees error \(O(\sqrt{\log \frac{\e}{w_i}})\)
for a list size of \(k + O(\frac{\e}{\wmin})\).
Remarkably, we obtain the same error guarantees as if an oracle would run LD-ME on each inlier group \emph{with the correct weight \(w_i\)} separately (with outliers).
Hence, the only cost for recovering small groups is the increased list-size overhead of order \(O(\frac{\e}{\wmin})\).
%
Further, a sub-routine in our meta-algorithm also obtains novel guarantees under \emph{no} separation assumption,
as shown in the third row of~\Cref{tab:method_comparison_two}.
This algorithm achieves the same error guarantees for similar list size as a base learner that knows 
the correct weights of the inlier components.

Based on a reduction argument from LD-ME to LD-ML, we also provide information-theoretic (IT) lower bounds for LD-ML. If the LD-ME base learners achieve the IT lower bound (possible for inlier Gaussian mixtures), so does our LD-ML algorithm. 
In synthetic experiments, we implement our meta-algorithm with the LD-ME base learner from~\cite{diakonikolas2022clustering} and show clear improvements compared to the only prior method with guarantees, while being comparable or better than popular clustering methods such as k-means and DBSCAN for various attack models. 
\section{Settings}
\label{sec:prelims}
We now introduce the learning settings
that appear in the paper.
Let \(d \in \N_+\) be the ambient dimension of the data and \(k \in \N_+\) be the number of mixture components (inlier groups/clusters). 
\subsection{List-decodable mixture learning under adversarial corruptions}
\label{sec:mixld}
We focus on mixtures that consist of distributions that are sufficiently bounded in the following sense.
\begin{definition}
\label{def:bounded}
    Let \(t \in \N_+\) be even and let \(D(\mu)\) be a distribution on \(\R^d\) with mean \(\mu\).
    We say that \(D(\mu)\) has \emph{sub-Gaussian \(t\)-th central moments} if for all even \(s \leq t\) and for every \(v \in \R^d\) with \(\norm{v} = 1\), \(\E_{x \sim D} \braket{x - \mu, v}^s \leq (s-1)!!.\)
\end{definition}
This class of distributions
is closely related to commonly studied distributions in the literature
(see, e.g.,~\cite{diakonikolas2023algorithmic}) with bounded 
 \(t\)-th moment. Our requirement for the boundedness of all moments \(s \leq t\) stems from the fact that our algorithm should adapt to unknown and possibly non-uniform mixture weights.

We assume that we are given samples from a corrupted $d$-dimensional mixture of $k$ inlier distributions \(D_i(\mu_i)\) satisfying ~\Cref{def:bounded}, where the mixture is defined as
\begin{equation}
\label{eq:gen_model}
    \cX = \sum_{i=1}^k w_i D_i(\mu_i) + \varepsilon Q,
\end{equation}
and $\sum_{i=1}^k w_i + \epsilon = 1$, where for all $i = 1, \ldots, k$, it holds that $w_i \geq \wmin$. Further, an
$\epsilon>0$ proportion of the data comes from an \emph{outlier} distribution \(Q\)
chosen by the adversary with full knowledge of our algorithm and inlier mixture.
Samples drawn from \(D_i(\mu_i)\) constitute the \(i^{\mathrm{th}}\) \textit{inlier cluster}.
The goal in mixture learning under corruptions  as in \Cref{eq:gen_model}, is to design an algorithm that takes in i.i.d. samples from $\cX$ and outputs a list $L$, such that for each \(i \in [k]\), there exists \(\muhat \in L\) with small estimation error \(\norm{\mu_i - \muhat}\).

To the best of our knowledge, we are the first to study the \emph{list-decodable mixture learning} problem (LD-ML)
that considers the case of large fractions of outliers \(\varepsilon \geq \min_{i} w_i\) and the goal is to achieve small estimation errors while the 
list size \(\card{L}\) remains small.
While in robust estimation problems, the fractions of inliers and outliers are usually provided to the algorithm, in  mixture learning, 
the mixture proportions are explicit quantities of interest.
Throughout the paper, we hence assume that \emph{both} the true weights \(w_i\)  of the mixture and the fraction of outliers \(\varepsilon\) are \emph{unknown}. Instead, by definition in \Cref{eq:gen_model}, we assume knowledge of 
a valid lower bound $\wmin\leq \min_{i} w_i$. 

Note that when \(\epsilon \lesssim \min_i w_i\), the problem is known as robust mixture learning and can be solved with list size $|L|=k$ as discussed in ~\cite{diakonikolas2018list, bakshi2022robustly, ivkov2022list}.
However, algorithms for robust mixture learning fail when the fraction of outliers becomes comparable to the inlier group size. 
In the presence of ``spurious'' adversarial clusters, it is information-theoretically impossible to output a list \(L\), such that (i) \(\Card{L} = k\) and (ii) \(L\) contains precise estimate for each true mean.

\subsection{Mean estimation under adversarial corruptions}
\label{sec:list-decodable}

In order to solve LD-ML, we use mean estimation procedures that have provable guarantees under adversarial contamination. Mean estimation can be viewed as a particular case of the mixture learning problem in~\Cref{eq:gen_model} with \(k = 1\), the fraction of inliers \(\alpha = w_1\) and the fraction of outliers \(\e = 1 - \alpha\).
The mean estimation algorithms we use to solve LD-ML with $\wmin$ need to exhibit guarantees under a stronger adversarial model, where the adversary can also replace a small fraction (depending on \(\wmin\)) of the inlier points; see details in~\Cref{def:cor-model}. This is a special case of the general contamination model as opposed to the slightly more benign additive contamination model in~\Cref{eq:gen_model}. For different regimes of $\alpha$ we use black-box learners that solve corresponding regime when \emph{provided with} \(\alpha\).

\paragraph{Robust mean estimation}

When the majority of points are inliers, we are in the \(\rme\) setting. Robust statistics has studied this setting with different corruption models and efficient algorithms are known to achieve information-theoretically optimal error guarantees (see~\Cref{sec:relatedwork}).
\paragraph{List-decodable mean estimation}
When inliers form a minority, we are in the list-decodable setting and are required to return a list instead of a single estimate. 
We refer to this setting as \(\ckLD\) (\textit{corrupted known list-decoding}). For mixture learning, $\alpha$ is usually unknown and we need to solve the \(\cLD\) (\emph{corrupted agnostic list-decoding}) problem (i.e., \(\alpha\) is \emph{not provided}, but instead a lower bound \(\amin \in [\wmin, \alpha]\) is given to the algorithm).
Finally, when only additive adversarial contamination is present, as in~\Cref{eq:gen_model}, 
we recover the standard list-decoding setting studied in prior works (see~\Cref{sec:relatedwork}) that we call \(\sLD\) (\textit{simple list-decoding}).
In~\cref{app:stability} we show that two
algorithms designed for \(\sLD\) also exhibit guarantees for \(\ckLD\) for any $\wmin$. 

\section{Main results}
\label{sec:main_results}

We now present our main results for list-decodable and robust mixture learning defined in~\Cref{sec:prelims}. 
In~\Cref{sec:ub_lb_sep}, we provide algorithmic upper bounds and information-theoretic lower bounds. For the special case of spherical Gaussian mixtures, we show in~\Cref{sec:comparison_prev_work} that we achieve optimality. 
Our results are constructive as we provide a meta-algorithm for which these bounds hold. 

As depicted in~\cref{fig:alg_diagram}, our meta-algorithm (\Cref{alg:full_alg}) is a two-stage process. The outer stage (\Cref{alg:second_stage_pruning}) reduces the problem to mean estimation by leveraging the mixture structure and splitting the data into a small collection $\cT$ of sets $T$.
Each set $T \in \cT$ should (i) contain at most one inlier cluster (and few samples from other clusters) and (ii) the total number of outliers across all sets should be at most \(O(\e n)\).
We then run the inner stage (\Cref{alg:first_stage_high_level}) on sets \(T\), which outputs a mean estimate for the inlier cluster in $T$. First, a \(\cLD\) algorithm identifies the weight of the inlier cluster and returns the result of a \(\ckLD\) base learner with this weight. Then, if the weight is large, we improve the error via an \(\rme\) base learner. 
A careful filtering procedure in both stages achieves the significantly reduced list size and better error guarantees. 
We require the base learners to satisfy the following set of assumptions.




\begin{figure}
\centering
\begin{tikzpicture}[
    arrow/.style={-Latex, line width=1pt},
    smallblock/.style={draw, rectangle, rounded corners, line width=1pt, minimum height=1cm, text width=2.5cm, align=center, fill=gray!30},]
    \node[rounded corners, draw=blue, line width=1pt, fill=blue!10, align=left, minimum width=8cm, minimum height=1.5cm, text width=9cm, text depth=10.0ex] (outer) at (0,0) {\textbf{Outer stage} (\Cref{alg:second_stage_pruning}): creates collection \(\cT\) of sets \(T\)};

    \node[rounded corners, draw=magenta, line width=1pt, fill=magenta!10, minimum width=7cm, minimum height=1.0cm, text width=7cm, text depth=6.0ex] at ([shift={(4.6cm,-0.2cm)}]outer.west) (inner) {\textbf{Inner stage} (\Cref{alg:first_stage_high_level}): for each \(T\) in \(\cT\):};

    \node[align=left] at ([shift={(-1.0cm,0cm)}]inner.center) (item1) {
           \begin{tabular}{@{}l@{}}
            1. Run algorithm for \(\cLD\)
        \end{tabular}
    };
    \node[align=left] at ([shift={(-1.12cm,-0.4cm)}]inner.center) (item2) {
               \begin{tabular}{@{}l@{}}
            2. Improve errors if possible
        \end{tabular}
        };

    \node[smallblock, left=0.5cm and 1.5cm of inner.west, yshift=0.8cm] (algo1) {Base \(\ckLD\) Algorithm};
    \node[smallblock, below=0.2cm of algo1] (algo2) {Base \(\rme\) Algorithm};

    \draw[arrow] (algo1.east) to [out=0,in=180](inner.west);
    \draw[arrow] (algo2.east) to [out=0,in=180]([yshift=-0.35cm]inner.west);
\end{tikzpicture}
\caption{Schematic of the meta-algorithm (\Cref{alg:full_alg}) underlying \cref{thm:informal}}
\label{fig:alg_diagram}
\end{figure}

\begin{assumption}[Mean-estimation base learners for mixture learning]
\label{asm:algs}
Let \(t\) be an even integer and consider the corruption setting defined in~\Cref{def:cor-model}. Further, let the inlier distribution \(D(\mu^{\ast}) \in \cD\) where \(\cD\) is the 
the family of distributions satisfying~\Cref{def:bounded} for \(t\).
We assume that 
\begin{enumerate}[(a)]
    \item \label{asm:alg-ld} 
    for \(\alpha \in [\wmin, 1/3]\) in the \(\ckLD\) regime, there exists an algorithm $\ckLDA$ 
    that uses $N_{LD}(\alpha)$ samples and $T_{LD}(\alpha)$ time to output a list of size bounded by $1/\alpha^{O(1)}$  that with probability at least $1/2$ contains some $\muhat$ with $\norm{\muhat-\mu^*} \leq f(\alpha)$, where \(f\) is non-increasing.
    
    \item \label{asm:alg-rme} for $\alpha \in [1 - \e_{\rme}, 1]$, with \(0 \leq \e_{\rme} \leq 1/2 - 2\wmin^2 \) in the \(\rme\) regime, there exists an \(\rme\) algorithm $\cA_R$ 
    that uses $N_R(\alpha)$ samples and $T_{R}(\alpha)$ time to output with probability at least $1/2$ some $\muhat$ with $\norm{\muhat-\mu^*} \leq g(\alpha)$,
    where \(g\) is non-increasing. 
\end{enumerate}
\end{assumption}

Note that the sample and time-complexity functions such as \(N_{LD}\) and \(T_{LD}\), might depend on \(t\), for example growing as \(d^t\).
We emphasize that (i) the guarantees of our meta-algorithm depend on the guarantees of the base learners and
(ii) we only require the base learners to work in the well-studied setting with \emph{known} fraction of inliers.
Corollary~\ref{cor:gaussian} uses known base learners for Gaussian distributions achieving information-theoretically optimal error bounds.
There  also exists base learners for distributions beyond Gaussians, such as bounded covariance or log-concave distributions, see, e.g.~\cite{kothari2018robust}.
\subsection{Upper bounds for list-decodable mixture learning}
\label{sec:ub_lb_sep}
Key quantities that appear in our error bounds are the relative proportion of inliers $\tilde w_i$ and outliers  $\tilde \varepsilon_i$:
\begin{equation}
\label{eq:rel_weights}
\tilde w_i = \frac{w_i}{w_i + \varepsilon + \wmin^2} \quad \text{and} \quad \tilde \varepsilon_i = 1 - \tilde w_i.
\end{equation}
These quantities reflect that each set \(T\) in the inner stage contains at most one inlier cluster and a small (\(\lesssim \wmin^2\)) fraction of points from other inlier clusters.
We now present a simplified version of our main result in~\Cref{thm:informal}~(see~\cref{thm:main-technical} for the detailed result) 
that allows for a more streamlined presentation of the results using 
the following `well-behavedness' of \(f\) and \(g\).
\begin{assumption}
    \label{asm:well-behaved} 
    Let \(f\), \(g\) be as defined in~\Cref{asm:algs}. For some \(C > 0\), we assume (i) \(\e_{\rme} \geq 0.01\), (ii) $\forall x \in (0, 1/3]$, $f(x/2) \leq C f(x)$, and (iii) $\forall x \in [0.99, 1]$, $g(x - (1-x)^2) \leq C g(x)$.
\end{assumption}
We are now ready to state the main result of the paper.
\begin{theorem}
\label{thm:informal}
    Let $d, k \in \N_+$, $\wmin \in (0, 1/2]$, and $t$ be an even integer. 
Let $\cX$ be a $d$-dimensional mixture distribution following~\Cref{eq:gen_model}.
Let \(\ckLDA\) and \(\cA_R\) satisfy~\Cref{asm:algs,asm:well-behaved} for some even \(t\). 
Further, suppose that $\norm{\mu_i - \mu_j} \gtrsim \sqrt{t}(1/\wmin)^{4/t} + f(\wmin)$ for all $i \neq j \in [k]$.

Then there exists an algorithm that, given ${\poly(d, 1/\wmin) \cdot (N_{LD}(\wmin) + N_R(\wmin))}$ i.i.d. samples from $\cX$ as well as $d$, $k$, $\wmin$, and $t$, runs in time ${\poly(d, 1/\wmin) \cdot (T_{LD}(\wmin) + T_R(\wmin))}$ and with probability at least $1-\wmin^{O(1)}$ outputs a list $L$ of size ${|L| \leq k + O(\epsilon/\wmin)}$ where, for each $i \in [k]$, there exists $\hat\mu \in L$ such that
    \[\norm{\muhat - \mu_i} = O\Paren{\min_{1\le t'\le t}\sqrt{t'} (1 / \tilde w_i)^{1/t'}+f(\min(\tilde{w}_i, 1/3))}.\]
    If the relative weight of the \(i\)-th cluster is large, i.e., \(\tilde \e_i \leq 0.001\), then the error is further bounded by
    \[\norm{\muhat - \mu_i} = O\Paren{g(\tilde{w}_i)}.\]
\end{theorem}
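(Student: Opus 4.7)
The plan is to prove the theorem by analyzing the two-stage meta-algorithm of~\cref{fig:alg_diagram}: the outer stage~(\Cref{alg:second_stage_pruning}) exploits the separation assumption to partition the samples into a small collection $\cT$ of local subsets, and the inner stage~(\Cref{alg:first_stage_high_level}) runs mean-estimation base learners on each subset to recover the corresponding component mean.

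\textbf{Outer stage.} First, I would establish that the outer stage returns a collection $\cT$ with $|\cT| \leq k + O(\epsilon/\wmin)$ satisfying three properties: (i) for every inlier cluster $i \in [k]$, there is a set $T_i \in \cT$ containing at least a $(1-\wmin^2)$ fraction of cluster $i$'s samples; (ii) each $T \in \cT$ contains at most a $\wmin^2$ fraction of inlier samples from clusters other than its designated one; (iii) the total number of outlier samples across all $T \in \cT$ is $O(\epsilon n)$. The list-size bound holds because adversarial outliers can form at most $O(\epsilon/\wmin)$ ``spurious'' concentrations of mass at least $\wmin$, giving the $k + O(\epsilon/\wmin)$ total. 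A natural construction runs $\ckLDA$ at weight $\wmin$ to produce coarse candidate centers, defines each $T_\mu$ as the samples within some radius $R$ of a candidate $\mu$, and prunes sets of insufficient size. The separation assumption $\norm{\mu_i-\mu_j} \gtrsim \sqrt{t}(1/\wmin)^{4/t} + f(\wmin)$ is calibrated so that $R$ simultaneously exceeds the $t$-th moment tail of each inlier cluster (via~\Cref{def:bounded}) plus the coarse-learner error $f(\wmin)$, while remaining strictly less than any inter-cluster gap minus those tails, yielding (i) and (ii); property (iii) follows by charging each outlier to at most one set.

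\textbf{Inner stage.} Next, I would analyze the inner stage on each set $T = T_i$. Within $T_i$, the relative weight of cluster $i$ is $\tilde{w}_i = w_i/(w_i + \epsilon + \wmin^2)$ as in~\Cref{eq:rel_weights}, with the $\wmin^2$ term accounting for leakage from other inlier clusters via property (ii). Since $\tilde{w}_i$ is unknown, the inner procedure invokes the $\cLD$ meta-algorithm, which runs $\ckLDA$ at a dyadic grid of candidate weights $\alpha \in [\wmin, 1/3]$ and certifies the correct scale via a consistency test. By~\Cref{asm:alg-ld} at the correct $\alpha \asymp \min(\tilde{w}_i, 1/3)$, combined with~\Cref{asm:well-behaved} to absorb the dyadic discretization into a constant factor, $\ckLDA$ returns some $\muhat$ with $\norm{\muhat - \mu_i} = O(f(\min(\tilde{w}_i, 1/3)))$. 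The additional $\sqrt{t'}(1/\tilde{w}_i)^{1/t'}$ term in the error, minimized over $t' \leq t$, arises from the information-theoretic lower bound for list-decoding a distribution with sub-Gaussian $t'$-th central moments---equivalently, it bounds the bias of the empirical cluster-$i$ mean when an adversarial $\tilde{\e}_i$ fraction of $T_i$ is arbitrary. When $\tilde{\e}_i \leq 0.001$, the $\rme$ branch activates: since cluster $i$ dominates $T_i$, $\cA_R$ from~\Cref{asm:alg-rme} produces an estimate with the sharper error $O(g(\tilde{w}_i))$.

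\textbf{Combination and main obstacle.} The sample complexity $\poly(d,1/\wmin) \cdot (N_{LD}(\wmin) + N_R(\wmin))$ and runtime $\poly(d,1/\wmin) \cdot (T_{LD}(\wmin) + T_R(\wmin))$ follow because (a) monotonicity in $\alpha$ upper-bounds resource use at any $\alpha \geq \wmin$ by the value at $\wmin$; (b) the $1/2$-probability guarantees of the base learners are boosted to $1 - \wmin^{O(1)}$ via $O(\log(1/\wmin))$ independent repetitions aggregated over the output lists; and (c) a union bound over the $|\cT| \leq k + O(\epsilon/\wmin)$ sets yields the final failure probability. I expect the main obstacle to be the simultaneous control of $|\cT|$ and of the outlier budget across $\cT$ in the outer stage: adversarial outliers can either attach to an existing inlier set, inflating its contamination without creating a new set, or form their own clusters, adding to $|\cT|$. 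The $\wmin^2$ slack in $\tilde{w}_i$ is designed precisely to absorb leakage of the first kind without inflating the error beyond the information-theoretic rate, and balancing this tradeoff against the $4/t$ exponent in the separation assumption is the most delicate step of the proof.
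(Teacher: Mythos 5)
Your proposal follows the same two-stage architecture as the paper: an outer stage that carves the sample into a small collection $\cT$ using a coarse $\sLD$ list, and an inner stage that runs $\ckLDA$ over a grid of candidate weights, filters, and then refines with $\cA_R$ when the inlier fraction is large. The inner-stage analysis, the probability boosting, and the complexity accounting all match the paper's proof in substance. However, there are two concrete gaps in your outer stage. First, defining $T_\mu$ as the samples within \emph{Euclidean} radius $R \asymp \sqrt{t}(1/\wmin)^{4/t}+f(\wmin)$ of a candidate center cannot work in high dimensions: for a $d$-dimensional distribution with sub-Gaussian $t$-th central moments, a typical sample lies at distance $\Theta(\sqrt{d})$ from its mean, so a dimension-free radius captures almost none of the cluster. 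The paper instead defines $S^{(1)}_i, S^{(2)}_i$ as intersections of one-dimensional slabs along the $O(1/\wmin^2)$ pairwise directions $v_{ij}$ between candidate centers (\Cref{alg:second_stage_pruning}, lines 6--7), which is exactly what makes a dimension-free separation assumption suffice via \Cref{lem:conc-all-dir}.

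Second, ``charging each outlier to at most one set'' does not establish the list-size bound, because the sets the algorithm keeps necessarily overlap: to guarantee that the set added to $\cT$ contains \emph{almost all} of a cluster while only \emph{removing} points that are confidently attributed to it, the paper adds $S^{(2)}_i$ to $\cT$ but deletes only $S^{(1)}_i \subseteq S^{(2)}_i$ from the remaining pool, and admits a candidate only when $|S^{(2)}_i| \le 2|S^{(1)}_i|$. One then needs a separate argument (Theorem~\ref{item:outer_stage_U_small}) showing that the ``left-over'' inlier points $U$ --- points of an already-processed cluster that stay in play and can seed later sets --- have total mass at most $(2\e+O(\wmin^2))n$, i.e.\ they can be charged to adversarial mass. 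Without this, lingering inliers inflate later sets and the $k+O(\e/\wmin)$ bound fails. Relatedly, the final bound is not on $|\cT|$ but on $\sum_{T\in\cT}\bigl(1+O((1-\alpha_T)/\wmin)\bigr)$, since each inner-stage call returns a list of size $1+O((1-\alpha)/\amin)$; the paper's accounting splits this sum over first-selected cluster sets, left-over sets, purely adversarial sets, and the final ``else'' set. These are the genuinely delicate parts of the proof, and your sketch does not yet contain the ideas needed to close them.
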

The proof together with a more general statement,~\Cref{thm:main-technical}, can be found in~\Cref{sec:main_result_appendix}.

Note that for a mixture setting with $k\geq 2$, the assumption \(\wmin \leq 1 / k \leq 1/2 \) is automatically fulfilled.
Also, for large weights $\tilde w_i$ such that $\log(1 / \tilde w_i) \ll t$, the $t'$ that minimizes $\sqrt{t'} (1 / \tilde w_i)^{1/t'}$ is smaller than $t$, and for small weights the minimizer is $t'=t$.


\vspace{-0.1in}
\paragraph{Gaussian case}

\label{sec:comparison_prev_work}
For Gaussian inlier distributions, LD-ME and RME base learners with guarantees for~\Cref{asm:algs} have already been developed in prior work. We can thus readily use them in the meta-algorithm to arrive at the following statement with the relative proportions defined in~\Cref{eq:rel_weights}.
\begin{corollary}[Gaussian case]
\label{cor:gaussian}
  Let $d, k, \wmin$ and t be as in \cref{thm:informal}.
    Let \(\cX\) be as in~\Cref{eq:gen_model} with \(D_i(\mu_i) = \cN(\mu_i, I_d)\) with \(\mu_i\)'s satisfying 
    \(\norm{\mu_i - \mu_j} \gtrsim \sqrt{\log 1 / \wmin}\) for all $i \neq j \in [k]$. There exists an algorithm that for \(t = O(\log 1 / \wmin)\), given \(N =  \poly(d^t, (1 / \wmin)^t)\) i.i.d. samples from \(\cX\) and \(\wmin\), runs in \(\poly(N)\) time and outputs a list \(L\) such that with high probability \(\Card{L} = k + O(\varepsilon / \wmin)\) and, for all \(i \in [k]\), there exists \(\muhat \in L\) such that
    \[\norm{\muhat - \mu_i} 
    =  O\Paren{\sqrt{\log 1 / \tilde w_i}}\,.
    \]
If the relative weight of the $i$-th cluster is large, i.e. \(\tilde \e_i \leq 0.001\), then the error is further bounded by  \[\norm{\muhat - \mu_i} = O\left(\tilde\e_i\sqrt{\log 1 / \tilde \e_i}\right).\] 
\end{corollary}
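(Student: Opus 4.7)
}

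The plan is to apply \Cref{thm:informal} with a suitable choice of $t$ and with concrete $\operatorname{cor-kLD}$ and $\rme$ base learners for spherical Gaussians, then optimize the resulting error expression. First, I would verify that $\cN(\mu_i, I_d) \in \cD$ for every even $t$: for a unit $v$, $\langle x - \mu_i, v\rangle \sim \cN(0,1)$, whose $s$-th central moment equals $(s-1)!!$ for even $s$, matching \Cref{def:bounded} exactly. So the distributional hypothesis of \Cref{thm:informal} is satisfied for every $t$.

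Next, I would instantiate the base learners. For the $\operatorname{cor-kLD}$ step, existing Gaussian list-decoding algorithms (e.g., the degree-$t$ sum-of-squares algorithm of \cite{diakonikolas2018list}, transferred to the $\ckLD$ regime by the reduction outlined in~\cref{app:stability}) achieve $f(\alpha) = O(\sqrt{\log 1/\alpha})$ using $N_{LD}(\alpha) = \poly(d^t,(1/\alpha)^t)$ samples and running time and list size $(1/\alpha)^{O(1)}$. For the $\rme$ step, standard Gaussian robust mean estimation (e.g., filtering based on \cite{diakonikolas2023algorithmic}) attains $g(\alpha) = O((1-\alpha)\sqrt{\log 1/(1-\alpha)})$ for $1-\alpha \leq 1/2 - 2\wmin^2$, again with $\poly(d,1/\wmin)$ sample and time complexity. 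Then I would check \Cref{asm:well-behaved}: $\eps_{\rme} = 1/2 - 2\wmin^2 \geq 0.01$ for $\wmin \leq 1/2$; the map $x \mapsto \sqrt{\log 1/x}$ satisfies $f(x/2) \leq \sqrt{2}\, f(x)$ for $x \leq 1/3$; and writing $y = 1-x$ with $y \leq 0.01$, $g(x - y^2) = (y + y^2)\sqrt{\log 1/(y + y^2)} \leq 2 g(x)$. Thus a constant $C$ exists in all three conditions.

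With these choices I would apply \Cref{thm:informal} with $t = \Theta(\log 1/\wmin)$. The required pairwise separation becomes $\sqrt{t}(1/\wmin)^{4/t} + f(\wmin) = O(\sqrt{\log 1/\wmin}) + O(\sqrt{\log 1/\wmin}) = O(\sqrt{\log 1/\wmin})$, which matches the assumption on $\{\mu_i\}$. The error bound in the general case is
\[
\min_{1 \le t' \le t} \sqrt{t'}\,(1/\tilde w_i)^{1/t'} + f(\min(\tilde w_i, 1/3)).
\]
Since $\tilde w_i \geq \wmin/(1 + \wmin^2 + \eps) \gtrsim \wmin$, we have $\log(1/\tilde w_i) = O(\log 1/\wmin) \leq t$, so the choice $t' = \lceil \log(1/\tilde w_i)\rceil$ is admissible and yields $\sqrt{t'}(1/\tilde w_i)^{1/t'} = O(\sqrt{\log 1/\tilde w_i})$. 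The $f$-term contributes $O(\sqrt{\log 1/\tilde w_i})$ as well. For the improved bound in the large-weight regime $\tilde \eps_i \leq 0.001$, simply substitute $g(\tilde w_i) = O(\tilde\eps_i \sqrt{\log 1/\tilde \eps_i})$.

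Finally, I would read off the sample and time complexities: $\poly(d,1/\wmin)\cdot(N_{LD}(\wmin)+N_R(\wmin)) = \poly(d^t,(1/\wmin)^t)$ with $t = \Theta(\log 1/\wmin)$, and the list size $k + O(\eps/\wmin)$ is directly inherited from \Cref{thm:informal}. The only mild obstacle is the careful bookkeeping for the minimization over $t'$: one must check that $t' = \log(1/\tilde w_i)$ is indeed at most $t$ for every $i$ (guaranteed by $\tilde w_i \gtrsim \wmin$) and that the two terms in the error bound of \Cref{thm:informal} combine to a single $\sqrt{\log 1/\tilde w_i}$ rather than an additional logarithmic factor; this is where the choice $t \asymp \log 1/\wmin$ rather than a fixed constant is essential.
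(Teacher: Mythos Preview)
Your proposal is correct and follows essentially the same approach as the paper: instantiate \Cref{thm:informal} with off-the-shelf Gaussian $\ckLD$ and $\rme$ base learners (the paper cites Theorem~6.12 of \cite{diakonikolas2023algorithmic} and Theorem~5.1 of \cite{diakonikolas2019robust}, but your choices are equivalent), set $t = \Theta(\log 1/\wmin)$, and optimize the $\min_{t'}$ term. Your write-up is more explicit than the paper's three-line proof—verifying the moment condition, checking \Cref{asm:well-behaved}, and confirming that $t' = \lceil \log(1/\tilde w_i)\rceil \leq t$ is admissible—but these are exactly the routine checks the paper leaves implicit.
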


\vspace{-0.2in}
\begin{proof}
    Theorem 6.12 from~\citep{diakonikolas2023algorithmic} provides an LD-ME algorithm \(\ckLDA\)  achieving error \(f(\alpha) \le O(\sqrt{t'} (1 / \alpha)^{1/t'})\) for all \(t'\le t\). The sample and time complexity scale as \(\poly(d^t, (1/\alpha)^t)\).
    Also, Theorem 5.1 from~\citep{diakonikolas2019robust} provides a robust mean estimation algorithm \(\cA_{R}\) such that for a small enough constant fraction of outliers $\epsilon = 1-\alpha$ it achieves error \(g(\alpha) = O((1 - \alpha) \sqrt{\log 1 / (1 - \alpha)})\) with sample complexity \(\tilde \Omega(d / \varepsilon^2)\). 
    Using these \(\ckLDA\) and \(\cA_R\), we recover the desired bounds. 
\end{proof}

\paragraph{Comparison with prior work} We now compare our result with the only previous method that can achieve guarantees in the LD-ML setting with unknown $w_i$. 
As discussed in~\cite{diakonikolas2018list}, algorithms for the simple list-decoding model with $\alpha = \wmin$ 
can be used for LD-ML by viewing a single mixture component as the ``ground truth''  distribution and effectively treating all other inlier components and original outliers as outliers.
Besides requiring a much larger list size of \(O(1 / \wmin) \gg k+O(\epsilon/\wmin) \) and error \(O(\sqrt{\log 1 / \wmin})\), this approach has two drawbacks that manifest in the suboptimal guarantees: 1) 
For larger clusters $i$ with $w_i \gg \wmin$, LD-ME only achieves an error $O\Paren{\sqrt{\log 1 / \wmin}}$. Our result, even without separation assumption, achieves a sharper error bound \(O\Paren{\sqrt{\log 1 / w_i}}\).
2) When the mixture is separated, LD-ME cannot exploit the structure since it still models the data as \(\wmin \cN(\mu_i, I_d) + (1 - \wmin) Q\) for each $i$, so that the algorithm inevitably treats all other true components as outliers.
This results in the error 
$O\Paren{\sqrt{\log 1 / \wmin}} \gg O\Paren{\sqrt{\log 1/\wtilde_i}} = O(1)$ (when \(\e \sim w_i \ll 1\)).
We refer to~\Cref{app:examples} for further illustrative examples.
As a simple example, consider the uniform inlier mixture with \(\e = w_i = 1/(k + 1)\), where \(k\) is large. In this case, previous results have error guarantees \(O(\sqrt{\log k})\), while we obtain error \(O(1)\).

\vspace{-0.1in}
\subsection{Information-theoretical lower bounds and optimality}

Next, we present information-theoretical lower bounds for list-decodable mixture learning on well-separated distributions $\cX$ as defined in~\Cref{eq:gen_model}. 
We show that our error is optimal as long as the list size is required to be small. Our proof uses a simple reduction technique and leverages established lower bounds in \cite{diakonikolas2018list}  for the list-decodable mean estimation model (\(\sLD\) in~\cref{sec:prelims}).

\begin{proposition}[Information-theoretic lower bounds]
\label{lemma:it_lb_ours}
Let \(\mathcal{A}\) be an algorithm that, 
given access to $\cX$, outputs
a list \(L\) that, with probability \(\geq 1/2\), for each \(i \in [k]\) contains  \(\muhat \in L\) with $\norm{\muhat - \mu_i} \leq \beta_i$.


\begin{enumerate}[(a), wide, labelwidth=!, labelindent=0pt]

\item Consider the case with \(\norm{\mu_i - \mu_j} \gtrsim (1 / \wmin)^{4/t}\) for \(i \neq j \in [k]\), \(D_i(\mu_i)\) having \(t\)-th bounded sub-Gaussian central moments and $\beta_i \leq C (1/\wmin)^{1/t}$ for each $i \in [k]$. If for some \(s \in [k]\) it holds that \(w_s \leq \varepsilon\), then 
algorithm $\cA$ must either have error bound $\beta_s = \Omega((1 / \tilde w_i)^{1/t})$
or $\Card{L} \geq k+d - 1$.

    \item Consider the case with \(\norm{\mu_i - \mu_j} \gtrsim \sqrt{\log 1 / \wmin}\) for \(i \neq j \in [k]\), \(D_i(\mu_i) = \cN(\mu_i, I_d)\) and $\beta_i \leq C \sqrt{\log 1/\wmin}$ for each $i \in [k]$.
    If for some \(s \in [k]\) 
it holds that \(w_s \leq \varepsilon\), then
algorithm $\cA$ must either have error bound $\beta_s = \Omega(\sqrt{\log 1 / \tilde w_i})$
or $\Card{L} \geq k+\min\{2^{\Omega(d)}, (1 / \tilde w_i)^{\omega(1)}\}$.

\end{enumerate}


\end{proposition}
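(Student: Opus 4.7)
The plan is to prove both bounds by direct reduction from the information-theoretic lower bounds for list-decodable mean estimation (LD-ME) established in~\cite{diakonikolas2018list}. The key idea is to embed an LD-ME hard instance as the $s$-th cluster of an LD-ML mixture, placing the other $k-1$ cluster means at fixed, far-apart locations; the separation assumption will then force the estimates covering those clusters to be distinct from the estimates covering the LD-ME planted means, converting the LD-ME list-size lower bound into an LD-ML list-size lower bound with the $+(k-1)$ overhead.

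First I would invoke the standard LD-ME lower-bound construction. For any $\alpha \leq 1/2$, it produces $N$ pairwise $\Omega(r)$-separated ``planted'' means $\mu_1^\star,\ldots,\mu_N^\star$ together with outlier distributions $Q_1,\ldots,Q_N$ such that the mixture $P^\star := \alpha D(\mu_j^\star) + (1-\alpha) Q_j$ is independent of $j \in [N]$ as a distribution. For the bounded-moment case, $r = \Theta((1/\alpha)^{1/t})$ and $N \geq d$; for the Gaussian case, $r = \Theta(\sqrt{\log 1/\alpha})$ and $N \geq \min\{2^{\Omega(d)}, (1/\alpha)^{\omega(1)}\}$. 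Now set $\alpha := w_s / (w_s + \varepsilon)$, which lies in $(0, 1/2]$ since $w_s \leq \varepsilon$, and, for each $j \in [N]$, construct the LD-ML instance
\[
\mathcal{X}_j := w_s D(\mu_j^\star) + \sum_{i \neq s} w_i D_i(\mu_i) + \varepsilon Q_j ,
\]
where the means $\mu_i$ for $i \neq s$ are placed at arbitrary fixed locations pairwise at distance $\gtrsim (1/\wmin)^{4/t}$ (resp.\ $\gtrsim \sqrt{\log 1/\wmin}$) and at distance $\gg r$ from the $O(r)$-radius ball containing all $\mu_j^\star$'s. This is compatible with the separation assumption because $r \lesssim (1/\wmin)^{1/t} \ll (1/\wmin)^{4/t}$ in case (a) and $r \lesssim \sqrt{\log 1/\wmin}$ in case (b). Rearranging $P^\star = \alpha D(\mu_j^\star) + (1-\alpha) Q_j$ gives the identity $w_s D(\mu_j^\star) + \varepsilon Q_j = (w_s+\varepsilon) P^\star$, so $\mathcal{X}_j$ is the same distribution for all $j$.

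To conclude, suppose $\mathcal{A}$ outputs a list $L$ of size $K$ that, with probability $\geq 1/2$, contains estimates within $\beta_i$ of every $\mu_i$ in every instance $\mathcal{X}_j$. Since the instances share the same marginal, a standard averaging argument over the choice of $j$ shows that if $\beta_s < r/4$ then the list $L$ must contain an estimate within $\beta_s$ of $\Omega(N)$ distinct planted means $\mu_j^\star$. Moreover, the $k-1$ elements of $L$ covering the other cluster means $\mu_i$ (at distance $\leq \beta_i \leq C (1/\wmin)^{1/t}$, resp.\ $C \sqrt{\log 1/\wmin}$) cannot simultaneously lie within $\beta_s$ of any $\mu_j^\star$, because the $\mu_i$'s are placed at distance $\gg r + \beta_s$ from the planted ball. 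Hence $K \geq (k-1) + N$, yielding $|L| \geq k + d - 1$ in case~(a) and $|L| \geq k + \min\{2^{\Omega(d)}, (1/\tilde w_s)^{\omega(1)}\}$ in case~(b); otherwise $\beta_s = \Omega(r)$. Translating the radius $r$ from $\alpha$ to $\tilde w_s$ changes only absolute constants, because $w_s \geq \wmin$ gives $\wmin^2 \leq w_s + \varepsilon$ and hence $\tilde w_s \in [\alpha/2, \alpha]$, so $(1/\alpha)^{1/t} = \Theta((1/\tilde w_s)^{1/t})$ and $\sqrt{\log 1/\alpha} = \Theta(\sqrt{\log 1/\tilde w_s})$ in the respective regimes.

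The main obstacle is ensuring that the LD-ME hard instance can really be embedded in LD-ML form without manufacturing any additional outlier mass; this is handled cleanly by the identity $w_s D(\mu_j^\star) + \varepsilon Q_j = (w_s+\varepsilon) P^\star$, which lets the LD-ME outlier $Q_j$ double as the LD-ML outlier with the correct weight $\varepsilon$. The other small bookkeeping point is the $\wmin^2$ slack in the definition of $\tilde w_s$, which is absorbed into constants using $w_s \geq \wmin$. Once these are in place, the list-size lower bound is a clean consequence of the LD-ME lower bound plus the separation-driven disjointness argument that prevents the $k-1$ other-cluster estimates from assisting in covering the planted LD-ME means.
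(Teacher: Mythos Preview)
Your proposal is correct and follows the same reduction-from-LD-ME strategy as the paper, but executes it differently. The paper builds a black-box LD-ME algorithm out of $\mathcal{A}$: given an LD-ME input in $\R^d$, it lifts the samples to $\R^{d+1}$ by appending a fresh standard Gaussian coordinate, adds $k-1$ freshly sampled clusters whose means are placed in a parallel hyperplane (so the separation assumption is automatic), runs $\mathcal{A}$ on this synthetic LD-ML instance with uniform weights, and then prunes from the output all hypotheses that are close to the $k-1$ known means. The cited LD-ME lower bound is applied directly to the pruned list, giving $|L|\geq (k-1)+d$ in case~(a) (respectively $(k-1)+\min\{2^{\Omega(d)},(1/\alpha)^{\omega(1)}\}$ in case~(b)) with no constant loss. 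Your approach instead opens up the LD-ME hard-instance family $\{(\mu_j^\star,Q_j)\}_{j\in[N]}$ and embeds it in place, then uses an averaging argument over the $N$ indistinguishable instances. This is equally valid but (i) it relies on the internal packing structure of the LD-ME lower bound rather than using it as a black box, and (ii) the averaging only guarantees coverage of $\Omega(N)$ planted means in some realization, so for case~(a) you obtain $|L|\geq k-1+\Omega(d)$ rather than the stated $k+d-1$. The paper's route is cleaner on both counts; yours has the minor advantage of staying in $\R^d$ and working with the given weights $w_i$ rather than a uniform surrogate.
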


In the Gaussian inlier case,
\Cref{cor:gaussian} together with~\Cref{lemma:it_lb_ours} imply optimality of our meta-algorithm. Indeed, if one plugs in optimal base learners (as in the proof of~\Cref{cor:gaussian}), we obtain error guarantee that matches lower bound. 
In particular, ``exponentially'' larger list size is necessary for asymptotically smaller error. For inlier components with bounded sub-Gaussian moments,~\cite{diakonikolas2018list} obtains information-theoretically (nearly-)optimal LD-ME base learners. 

We remark that for the problem of learning mixture models, the separation assumption is common in the literature~\cite{diakonikolas2018list, hopkins2018mixture, kothari2018robust, liu2022clustering}.
Without the separation assumption, even in the \emph{noiseless} uniform case \(w_i = 1/k\),~\cite{regev2017learning} shows that no efficient algorithm can obtain error asymptotically better than \(\Omega(\sqrt{\log 1 / w_i})\).
In~\Cref{thm:no-sep-technical}, we prove that the inner stage~\Cref{alg:first_stage_high_level} of our algorithm, without knowledge of $w_i$ and separation assumption, achieves with high probability 
matching error guarantees \(O(\sqrt{\log 1 / w_i})\)
with a list size upper bound \(O(1 / \wmin)\).

Furthermore, in~\cite{diakonikolas2018list}, formal evidence of computational hardness was obtained (see their Theorem 5.7, which gives a lower bound in the statistical query model introduced by~\cite{kearns1998efficient}) that suggests obtaining error $\Omega_t((1/\wtilde_s)^{1/t})$ requires running time at least $d^{\Omega(t)}$.
This was proved for Gaussian inliers and the running time matches ours up to a constant in the exponent.
\section{Algorithm sketch}
\label{sec:proof_sketch}

We now sketch our meta-algorithm specialized to the case of separated Gaussian components $\cN(\mu_i,I_d)$ and provide intuition for how it achieves the guarantees in~\Cref{cor:gaussian}.
In this section, we only discuss how to obtain an error of $O(\sqrt{\log 1/\tilde{w}_i})$ for each mean when  
\(\e \gtrsim \min_i w_i\).
We refer to~\cref{sec:inner_stage_appendix} for how to achieve the refined error guarantee of $O(\tilde{\e}_i\sqrt{\log 1/\tilde{\e}_i})$ when $\tilde{\e}_i$ is small.

As discussed in \cref{sec:comparison_prev_work}, running an out-of-the-box LD-ME algorithm for the sLD problem on our input with parameter $\alpha = \wmin$ would give sub-optimal guarantees.
In contrast, our two-stage~\Cref{alg:full_alg}, equipped with the appropriate $\ckLD$ and $\rme$ base learners as depicted in \Cref{fig:alg_diagram}, obtains for each component an error guarantee that is as good as if we had access to the samples \textit{only} from this component and from the outliers. 
We now give more details about the outer stage,~\Cref{alg:first_stage_techniques}, and inner stage,~\Cref{alg:first_stage_high_level}, and describe on a high-level how they contribute to a short output list with optimal error bound in~\cref{cor:gaussian} for large outlier fractions. 


\begin{algorithm}[t]
\caption{\(\operatorname{FullAlgorithm}\)}
    \label{alg:full_alg}
    \begin{algorithmic}[1]
    \Require Samples $S = \Set{x_1, \ldots, x_{n}}$, \(\wmin\), algorithms $\ckLDA$, and $\cA_{R}$.
    \Ensure List \(L\).
    \State Run~\(\operatorname{OuterStage}\)~(\Cref{alg:second_stage_pruning})~on \(S\) and let \(\cT\) be the returned list.
    \State \(L \gets \emptyset\).
    \For{\(T \in \cT\)}
        \State Run~\(\innerstage\)~(\Cref{alg:first_stage_high_level})~on \(T\) with \(\amin = \wmin \cdot \frac{n}{\Card{T}}\).
        \State Add the elements of the returned list to \(L\).
    \EndFor
    \State \Return \(L\).
    \end{algorithmic}
\end{algorithm}

\begin{algorithm}[t]
\caption{Outer stage, informal (see~\Cref{alg:second_stage_pruning})}\label{alg:first_stage_techniques}
    \begin{algorithmic}[1]
        \Require $\data$, $\wmin$, \(\Delta\), and \(\csLD\) algorithm \(\csLDA\).
        \Ensure Collection of sets $\cT$.
    \State \(L \gets (\muhat_1, \ldots, \muhat_M) \coloneqq \csLDA(\data)\) with \(\wmin\);
    \While{\(L \neq \emptyset\)}
    \For{\(\muhat \in L\)}
    \State compute for \textit{an appropriate distance function $d$} \[S_{\muhat}^{(1)} = \Set{x \in \data \suchthat d(x,\muhat)\leq \Delta}, \quad {S_{\muhat}^{(2)} = \Set{x \in \data \suchthat d(x,\muhat)\leq 3\Delta}}\]
    \EndFor
    \If{for all $\muhat$, $\card{S_{\muhat}^{(2)}} > 2 \card{S_{\muhat}^{(1)}}$} add $\data$ to $\cT$ and update $L \leftarrow \emptyset$
    \Else {} 
    \State $\tilde{\mu} \gets \argmax_{\card{S_{\muhat}^{(2)}} \leq 2 \card{S_{\muhat}^{(1)}}} \card{S_{\muhat}^{(1)}}$
    \State add $S_{\tilde{\mu}}^{(2)}$ to $\cT$
    \State $\data \gets \data \setminus S_{\tilde{\mu}}^{(1)}$
    \EndIf
    \EndWhile
\State \Return \(\cT\)
    \end{algorithmic}
\end{algorithm}
    
    
    

\begin{algorithm}[t]
\caption{\(\innerstage\)}
\label{alg:first_stage_high_level}
    \begin{algorithmic}[1]
        \Require Samples \(S = \Set{x_1, \ldots, x_{n}}, \amin \in [\wmin, 1]\), 
    $\ckLDA$, and $\cA_{R}$.
        \Ensure List \(L\).
        \State \(\amin \gets \min(1/100, \amin)\)
        \State \(M \gets \emptyset\)
        \For{$\hat\alpha \in \Set{\amin, 2\amin, \ldots, \floor{1/(3\amin)}\amin}$} 
        \State run $\ckLDA$ on $S$ with fraction of inliers set to $\hat\alpha$
        \State add the pair $(\muhat, \hat\alpha)$ to $M$ for each output \(\muhat\)
        \EndFor
        \State Let \(L\) be the output of~\(\operatorname{ListFilter}\)~(\Cref{alg:constructing_output}) run on \(S\), \(\amin\), and $M$
        \For{$(\muhat, \hat\alpha) \in L$}
        \State replace $\muhat$ by the output of~\(\operatorname{ImproveWithRME}\)~(\Cref{alg:improve_rme}) run on $S$, $\muhat$, $\tau = 40\psi_t(\hat\alpha)+4f(\hat\alpha)$, and $\cA_{R}$
        \EndFor
    \State \Return \(L\)
    \end{algorithmic}
\end{algorithm}

\subsection{Inner stage: list-decodable mean estimation with unknown inlier fraction}
\label{sec:inner-stage}


We now describe how to use a black-box $\ckLD$ algorithm to obtain a list-decoding algorithm $\cuLDA$ for the $\cLD$ mean-estimation setting with access only to $\amin \leq \alpha$.
\(\cuLDA\) is used in the proof of~\Cref{thm:no-sep-technical}  and plays a crucial role (see~\Cref{fig:alg_diagram}) in our meta-algorithm. In particular, it deals with the unknown weight of the inlier distribution in each set returned by the outer stage.
Note that 
estimating $\alpha$ from the input samples is impossible by nature. Indeed, we cannot distinguish between potential outlier clusters of arbitrary proportion $\leq 1-\alpha$ and the inlier component.
Underestimating the size of a large component would inevitably lead to a suboptimal error guarantee.
We now show how  to overcome this challenge and achieve an error guarantee
\(O(\sqrt{\log 1 / \alpha})\) for a list size \(1 + O((1 - \alpha) / \amin)\) for the $\cLD$ setting. Here we only outline our algorithm and refer to~\cref{sec:inner_stage_appendix} for the details.


\Cref{alg:first_stage_high_level} first produces a large list of estimates corresponding to many potential values of $\alpha$ and then prunes it while maintaining a good estimate in the list.
In particular, for each $\alphahat \in A \coloneqq \set{\amin,2\amin, \ldots,\lfloor 1/(3\amin) \rfloor \amin}$, we run \(\ckLDA\) with parameter $\alphahat$ to obtain a list of means.
We append $\alphahat$ to each mean in the list and obtain a list of pairs $(\muhat,\alphahat)$.
We concatenate these lists of pairs for all $\alphahat$ and obtain a list $L$ of size $O(1 / \amin^2)$.
By design, one element of $A$ is close to the true $\alpha$,
so the list $L$ contains at least one $\muhat$ that is $O(\sqrt{\log 1/\alpha})$-close ---  the error guarantee that we aim for --- and there is indeed at least an \(\alpha\)-fraction of samples near \(\muhat\).
We call such a hypothesis ``nearby".

Finally, we prune this concatenated list by verifying for each \(\muhat\) whether there is indeed an \(\alphahat\)-fraction of samples ``not too far" from it.
This is similar to pruning procedures with known $\alpha$ proposed in prior work (see Proposition B.1 in~\cite{diakonikolas2018list}).
Our procedure
(i) never discards a ``nearby" hypothesis, and outputs a list where (ii) every hypothesis contains a sufficient number of points close to it and (iii) all hypotheses are separated.
Property (i) implies that the final error is \(O(\sqrt{\log 1 / \alpha})\) and properties (ii) and (iii) imply list size bound \(1 + O((1 - \alpha) / \amin)\). Note that when $\alpha < \amin$, the list size can be simply upper bounded by $O(1/\amin)$, see~\Cref{rem:alpha_smaller_amin}.


\subsection{Two-stage meta-algorithm}
\label{sec:optimal_two_stage}


Note that even though we could run $\cuLDA$ directly on the entire dataset with $\amin = \wmin$, we would only achieve  an error for the $i\tth$ inlier cluster mean of
$O(\sqrt{\log 1/w_i})$ -- which can be much larger than $O(\sqrt{\log 1/\tilde{w}_i})$ -- for a list of size $O(1 / \wmin)$.
While $\cuLDA$ takes into account the unknown weight of the clusters, it still treats other inlier clusters as outliers.
We now show that if the outer stage~\Cref{alg:first_stage_techniques} of our meta-algorithm~\Cref{alg:full_alg}
separates the samples into a not-too-large collection  $\mathcal T$ of sets with certain properties, running $\cuLDA$ separately on each of the sets can lead to the desired guarantees.
In particular, let us assume that 
\(\cT\) consists of potentially overlapping sets such that:
\begin{enumerate}[(1)
]
\item For each inlier cluster $C^*$, there exists one set $T \in \cT$ such that $T$ contains (almost) all points from $C^*$ and at most $O(\epsilon n)$ other points,
\item It holds that $\sum_{T\in \cT} |T| \leq n + O(\e n)$.
\end{enumerate}


By (1), for every inlier cluster $C^*$  with a corresponding true weight 
$w^*$, there exists a set $T$ such that the points from $C^*$ constitute at least an $\tilde w$-fraction of $T$ with $\tilde w := \Omega(w^* / (w^*+\e))$.
By~\Cref{sec:inner-stage}, applying $\cuLDA$ with ${\amin = \wmin \cdot n / \Card{\setfirst}}$ on such a $T$ then yields a list of size  
$1 +O((1-\tilde{w}) / \wmin)$ with an estimation error at most $O(\sqrt{\log 1/ {\tilde{w}}})$.
If $T$ contains (almost) no inliers, that is, there is no inlier component that should recovered, 
then $\ckLDA$ returns a list of size $O(|T| / (\wmin n))$. 

Now, by the two properties, (almost) all inlier points lie in at most $k$ sets of $\cT$, and all other sets of $\cT$ contain in total at most $O(\e n)$ points.
Hence, concatenating all lists outputted by $\cuLDA$ applied to all $\setfirst \in \cT$ 
leads to a final list size bounded by $k + O(\e / \wmin)$.

\subsection{Outer stage: separating inlier clusters}
\label{sec:outer_stage}

We now informally describe the outer stage that produces the collection of sets $\cT$ with the desiderata described in~\Cref{sec:optimal_two_stage}, leaving the details to~\Cref{sec:outer-stage}. The main steps are outlined in pseudocode in~\Cref{alg:first_stage_techniques}.

Given a set \(\data\) of  \(N=\poly(d^t, 1/\wmin)\) i.i.d.~input samples from the distribution~\Cref{eq:gen_model} with Gaussian inlier components, the first step of the meta-algorithm is to run~\Cref{alg:first_stage_techniques} on \(\data\) and \(\wmin\)  
with \(\Delta = O(\sqrt{\log 1 / \wmin})\).
\Cref{alg:first_stage_techniques} runs an $\sLD$ algorithm on the samples and produces a (large) list of estimates $L$ such that, for each mean, at least one estimate is $O(\sqrt{\log 1/\wmin})$-close to it.
It then add sets to $\cT$ that correspond to these estimates via a dynamic ``two-scale" process.

Specifically, for each $\hat\mu \in L$, we construct \emph{two sets} $S_{\hat\mu}^{(1)} \subseteq S_{\hat\mu}^{(2)}$ consisting of samples close to $\hat\mu$.
By construction, we guarantee that if $S_{\hat\mu}^{(1)}$ contains a non-negligible fraction of samples from any inlier cluster $C^*$, then $S_{\hat\mu}^{(2)}$ contains (almost) all samples from $C^*$ (see~Theorem~\ref{item:outer_stage_S_gi}).

Now we very briefly illustrate how this process could be helpful in proving properties (1) and (2).
Observe that, as long as there exists some $\muhat$ with $|S_{\hat\mu}^{(2)}| \leq 2|S_{\hat\mu}^{(1)}|$, we add $S_{\hat\mu}^{(2)}$ to $\cT$ and remove the samples from $S_{\muhat}^{(1)}$.
Consider one such $\muhat$.
For property (1), we merely note that if $S_{\muhat}^{(1)}$ contains a part of an inlier cluster $C^*$, then $S_{\muhat}^{(2)}$ contains (almost) all of $C^*$, so we add to $\cT$ a set that contains (almost) all of $C^*$; otherwise, when we remove $S_{\muhat}^{(1)}$ we remove (almost) no points from $C^*$, so (almost) all the points from $C^*$ remain in play.
For property (2), we merely note that whenever we add $S_{\muhat}^{(2)}$ to $\cT$, increasing the number of points in it by $|S_{\muhat}^{(2)}|$, we also remove the samples from $S_{\muhat}^{(1)}$, reducing the number of samples by $|S_{\muhat}^{(1)}| \geq |S_{\muhat}^{(2)}|/2$.
The proof of the properties uses some additional arguments of a similar flavor, and we defer it to~\Cref{sec:outer-stage}.

\section{Related work}
\label{sec:relatedwork}

\paragraph{List-decodable mean estimation} Inspired by the list-decoding paradigm that was first introduced for error-correcting codes for large error rates~\cite{elias1957list}, list-decodable mean estimation has become a popular approach for robustly learning the mean of a distribution when the majority of the samples are outliers. 
A long line of work has proposed efficient algorithms with theoretical guarantees.
These algorithms are either based on convex optimization \cite{charikar2017learning,kothari2018robust}, a filtering approach~\cite{diakonikolas2018list, diakonikolas2020list}, or low-dimensional projections \cite{diakonikolas2021list}.
Near-linear time algorithms were obtained in~\cite{cherapanamjeri2020list} and~\cite{diakonikolas2022clustering}.
The list-decoding paradigm is not only used for mean estimation but also other statistical inference problems.
Examples include sparse mean estimation~\citep{diakonikolas2022list, zeng2022list}, linear regression~\cite{karmalkar2019list, raghavendra2020alist, diakonikolas2021statistical}, subspace recovery~\cite{bakshi2021list, raghavendra2020blist}, clustering~\citep{balcan2008discriminative}, stochastic block models and crowd sourcing~\cite{charikar2017learning, meister2018data}.

\paragraph{Robust mean estimation and mixture learning}
When the outliers constitute a minority,
algorithms typically achieve significantly better error guarantees than in the list-decodable setting.
Robust mean estimation algorithms output a single vector close to the mean of the inliers.
In a variety of corruption models, efficient algorithms are known to achieve (nearly) optimal error 

Robust mixture learning tackles the model in~\Cref{eq:gen_model} with $\varepsilon \ll \min_i w_i$
and aims to output exactly $k$ vectors with an accurate estimate for the population mean of each component \cite{diakonikolas2018list,hopkins2018mixture,kothari2018robust,bakshi2020outlier,liu2021settling,bakshi2022robustly,ivkov2022list}.
These algorithms do not enjoy error guarantees for clusters with weights $w_i < \e$.
To the best of our knowledge, our algorithm is the first to achieve non-trivial guarantees in this larger noise regime.

\paragraph{Robust clustering}
Robust clustering~\cite{garcia2010} also addresses the presence of small fractions of outliers in a similar spirit to robust mixture learning, conceptually implemented in the celebrated DBScan algorithm~\citep{ester1996density}. Assuming the output list size is large enough to capture possible outlier clusters, these methods may also be used to tackle list-decodable mixture learning - however, they do not come with an inherent procedure to determine the right choice of hyperparameters that ultimately output a list size that adapts to the problem. 

\begin{figure}[t]
    \centering
    \begin{minipage}[t]{0.8\linewidth}
        \includegraphics[width=0.99\linewidth]{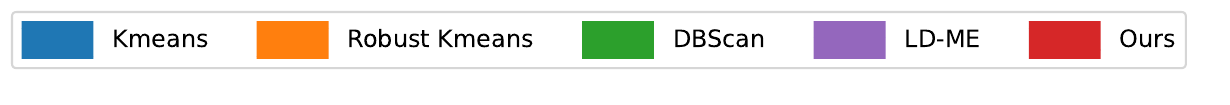}
        \end{minipage}
    \begin{minipage}[t]{\linewidth}
        \includegraphics[width=0.49\linewidth]{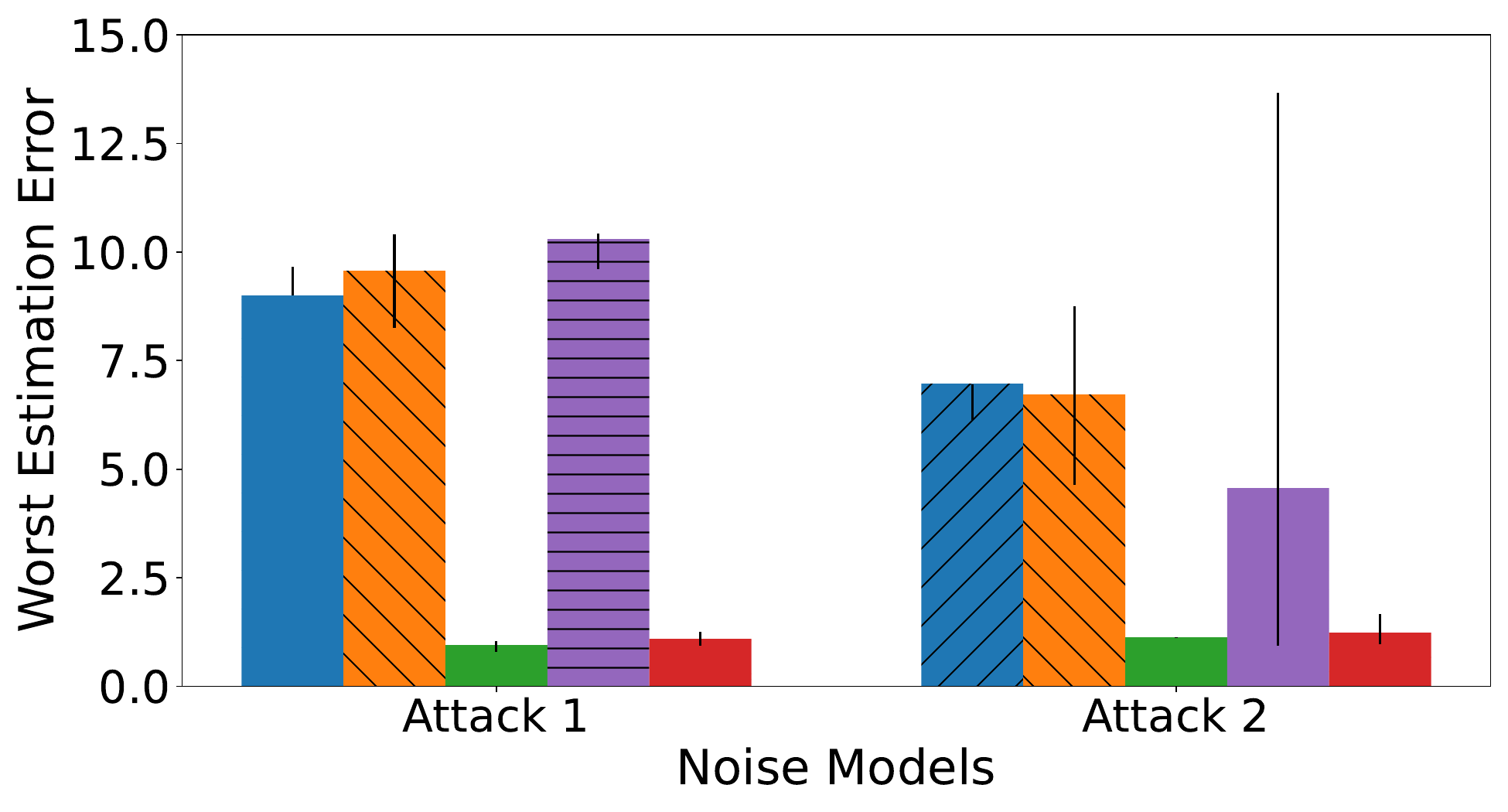}
        \includegraphics[width=0.49\linewidth]{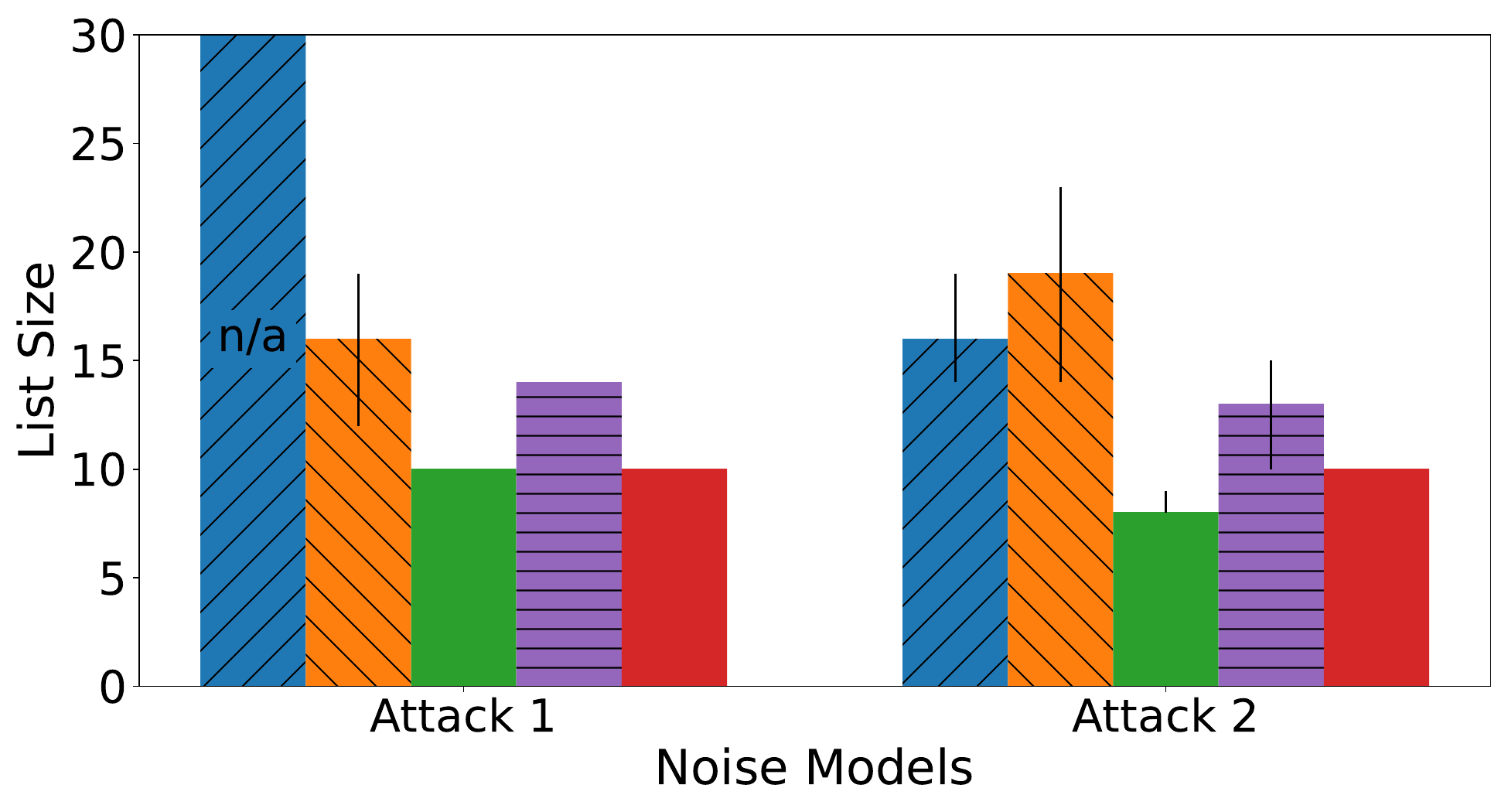}
    \end{minipage}
    \caption{
    Comparison of five algorithms with two adversarial noise models. The attack distributions and further experimental details are given in~\Cref{app:exp-details}. On the left we show worst estimation error for constrained list size and on the right the smallest list size for constrained error guarantee.
    We plot the median of the metrics with the error bars showing \(25\)th and \(75\)th percentile.
    }
    \label{fig:list_size}
\end{figure}
\vspace{-0.1in}
\section{Discussion and future work}
\label{sec:exp}

In this work, we prove that even when small groups are outnumbered by adversarial data points, efficient list-decodable algorithms can provide an accurate estimation of all means with minimal list size. 
The proof for the upper bound is constructive and analyzes a plug-and-play meta-algorithm  (cf.~\cref{fig:alg_diagram}) that inherits guarantees of the black-box $\ckLD$ algorithm $\ckLDA$ and $\rme$ algorithm $\cA_R$, which it uses as base learners. 
Notably, when the inlier mixture is a mixture of Gaussians with identity covariance, we achieve optimality.
Furthermore, any new development for the base learners
automatically translates to improvements in our bounds.

We would like to end by discussing the possible practical impact of this result. 
Since an extensive empirical study is out of the scope of this paper, 
besides the fact that ground-truth means for unsupervised real-world data are hard to come by, we provide preliminary experiments on synthetic data.
Specifically, we generate data from a separated $k-$Gaussian mixture with additive contaminations as in~\Cref{eq:gen_model} and different types of adversarial distributions (see detailed description in~\Cref{app:exp-details}). We focus on the regime
$\epsilon \sim w_i$ where our algorithms shows the largest theoretical improvements. Precisely, we consider \(k = 7\) inlier clusters in \(d = 100\) dimensions with cluster weights ranging from \(0.3\) to \(0.02\), \(\wmin = 0.02\), and \(\epsilon = 0.12\).
We implement our meta-algorithm with the LD-ME base learner from~\cite{diakonikolas2022clustering} (omitting the RME base learner improvement for large clusters). 

We then compare the output of our algorithm with the vanilla LD-ME algorithm from~\cite{diakonikolas2022clustering} with $\wmin=0.02$ and (suboptimal) LD-ML guarantees as well as
well-known (robust) clustering heuristics without LD-ML guarantees,
such as the $k$-means~\citep{lloyd1982least}, Robust $k$-means~\citep{brownlees2015empirical}, and DBSCAN~\citep{ester1996density}. 
Even though none of these heuristics  have LD-ML guarantees, they are commonly used and known to also perform well in practice in noisy settings. The hyper-parameters of the algorithms are tuned beforehand and for the comparative algorithms multiple tuned configurations are selected for the experiments.
For each algorithm, we run \(100\) seeds of each parameter setting and record their performance. 
In~\Cref{fig:list_size} (left), we fix the list size to \(10\) and plot the errors for the worst inlier cluster, typically the smallest. 
We compare the performance of the algorithms by plotting the worst-case estimation errors for a given list size and list sizes that each algorithm requires to achieve a given worst-case estimation error.
In~\Cref{fig:list_size} (right), we fix the error and plot the minimal list size at which competing algorithms reach the same or smaller worst estimation error. Further details on the evaluation metric and parameter tuning are provided in~\Cref{app:exp-details}.

In a different experiment (see~\Cref{sec:app-wlow} for details), we study the effect of varying \(\wmin\) on the performance of our approach and LD-ME.~\Cref{fig:wlow_variation_short} shows the worst estimation error for a cluster with weight of \(0.2\) that is contaminated by noise. As expected from our theoretical results, we observe that the mean-estimation performance of our algorithm stays roughly constant, regardless of the initial \(\wmin\). Meanwhile, the estimation error of LD-ME increases as \(\wmin\) decreases further below  the true cluster weight. 
Furthermore, our algorithm consistently outperforms LD-ME in both estimation error and list size, demonstrating significantly lower list size values with only a slight decrease in the same. 

\begin{figure}[t]
    \centering
    \begin{minipage}[t]{\linewidth}
        \includegraphics[width=0.49\linewidth]
        {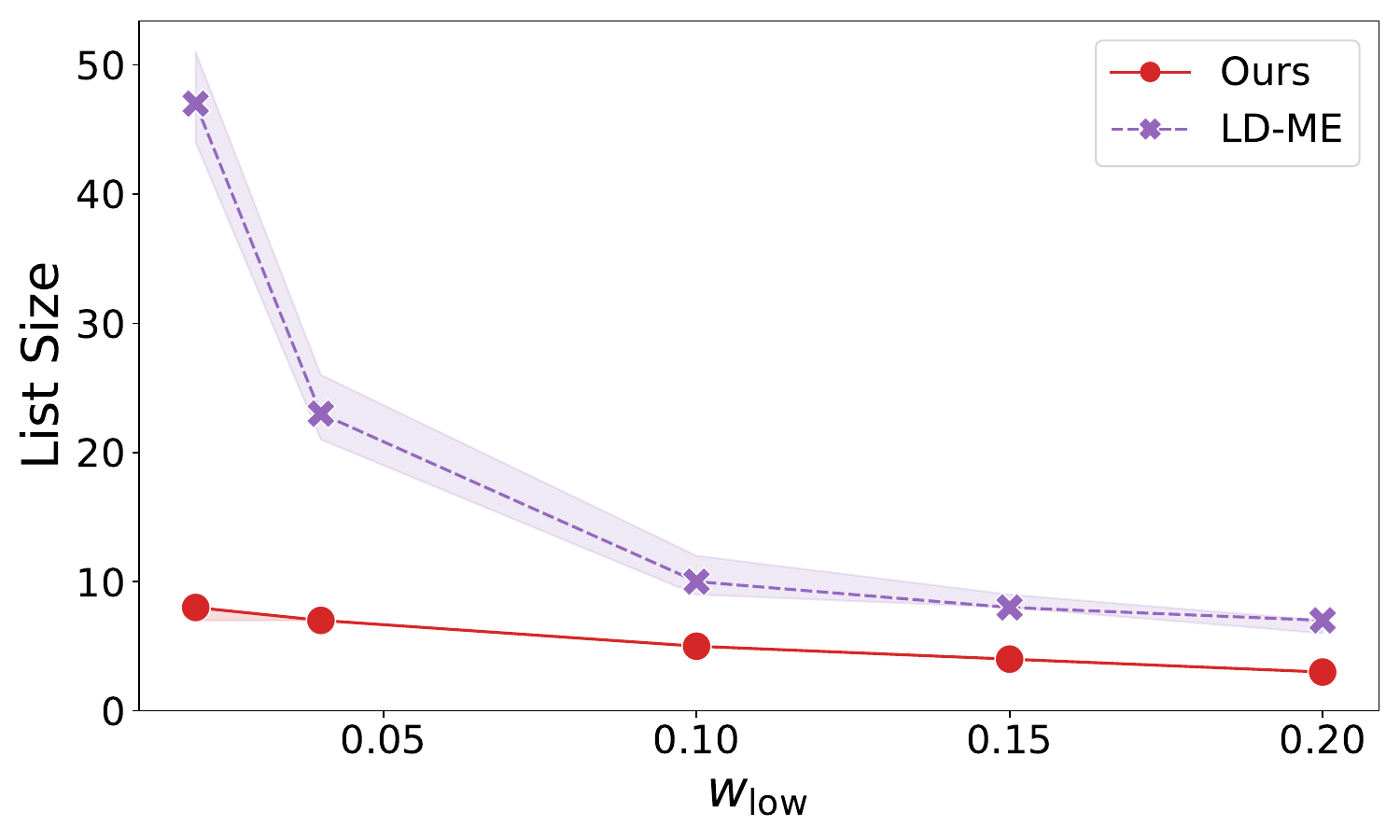}
        \includegraphics[width=0.49\linewidth]{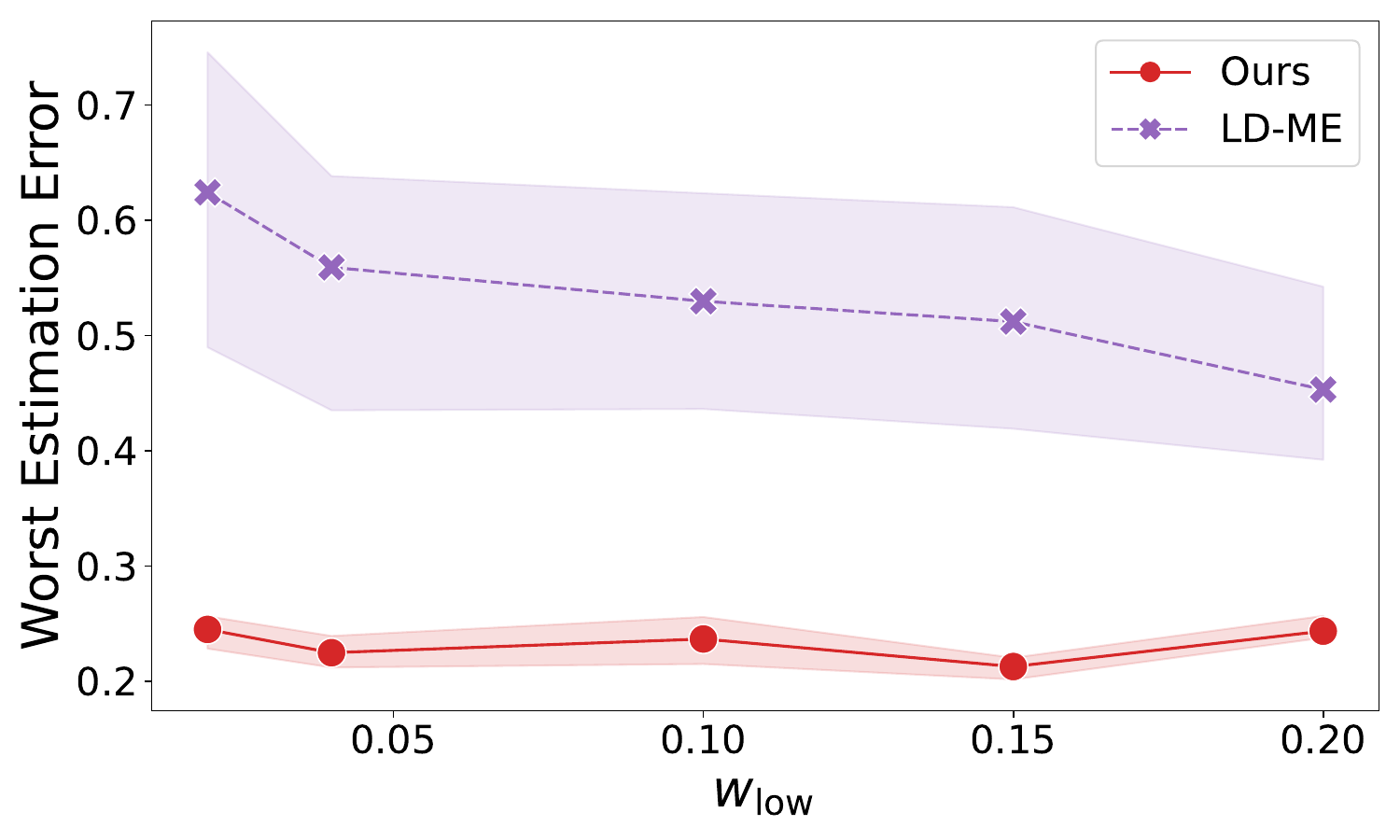}
    \end{minipage}
    \caption{Comparison of list size and estimation error for large inlier cluster for varying \(\wmin\) inputs. The experimental setup is illustrated in~\Cref{app:exp-details}. We plot the median values with error bars showing \(25\)th and \(75\)th quantiles. As \(\wmin\) decreases, we observe a roughly constant estimation error for our algorithm while the error for LD-ME increases. Further, the decrease in list size is much more severe for LD-ME than for our algorithm.}
    \label{fig:wlow_variation_short}
\end{figure}

Overall, we observe that, in line with our theory, our method significantly outperforms the LD-ME algorithm, and performs better or on par with the heuristic approaches. Additional experimental comparison and implementation details can be found in~\Cref{app:exp-details}. Even though these experiments do not allow conclusive statements about the improvement of our algorithm for mixture learning for real-world data, they do provide encouraging evidence that effort could be well-spent on follow-up empirical and theoretical work building on our results. 
For example, it would be interesting to conduct a more extensive empirical study that compares our algorithm against a plethora of robust clustering algorithms. 
Furthermore, practical data often includes components with different covariances and scalings. One could hence explore an extension of our algorithm that can incorporate different covariances at the same time being agnostic to it. 

\section*{Acknowledgements}
DD is supported by ETH AI Center doctoral fellowship and ETH Foundations
of Data Science initiative.
RB, ST, GN, and DS have received funding from the European Research Council (ERC) under the European Union’s Horizon 2020 research and innovation programme (grant agreement No 815464).

\bibliographystyle{unsrt}
\bibliography{biblio}

\begin{thebibliography}{10}

\bibitem{bickel2003robust}
David~R Bickel.
\newblock Robust cluster analysis of microarray gene expression data with the number of clusters determined biologically.
\newblock {\em Bioinformatics}, 19(7):818--824, 2003.

\bibitem{feigelson2012statistical}
Eric~D Feigelson and G~Jogesh Babu.
\newblock Statistical methods for astronomy.
\newblock {\em arXiv preprint arXiv:1205.2064}, 2012.

\bibitem{diakonikolas2018list}
Ilias Diakonikolas, Daniel~M Kane, and Alistair Stewart.
\newblock List-decodable robust mean estimation and learning mixtures of spherical gaussians.
\newblock In {\em Proceedings of the 50th Annual ACM SIGACT Symposium on Theory of Computing}, pages 1047--1060, 2018.

\bibitem{bakshi2022robustly}
Ainesh Bakshi, Ilias Diakonikolas, He~Jia, Daniel~M Kane, Pravesh~K Kothari, and Santosh~S Vempala.
\newblock Robustly learning mixtures of k arbitrary {G}aussians.
\newblock In {\em Proceedings of the 54th Annual ACM SIGACT Symposium on Theory of Computing}, pages 1234--1247, 2022.

\bibitem{diakonikolas2023algorithmic}
Ilias Diakonikolas and Daniel~M Kane.
\newblock {\em Algorithmic high-dimensional robust statistics}.
\newblock Cambridge University Press, 2023.

\bibitem{regev2017learning}
Oded Regev and Aravindan Vijayaraghavan.
\newblock On learning mixtures of well-separated {G}aussians.
\newblock In {\em 2017 IEEE 58th Annual Symposium on Foundations of Computer Science (FOCS)}, pages 85--96. IEEE, 2017.

\bibitem{ivkov2022list}
Misha Ivkov and Pravesh~K Kothari.
\newblock List-decodable covariance estimation.
\newblock In {\em Proceedings of the 54th Annual ACM SIGACT Symposium on Theory of Computing}, pages 1276--1283, 2022.

\bibitem{diakonikolas2022clustering}
Ilias Diakonikolas, Daniel~M Kane, Daniel Kongsgaard, Jerry Li, and Kevin Tian.
\newblock Clustering mixture models in almost-linear time via list-decodable mean estimation.
\newblock In {\em Proceedings of the 54th Annual ACM SIGACT Symposium on Theory of Computing}, pages 1262--1275, 2022.

\bibitem{kothari2018robust}
Pravesh~K Kothari, Jacob Steinhardt, and David Steurer.
\newblock Robust moment estimation and improved clustering via sum of squares.
\newblock In {\em Proceedings of the 50th Annual ACM SIGACT Symposium on Theory of Computing}, pages 1035--1046, 2018.

\bibitem{diakonikolas2019robust}
Ilias Diakonikolas, Gautam Kamath, Daniel Kane, Jerry Li, Ankur Moitra, and Alistair Stewart.
\newblock Robust estimators in high-dimensions without the computational intractability.
\newblock {\em SIAM Journal on Computing}, 48(2):742--864, 2019.

\bibitem{hopkins2018mixture}
Samuel~B Hopkins and Jerry Li.
\newblock Mixture models, robustness, and sum of squares proofs.
\newblock In {\em Proceedings of the 50th Annual ACM SIGACT Symposium on Theory of Computing}, pages 1021--1034, 2018.

\bibitem{liu2022clustering}
Allen Liu and Jerry Li.
\newblock Clustering mixtures with almost optimal separation in polynomial time.
\newblock In {\em Proceedings of the 54th Annual ACM SIGACT Symposium on Theory of Computing}, pages 1248--1261, 2022.

\bibitem{kearns1998efficient}
Michael Kearns.
\newblock Efficient noise-tolerant learning from statistical queries.
\newblock {\em Journal of the ACM (JACM)}, 45(6):983--1006, 1998.

\bibitem{elias1957list}
Peter Elias.
\newblock List decoding for noisy channels.
\newblock Technical report, Research Laboratory of Electronics, Massachusetts Institute of Technology, 1957.

\bibitem{charikar2017learning}
Moses Charikar, Jacob Steinhardt, and Gregory Valiant.
\newblock Learning from untrusted data.
\newblock In {\em Proceedings of the 49th Annual ACM SIGACT Symposium on Theory of Computing}, pages 47--60, 2017.

\bibitem{diakonikolas2020list}
Ilias Diakonikolas, Daniel Kane, and Daniel Kongsgaard.
\newblock List-decodable mean estimation via iterative multi-filtering.
\newblock {\em Advances in Neural Information Processing Systems}, 33:9312--9323, 2020.

\bibitem{diakonikolas2021list}
Ilias Diakonikolas, Daniel Kane, Daniel Kongsgaard, Jerry Li, and Kevin Tian.
\newblock List-decodable mean estimation in nearly-{PCA} time.
\newblock {\em Advances in Neural Information Processing Systems}, 34:10195--10208, 2021.

\bibitem{cherapanamjeri2020list}
Yeshwanth Cherapanamjeri, Sidhanth Mohanty, and Morris Yau.
\newblock List decodable mean estimation in nearly linear time.
\newblock In {\em 2020 IEEE 61st Annual Symposium on Foundations of Computer Science (FOCS)}, pages 141--148. IEEE, 2020.

\bibitem{diakonikolas2022list}
Ilias Diakonikolas, Daniel Kane, Sushrut Karmalkar, Ankit Pensia, and Thanasis Pittas.
\newblock List-decodable sparse mean estimation via difference-of-pairs filtering.
\newblock {\em Advances in Neural Information Processing Systems}, 35:13947--13960, 2022.

\bibitem{zeng2022list}
Shiwei Zeng and Jie Shen.
\newblock List-decodable sparse mean estimation.
\newblock {\em Advances in Neural Information Processing Systems}, 35:24031--24045, 2022.

\bibitem{karmalkar2019list}
Sushrut Karmalkar, Adam Klivans, and Pravesh Kothari.
\newblock List-decodable linear regression.
\newblock {\em Advances in neural information processing systems}, 32, 2019.

\bibitem{raghavendra2020alist}
Prasad Raghavendra and Morris Yau.
\newblock List decodable learning via sum of squares.
\newblock In {\em Proceedings of the Fourteenth Annual ACM-SIAM Symposium on Discrete Algorithms}, pages 161--180. SIAM, 2020.

\bibitem{diakonikolas2021statistical}
Ilias Diakonikolas, Daniel Kane, Ankit Pensia, Thanasis Pittas, and Alistair Stewart.
\newblock Statistical query lower bounds for list-decodable linear regression.
\newblock {\em Advances in Neural Information Processing Systems}, 34:3191--3204, 2021.

\bibitem{bakshi2021list}
Ainesh Bakshi and Pravesh~K Kothari.
\newblock List-decodable subspace recovery: Dimension independent error in polynomial time.
\newblock In {\em Proceedings of the 2021 ACM-SIAM Symposium on Discrete Algorithms (SODA)}, pages 1279--1297. SIAM, 2021.

\bibitem{raghavendra2020blist}
Prasad Raghavendra and Morris Yau.
\newblock List decodable subspace recovery.
\newblock In {\em Conference on Learning Theory}, pages 3206--3226. PMLR, 2020.

\bibitem{balcan2008discriminative}
Maria-Florina Balcan, Avrim Blum, and Santosh Vempala.
\newblock A discriminative framework for clustering via similarity functions.
\newblock In {\em Proceedings of the fortieth annual ACM symposium on Theory of computing}, pages 671--680, 2008.

\bibitem{meister2018data}
Michela Meister and Gregory Valiant.
\newblock A data prism: Semi-verified learning in the small-alpha regime.
\newblock In {\em Conference On Learning Theory}, pages 1530--1546. PMLR, 2018.

\bibitem{bakshi2020outlier}
Ainesh Bakshi, Ilias Diakonikolas, Samuel~B Hopkins, Daniel Kane, Sushrut Karmalkar, and Pravesh~K Kothari.
\newblock Outlier-robust clustering of {G}aussians and other non-spherical mixtures.
\newblock In {\em 2020 IEEE 61st Annual Symposium on Foundations of Computer Science (FOCS)}, pages 149--159. IEEE, 2020.

\bibitem{liu2021settling}
Allen Liu and Ankur Moitra.
\newblock Settling the robust learnability of mixtures of {G}aussians.
\newblock In {\em Proceedings of the 53rd Annual ACM SIGACT Symposium on Theory of Computing}, pages 518--531, 2021.

\bibitem{garcia2010}
Luis~Angel Garc{\'\i}a-Escudero, Alfonso Gordaliza, Carlos Matr{\'a}n, and Agust{\'\i}n Mayo-Iscar.
\newblock A review of robust clustering methods.
\newblock {\em Advances in Data Analysis and Classification}, 4:89--109, 2010.

\bibitem{ester1996density}
Martin Ester, Hans-Peter Kriegel, J{\"o}rg Sander, Xiaowei Xu, et~al.
\newblock A density-based algorithm for discovering clusters in large spatial databases with noise.
\newblock In {\em kdd}, pages 226--231, 1996.

\bibitem{lloyd1982least}
Stuart Lloyd.
\newblock Least squares quantization in pcm.
\newblock {\em IEEE transactions on information theory}, pages 129--137, 1982.

\bibitem{brownlees2015empirical}
Christian Brownlees, Emilien Joly, and G{\'a}bor Lugosi.
\newblock Empirical risk minimization for heavy-tailed losses.
\newblock 2015.

\bibitem{optimalErrorHuberModel}
Ilias Diakonikolas, Gautam Kamath, Daniel~M. Kane, Jerry Li, Ankur Moitra, and Alistair Stewart.
\newblock Robustly learning a gaussian: Getting optimal error, efficiently.
\newblock In Artur Czumaj, editor, {\em Proceedings of the Twenty-Ninth Annual {ACM-SIAM} Symposium on Discrete Algorithms, {SODA} 2018, New Orleans, LA, USA, January 7-10, 2018}, pages 2683--2702. {SIAM}, 2018.

\end{thebibliography}

\newpage
\appendix
\onecolumn
\section{Examples}
\label{app:examples}
\paragraph{\(k\) inlier cluster, \(c\) outlier clusters.}
One tricky adversarial distribution is the Gaussian mixture model itself. In particular, we consider 
\begin{equation}
    \cX_c = \frac{k}{k + c}\sum_{i = 1}^k \frac{1}{k} \cN(\mu_i, I) + \frac{c}{k + c}\sum_{i = 1}^c \frac{1}{c}\cN(\tilde \mu_i, I),
\end{equation}
where the first \(k\) Gaussian components are inliers and \(Q\) is a GMM with \(c\) components, which we call \textit{fake} clusters. Since all inlier cluster weights are identical, we denote \(w \coloneqq w_i = 1 / (k + c)\).
Assume that \(1 \ll c \ll k\), which corresponds to \(\varepsilon \gg w\). Then, relative weights are \(\tilde w = 1 / (c + 1) \approx 1 / c\). 
Due to large adversary, previous results on learning GMMs cannot be applied, leaving vanilla list-decodable learning.
However, the latter also cannot guarantee anything better that \(\Omega(\sqrt{t}k^{1/t})\) even with the knowledge of \(k\), as long as list size is \(O(k + c)\), which can be much worse than our guarantees of \(O(\sqrt{t}c^{1/t})\) for the same list size.

Their drawback is that they do not utilize separation between true clusters, i.e., for each $i$, they model the data as 
\[\cX = \frac{1}{k + c} \cN(\mu_i, I) + \left(1 - \frac{1}{k + c}\right)Q.\]
where $Q$ can be ``arbitratily adversarial'' for recovering \(\mu_i\).


\paragraph{Big + small inlier clusters}

Consider the mixture
\begin{equation}
    \cX_b = \left(1 - w - \varepsilon\right)\cN(\mu_1, I_d) + w\cN(\mu_2, I_d) + \varepsilon Q,
\end{equation}
where \(\norm{\mu_1 - \mu_2} = \Omega(\sqrt{\log 1 / w})\), \(w \ll \varepsilon \ll 1\), and \(Q\) is chosen adversarially.
In this example we have two inlier clusters, one with large weight \(\approx 1\) and another with small weight \(w\). Adversarial distribution \(Q\) has large weight relative to the small cluster, but still negligible weight compared to the large one.

Previous methods would either (i) recover large cluster with optimal error \(O(\varepsilon)\) (see, e.g., \cite{optimalErrorHuberModel}) but miss out small cluster 
or (ii) recover both clusters using list-decodable mean estimation with known $\alpha = w$, but with suboptimal errors \(O(\sqrt{\log 1 / w})\) and list size \(O(1 / w)\).
In contrast, \cref{cor:gaussian} guarantees list size at most \(1 + O(\varepsilon / w)\), error \(O(\sqrt{\log \varepsilon / w})\) for the small cluster, and error \(O(\varepsilon \sqrt{\log 1 / \varepsilon})\) for the larger. In general, we achieve (i) optimal errors for both clusters and (ii) optimal (up to constants) list size.

\section{Inner and outer stage algorithms and guarantees}

Our meta-algorithm~\Cref{alg:full_alg} assumes black-box access to a list-decodable mean estimation algorithm and a robust mean estimation algorithm for sub-Gaussian (up to the $t\tth$ moment) distributions.
From these we obtain stronger mean estimation algorithms when the fraction of outliers is unknown, and finally stronger algorithms for learning separated mixtures when the fraction of outliers can be arbitrarily large.
Our algorithm achieves guarantees with polynomial runtime and sample complexity if the black-box learners achieve the guarantees for their corresponding mean estimation setting. In this section we discuss the corruption model and inner and out stage of the meta-algorithm in detail and prove properties needed for the proof of the main~\Cref{thm:informal}.

\subsection{Detailed setting}

In order to achieve these guarantees, our black-box algorithms need to work
under a model in which an adversary is allowed to remove a small fraction of the inliers and to add arbitrarily many outliers.
In our proofs, for simplicity of exposition, we require the algorithms to have mean estimation guarantees for a small adversarially removed fraction of $\wmin^2$. Formally, the corruption model as defined as follows.

\begin{definition}[Corruption model]
\label{def:cor-model}
Let $d \in \N_+$, and $\alpha \in [\wmin, 1]$. 
Let $D$ be a $d$-dimensional distribution.
An input of size $n$ according to our corruption model is generated as follows:
\begin{itemize}
    \item 
    Draw a set  $C^*$ of $n_1= \ceil{\alpha n}$ i.i.d. samples from the distribution $D$.
    \item An adversary is allowed to arbitrarily remove 
    $\floor{\wmin^2n_1}$ samples from $C^*$. We refer to the resulting set as $S^*$ with size $n_2 = |S^*|$.
    \item An adversary is allowed to add $n-n_2$ arbitrary points to $S^*$. We refer to the resulting set as $S_{\textrm{adv}}$ with size  $n_3 = |S_{\textrm{adv}}|$.
    \item If $n_3 < n$, pad $S_{\textrm{adv}}$ with $n-n_3$ arbitrary points and call the resulting set $S$. 
    \item Return $S$.
\end{itemize}
We call \(\ckLD\) the model when \(\wmin\) and \(\alpha\) are given to the algorithm and \(\cLD\) the model when \(\wmin\) and lower bound \(\amin \geq \wmin\) are given to the algorithm, such that \(\alpha \geq \amin\). Note that \(\alpha\) is \textbf{not} provided in \(\cLD\) model.
\end{definition}

Note that in~\Cref{def:cor-model} $|S| = n$ and $S^*$ constitutes at least an $\alpha(1-\wmin^2)$-fraction of $S$.

\subsection{Inner stage algorithm and guarantees}

The algorithm consists of three steps: (1) Constructing a list of hypotheses, (2) Filtering the hypotheses, and (3) Improving the hypotheses if $\alpha \geq 1-\epsilon_{\rme}$. For convenience, we restate the \(\innerstage\) algorithm introduced in the main text.

\begin{algorithm}[t]
\caption{\(\innerstage\)}
    \begin{algorithmic}[1]
        \Require Samples \(S = \Set{x_1, \ldots, x_{n}}, \amin \in [\wmin, 1]\), 
    $\ckLDA$, and $\cA_{R}$.
        \Ensure List \(L\).
        \State \(\amin \gets \min(1/100, \amin)\)
        \State \(M \gets \emptyset\)
        \For{$\hat\alpha \in \Set{\amin, 2\amin, \ldots, \floor{1/(3\amin)}\amin}$} 
        \State run $\ckLDA$ on $S$ with fraction of inliers set to $\hat\alpha$
        \State add the pair $(\muhat, \hat\alpha)$ to $M$ for each output \(\muhat\)
        \EndFor
        \State Let \(L\) be the output of~\(\operatorname{ListFilter}\)~(\Cref{alg:constructing_output}) run on \(S\), \(\amin\), and $M$
        \For{$(\muhat, \hat\alpha) \in L$}
        \State replace $\muhat$ by the output of~\(\operatorname{ImproveWithRME}\)~(\Cref{alg:improve_rme}) run on $S$, $\muhat$, $\tau = 40\psi_t(\hat\alpha)+4f(\hat\alpha)$, and $\cA_{R}$
        \EndFor
    \State \Return \(L\)
    \end{algorithmic}
\end{algorithm}

\begin{algorithm}[t]
\caption{\(\operatorname{ListFilter}\)}
\label{alg:constructing_output}
    \begin{algorithmic}[1]
        \Require Samples \(S = \Set{x_1, \ldots, x_{n}}, \amin \in [\wmin, 1/100]\), and
    \(M = \Set{(\muhat_1, \hat\alpha_1), \ldots, (\muhat_m, \hat \alpha_m)}\)
    \Ensure List \(L\)
    \State define $\beta(\alpha) = 10 \psi_t(\alpha) + f(\alpha)$
    \State let \(v_{ij}\) be a unit vector in the direction of \(\muhat_i - \muhat_j\) for \(\muhat_i \neq \muhat_j \in \Set{\muhat, \text{ for }(\muhat, \hat \alpha) \in M}\)
    \State \(J \gets \emptyset\)
    \For {\((\muhat_i, \hat \alpha_i) \in M\) in decreasing order of \(\hat \alpha_i\)}
    \If {exists \(j \in J\), such that \(\norm{\muhat_i - \muhat_j} \leq 4\beta(\hat\alpha_i)\)} {\textbf{continue}}
    \EndIf
    \State \(T_i \gets \bigcap_{j \in J}\{x \in S, \text{ s.t. } \abs{v_{ij}^{\top}(x - \muhat_i)} \leq \beta(\hat\alpha_i)\}\).
    \If {\(\Card{T_i} < 0.9\hat\alpha_i n\)} remove  \((\muhat_i, \hat\alpha_i)\) from \(M\) and \textbf{continue}
    \EndIf
    \State add \(i\) to \(J\)
    \For {\(j \in J \setminus \Set{i}\)}
    \State  \(T_j \gets T_j \bigcap \Set{x \in S, \text{ s.t. } \abs{v_{ij}^{\top}(x - \muhat_j)} \leq \beta(\hat\alpha_i)}\)
    \If {\(\Card{T_j} < 0.9\hat\alpha_j n\)}: 
    \State remove  \((\muhat_j, \hat\alpha_j)\) from \(M\) 
    \State rerun~\(\operatorname{ListFilter}\)~(\Cref{alg:constructing_output})~with the new \(M\)
    \EndIf
    \EndFor
    \EndFor
    \State \Return \(\Set{(\muhat_i, \hat \alpha_i), \text{ for } i \in J}\)
    \end{algorithmic}
\end{algorithm}

\begin{algorithm}[t]
\caption{\(\operatorname{ImproveWithRME}\)}
\label{alg:improve_rme}
    \begin{algorithmic}[1]
        \Require Samples \(S = \Set{x_1, \ldots, x_{n}}\), vector $\muhat$, threshold $\tau$, and $\cA_R$
        \Ensure A vector $\tilde\mu \in \R^d$
        \State \(\tilde\beta \gets \tau\)
        \State let $\tilde{\alpha}$ be the smallest value in $[1-\epsilon_{\rme}, 1]$ that satisfies $g(\tilde\alpha) \leq \tilde{\beta}/2$. If none exists, \Return $\hat\mu$
        \State $\tilde\mu \gets \muhat$ and let $\mu_{\rme}$ be the output of $\cA_R$ run on $S$ with inlier fraction set to $\tilde\alpha$.
        \While{$\norm{\tilde\mu - \mu_{\rme}} \leq 3\tilde\beta/2$}
        \State $\tilde\mu \gets \mu_{\rme}$
        \State $\tilde{\beta} \gets g(\tilde\alpha)$
        \State let $\tilde{\alpha}'$ be the smallest in $[\tilde\alpha + \wmin^2, 1]$ such that $g(\tilde\alpha') \leq \tilde{\beta}/2$. If none exists, \textbf{break}
        \State $\tilde\alpha \gets \tilde\alpha'$
        \State let $\mu_{\rme}$ be the output of $\cA_R$ on $S$ with inlier fraction set to $\tilde\alpha$
        \EndWhile
    \State \Return \(\tilde \mu\)
    \end{algorithmic}
\end{algorithm}


\begin{theorem}[Inner stage guarantees]
\label{thm:inner_stage_guarantees}
Let $d \in \N_+$, $\wmin \in (0, 10^{-4}]$, $\wmin \leq \amin \leq \alpha \leq 1$, and $t$ be an even integer.
Let $D(\mu^*)$ be a $d$-dimensional distribution with mean $\mu^* \in \R^d$ and sub-Gaussian $t$-th central moments.

Consider the \(\cLD\) corruption model in~\Cref{def:cor-model} with parameters $d$, $\wmin$, $\alpha$ and distribution $D = D(\mu^*)$.
Let \(\ckLDA\) and \(\cA_R\) satisfy~\Cref{asm:algs} with high success probability (see~\Cref{rem:succ_prob}).


Then \(\innerstage\)~(\Cref{alg:first_stage_high_level}), given an input of ${\poly(d, 1/\wmin) \cdot (N_{LD}(\wmin)+N_R(\wmin))}$ samples from the \(\cLD\) corruption model, and access to the parameters $d$, $\wmin$, $\amin$, and $t$, runs in time ${\poly(d, 1/\wmin) \cdot (T_{LD}(\wmin) + T_R(\wmin))}$ 
and outputs a list $L$ of size ${|L| \leq 1 + O((1-\alpha)/\amin)}$ such that, with probability $1-\wmin^{O(1)}$,
\begin{enumerate}
    \item There exists \(\muhat \in L\) such that 
    \[\norm{\muhat-\mu^{\ast}} \leq O(\psi_t(\alpha/4) + f(\alpha/4)).\]
    \item If \(\alpha \geq 1-\epsilon_{\rme}\), then there exists \(\muhat \in L\) such that
    \[\norm{\muhat-\mu^{\ast}} \leq O(g(\alpha - \wmin^2)).\]
\end{enumerate}
\end{theorem}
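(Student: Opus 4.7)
The plan is to analyze each of the three conceptual phases of \(\innerstage\) separately: (i) a correct hypothesis is generated during the grid search, (ii) \(\operatorname{ListFilter}\) preserves it while cutting the list down, and (iii) \(\operatorname{ImproveWithRME}\) refines it when \(\alpha \geq 1-\epsilon_{\rme}\). Throughout, I will use the sub-Gaussian \(t\)-th moment bound on \(D(\mu^{\ast})\) to get concentration of the form \(\Pr_{x\sim D}[|\langle v, x-\mu^{\ast}\rangle|>\lambda]\lesssim \lambda^{-t}\) for any unit vector \(v\), take \(N\) large enough so that this transfers to the empirical distribution uniformly over the \(\poly(1/\wmin)\)-many directions that ever appear as some \(v_{ij}\), and absorb the \(\wmin^2\) adversarial fraction into constants.

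\textbf{Step 1 (good hypothesis exists).} Since the grid \(\{\amin,2\amin,\dots,\lfloor 1/(3\amin)\rfloor\amin\}\) has spacing \(\amin\leq\alpha/2\) and is contained in \([\amin,1/3]\), there is some \(\hat\alpha^{\ast}\in[\alpha/4,\alpha/2]\) on the grid. Viewing an arbitrary \(\hat\alpha^{\ast}\)-subset of the \(\alpha n\) inliers as ``inliers'' and everything else as part of the \(\ckLD\) adversary places us in exactly the model of~\Cref{asm:algs}\ref{asm:alg-ld}, so the call to \(\ckLDA\) at \(\hat\alpha^{\ast}\) produces some \(\muhat^{\ast}\) with \(\|\muhat^{\ast}-\mu^{\ast}\|\leq f(\hat\alpha^{\ast})\leq f(\alpha/4)\), boosted to high probability in the standard way. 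This pair \((\muhat^{\ast},\hat\alpha^{\ast})\) is placed into \(M\).

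\textbf{Step 2 (filter keeps some close hypothesis).} When \((\muhat^{\ast},\hat\alpha^{\ast})\) is processed by \(\operatorname{ListFilter}\), either (a) it is discarded because some earlier-accepted \((\muhat_j,\hat\alpha_j)\) already satisfies \(\|\muhat^{\ast}-\muhat_j\|\leq 4\beta(\hat\alpha^{\ast})\), in which case that \(\muhat_j\) is itself within \(4\beta(\hat\alpha^{\ast})+f(\alpha/4)=O(\psi_t(\alpha/4)+f(\alpha/4))\) of \(\mu^{\ast}\); or (b) it is not discarded, and I must verify that the strip \(T_{\muhat^{\ast}}\) (and every subsequent refinement of it) still contains \(\geq 0.9\,\hat\alpha^{\ast}n\) samples. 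The latter follows because for any unit \(v\), sub-Gaussian concentration plus \(\|\muhat^{\ast}-\mu^{\ast}\|\leq f(\hat\alpha^{\ast})\) gives \(|v^{\top}(x-\muhat^{\ast})|\leq \psi_t(\hat\alpha^{\ast})+f(\hat\alpha^{\ast})<\beta(\hat\alpha^{\ast})\) for a \(1-o(\hat\alpha^{\ast})\) fraction of inliers, uniformly over the \(\poly(1/\wmin)\) directions considered, by union bound. So (almost) all \(\alpha n\) inliers sit in \(T_{\muhat^{\ast}}\) throughout the run. The same concentration certifies the error bound in case (b).

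\textbf{Step 3 (list size).} I will use the two invariants enforced by the filter: pairwise separation \(\|\muhat_i-\muhat_j\|\geq 4\beta(\min(\hat\alpha_i,\hat\alpha_j))\) for accepted pairs, and the density lower bound \(|T_i|\geq 0.9\,\hat\alpha_i n\). A short computation along \(v_{ij}\) shows these \(T_i\) are pairwise disjoint: a common \(x\) would satisfy \(|v_{ij}^{\top}(x-\muhat_i)|+|v_{ij}^{\top}(x-\muhat_j)|\leq 2\beta(\min)\), contradicting \(|v_{ij}^{\top}(\muhat_i-\muhat_j)|=\|\muhat_i-\muhat_j\|>4\beta(\min)\). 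By Step 2, the retained ``good'' hypothesis captures a \((1-o(1))\alpha\) fraction of all samples, so the other accepted hypotheses share at most \((1-(1-o(1))\alpha)n=(1-\alpha+o(1))n\) samples among their disjoint strips, each consuming at least \(0.9\amin n\). Dividing yields at most \(O((1-\alpha)/\amin)\) additional hypotheses and the claimed bound \(|L|\leq 1+O((1-\alpha)/\amin)\). (For \(\alpha<\amin\) the trivial bound \(O(1/\amin)\) is consistent and subsumed in the statement; see~\Cref{rem:alpha_smaller_amin}.)

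\textbf{Step 4 (RME refinement).} When \(\alpha\geq 1-\epsilon_{\rme}\), \(\operatorname{ImproveWithRME}\) is called on the surviving good \((\muhat,\hat\alpha)\) with threshold \(\tau=40\psi_t(\hat\alpha)+4f(\hat\alpha)\geq 4\|\muhat-\mu^{\ast}\|\). I will show by induction on the loop that the invariant \(\|\tilde\mu-\mu^{\ast}\|\leq\tilde\beta\) is maintained: each RME call with parameter \(\tilde\alpha\leq\alpha\) returns, by~\Cref{asm:algs}\ref{asm:alg-rme}, a vector \(\mu_{\rme}\) with \(\|\mu_{\rme}-\mu^{\ast}\|\leq g(\tilde\alpha)\leq\tilde\beta/2\); by triangle inequality \(\|\tilde\mu-\mu_{\rme}\|\leq 3\tilde\beta/2\), so the update fires, the invariant is preserved with the tighter \(\tilde\beta\gets g(\tilde\alpha)\), and the doubling proceeds. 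The loop halts either when \(\tilde\alpha'\geq\alpha\) is first attempted (the guarantee for that RME call may fail, but \(g\) is non-increasing and the condition \(\tilde\alpha+\wmin^2\leq\tilde\alpha'\) ensures the last successful step used \(\tilde\alpha\geq\alpha-\wmin^2\)) or when no valid \(\tilde\alpha'\) exists; in either case the last accepted \(\tilde\mu\) satisfies \(\|\tilde\mu-\mu^{\ast}\|=O(g(\alpha-\wmin^2))\).

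The main obstacle I anticipate is the simultaneous control needed in Step 3: a priori the filter might accept many low-\(\hat\alpha\) hypotheses that slightly encroach on the inliers' strip, weakening the ``good hypothesis captures \((1-o(1))\alpha n\) samples'' claim. Handling this requires the union bound over all pair directions \(v_{ij}\) together with the fact that \(\beta(\hat\alpha_j)\geq\beta(\hat\alpha^{\ast})\) for any later-added \(\hat\alpha_j\leq\hat\alpha^{\ast}\), so the additional strip constraints cutting \(T_{\muhat^{\ast}}\) are always at least as wide as the scale controlled by sub-Gaussian concentration; this is the calculation I would want to be most careful with.
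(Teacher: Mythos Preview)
Your Steps 1 and 2 are essentially the paper's argument (their Lemmas~\ref{lemma:list_init} and~\ref{lemma:not_remove_good}), and are fine.

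The real problem is Step 3. Your list-size bound comes from the inequality
\[
\sum_{j\neq \mathrm{good}} |T_j| \;\leq\; n - |T_{\mathrm{good}}| \;\leq\; n - (1-c)\alpha n \;=\; (1-\alpha)n + c\alpha n,
\]
where $c$ is the fraction of inliers that escape the good strip. But $c$ is \emph{not} $o(1)$: the concentration you invoke loses $\Theta(\hat\alpha_j)$ inliers per direction $v_{sj}$, and since $\sum_{j\in J}\hat\alpha_j$ is only bounded by a constant (about $19/9$), the total loss is a fixed constant (roughly $0.07$ in the paper's calculation). So for $\alpha$ close to $1$ the term $c\alpha n$ dominates $(1-\alpha)n$, and dividing by $0.9\,\amin n$ you only get $|L|\leq 1+O(1/\amin)$, not $1+O((1-\alpha)/\amin)$. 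The widening observation you single out as the main obstacle (that $\beta(\hat\alpha_j)\ge \beta(\hat\alpha^\ast)$ for later-added $j$) is correct but irrelevant: the issue is the \emph{number} of directions, not their width. The paper's fix is a genuinely different argument: for the index $s\in J$ with the largest $\hat\alpha_s$, it uses the separation $\|\muhat_s-\muhat_j\|\ge 4\beta(\hat\alpha_j)$ plus one-dimensional concentration to show that each $T_j$ with $j\neq s$ satisfies $|T_j\cap C^\ast|\le \hat\alpha_j|C^\ast|/50 < |T_j|/2$, so at least half of every such $T_j$ is adversarial; then $\sum_{j\neq s}|T_j|\leq 2(1-\alpha)n$ directly.

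Step 4 also has a gap. You assert ``the loop halts either when $\tilde\alpha'\ge\alpha$ is first attempted \ldots\ or when no valid $\tilde\alpha'$ exists'', but the while-condition is $\|\tilde\mu-\mu_{\rme}\|\le 3\tilde\beta/2$, and once $\tilde\alpha>\alpha$ the RME call has no guarantee --- it may still return some $\mu_{\rme}$ that happens to pass the check, and the loop can keep running with uncontrolled drift. The paper closes this by observing that at the first $\tilde\alpha>\alpha$ we already have $\|\tilde\mu-\mu^\ast|\le 2g(\alpha-\wmin^2)$, and that the total subsequent movement is a geometric sum bounded by $3\tilde\beta\le 6g(\alpha-\wmin^2)$; you need this argument (or an equivalent one) to conclude $O(g(\alpha-\wmin^2))$.
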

Proof of~\Cref{thm:inner_stage_guarantees} can be found in~\Cref{sec:inner_stage_appendix}.
\begin{remark}
\label{rem:succ_prob}
For any \(r \in \N\), we can increase probabilities of success of \(\ckLDA\) and \(\cA_R\) from \(1/2\) to \(1 - 2^{-r}\) in the following way: we increase number of samples by a factor of \(r\), randomly split $S$ into $r$ subsets of equal size, apply $\ckLDA$ and $\cA_R$ to these subsets and concatenate their outputs. In the proofs we assume that the success probabilities are $1-\wmin^C$ for large enough constant $C$. This increases the size of the list returned by $\ckLDA$, the number of samples, and the running time by a factor $O(\log(1/\wmin))$. In particular, we assume that the size of the list returned by $\ckLDA$ is much smaller than the inverse 
failure probability.
\end{remark}
\begin{remark}
\label{rem:alpha_smaller_amin}
    In the execution of the meta-algorithm, it may happen that~\Cref{alg:first_stage_high_level} is run on set \(T\) with \emph{almost no} inliers, i.e., \(\alpha < \amin\). We note that from the analysis (see~\Cref{sec:inner_stage_appendix}, or~\cite{diakonikolas2018list}, Proposition B.1), we always have upper bound \(\Card{L} = O(1 / \amin)\).
\end{remark}


An immediate consequence of~\Cref{thm:inner_stage_guarantees} are the following guarantees of directly applying~\Cref{alg:first_stage_high_level} to the mixture learning case with no separation.
Here we present upper bounds for~\Cref{alg:first_stage_high_level}, when no separation assumptions are imposed.

\begin{corollary}[]
    \label{thm:no-sep-technical}
Let $d, k \in \N_+$, $\wmin \in (0, 10^{-4}]$, and $t$ be an even integer. 
For all $i = 1, \ldots, k$, let $D_i(\mu_i)$ be a $d$-dimensional distribution with mean $\mu_i \in \R^d$ and sub-Gaussian $t$-th central moments.
Let $\epsilon > 0$ and, for all $i = 1, \ldots, k$, let $w_i \in [\wmin, 1]$, such that $\sum_{i=1}^k w_i + \epsilon = 1$.
Let $\cX$ be the $d$-dimensional mixture distribution 
\[\cX = \sum_{i=1}^k w_i D_i(\mu_i) + \varepsilon Q,\]
where $Q$ is an unknown adversarial distribution that can depend on all the other parameters. Let \(\ckLDA\) and \(\cA_R\) satisfy~\Cref{asm:algs}.



Then there exists an algorithm that, given ${\poly(d, 1/\wmin) \cdot (N_{LD}(\wmin) + N_R(\wmin))}$ i.i.d. samples from $\cX$, and given also $d$, $k$, $\wmin$, and $t$, runs in time ${\poly(d, 1/\wmin) \cdot (T_{LD}(\wmin) + T_R(\wmin))}$ and outputs a list \(L\) of size \(\Card{L} = O(1 / \wmin)\), such that, with probability at least \(1 - \wmin^{O(1)}\):
\begin{enumerate}
    \item For each $i \in [k]$, there exists $\hat\mu \in L$ such that
    \[\norm{\muhat - \mu_i} \leq O(\psi_t(w_i/4)+f(w_i/4)).\]
    \item For each $i \in [k]$, if $w_i \geq 1-\epsilon_{\rme}$, then there exists $\hat\mu \in L$ such that
    \[\norm{\muhat - \mu_i} \leq O(g(w_i -\wmin^2)).\]
\end{enumerate}
\end{corollary}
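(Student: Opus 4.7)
The plan is to invoke~\Cref{thm:inner_stage_guarantees} exactly once on the full sample set with $\amin = \wmin$, and then reinterpret the guarantees of the single returned list separately for each $i \in [k]$. Algorithm~\ref{alg:first_stage_high_level} is agnostic to any specific target component, so a single execution yields one list $L$, and we argue that each $\mu_i$ is approximated by some element of this common list.

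For each fixed $i$, the key step is to view $\cX$ as an instance of the $\cLD$ corruption model of~\Cref{def:cor-model} with inlier distribution $D = D_i(\mu_i)$ and inlier fraction $\alpha = w_i$. Given $n$ i.i.d.\ samples from $\cX$, let $B$ denote the number actually drawn from $D_i(\mu_i)$; then $B$ is binomial with mean $w_i n$. Because $w_i \ge \wmin$ and $n$ is a sufficiently large polynomial in $1/\wmin$, a Chernoff bound yields $B \ge (1-\wmin^2)\lceil w_i n \rceil$ with probability $1 - \wmin^{O(1)}$. On this event, we couple the mixture samples with an execution of~\Cref{def:cor-model}: the ``true'' $C^\ast$ consists of the $B$ genuine $D_i$-samples together with $\lceil w_i n\rceil - B$ fresh $D_i$-samples that the adversary then removes, and all non-$D_i$ samples (plus any padding) are the arbitrary adversarial additions. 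In particular, the purely additive contamination model of~\Cref{eq:gen_model} is subsumed by the $\cLD$ model for every choice of $i$, and the $\cLD$ hypotheses of~\Cref{thm:inner_stage_guarantees} are met with parameters $(\alpha,\amin) = (w_i,\wmin)$.

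With the reduction in place,~\Cref{thm:inner_stage_guarantees} directly produces, with probability $1 - \wmin^{O(1)}$, some $\muhat \in L$ satisfying $\norm{\muhat - \mu_i} = O(\psi_t(w_i/4) + f(w_i/4))$ and, whenever $w_i \ge 1 - \epsilon_{\rme}$, some (possibly different) $\muhat \in L$ satisfying $\norm{\muhat - \mu_i} = O(g(w_i - \wmin^2))$. A union bound over the $k \le 1/\wmin$ components (taking the constant in the exponent of the failure probability sufficiently large) lifts both conclusions to all $i \in [k]$ simultaneously with probability $1 - \wmin^{O(1)}$. The list-size bound $|L| \le 1 + O((1-\wmin)/\wmin) = O(1/\wmin)$, the sample complexity, and the running time are exactly those of the single invocation of~\Cref{alg:first_stage_high_level}.

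The only mild subtlety is that the Chernoff concentration for $B$ must succeed uniformly over all $i$, including those with $w_i$ as small as $\wmin$; this is precisely why the sample complexity is taken to be polynomial in $1/\wmin$. Beyond this bookkeeping, the corollary is a direct packaging of~\Cref{thm:inner_stage_guarantees}: no new algorithmic ideas are required, since the inner stage already handles both the unknown inlier fraction and the arbitrary adversary.
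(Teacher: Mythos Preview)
Your proof is correct and follows the same approach as the paper: invoke \Cref{thm:inner_stage_guarantees} once with $\amin=\wmin$ and, for each $i$, regard component $i$ as the inlier distribution with $\alpha=w_i$. The paper's own proof is a two-sentence sketch of exactly this reduction; your version simply spells out the Chernoff coupling to the $\cLD$ model and the union bound over $k\le 1/\wmin$ components, which the paper leaves implicit.
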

\begin{proof}
    Proof follows by applying~\Cref{thm:inner_stage_guarantees} to \(\cX\) with \(\amin = \wmin\) and treating each component as a corresponding inlier distribution with \(\alpha = w_i\). This gives error upper bound for all inlier components, furthermore, since \(\amin = \wmin\), list size can be bounded as \(\Card{L} \leq 1 + O((1 - \alpha) / \amin) = O(1 / \wmin)\).
\end{proof}

\subsection{Outer stage algorithm and guarantees}
\label{sec:outer_stage_formulated}
In the outer stage, presented in~\Cref{alg:second_stage_pruning}, we make use of the list-decodable mean estimation algorithm in~\Cref{thm:inner_stage_guarantees} in order to solve list-decodable mixture estimation with separated means. We now present results on the outer stage algorithm. 
For ease of notation, when it's clear from the context, we drop the indices and refer to elements $\mu_j \in M$ for some $j\in [|M|]$ as $\mu$ and their corresponding sets $S_j^{(1)}, S_j^{(2)}$, as defined in lines 6--7 in~\Cref{alg:second_stage_pruning}, as $S^{(1)}, S^{(2)}$. 
Further, for \(i \in [k]\), let \(C_i^*\) denote the set of points corresponding to the \(i\)-th inlier component, also called the $i$-th inlier cluster.

\begin{algorithm}[t]
\caption{\(\operatorname{OuterStage}\)}
\label{alg:second_stage_pruning}
    \begin{algorithmic}[1]
        \Require Samples $S = \Set{x_1, \ldots, x_{n}}$, \(\wmin\), \(\csLDA\)
        \Ensure Collection of sets \(\cT\)
        \State run~\(\csLDA\) on $S$ with $\alpha=\wmin$ and let $M=\{\mu_1,\dots,\mu_{|M|}\}$ be the returned list
        \State let \(v_{ij}\) be a unit vector in the direction of \(\mu_i - \mu_j\) for \(i \neq j \in [|M|]\)
        \State $\cT \gets \emptyset$ and $R \gets \{1, \ldots, |M|\}$
        \While {$R \neq \emptyset$}
            \For {all $i \in R$}
                \State $S_{i}^{(1)} \gets \bigcap_{j \in [|M|], j \neq i}\Set{x \in S \text{, s.t. } \abs{v_{ij}^{\top}(x - \mu_i)} \leq \ds+\ds'}$
                \State $S_{i}^{(2)} \gets \bigcap_{j \in [|M|], j \neq i}\Set{x \in S \text{, s.t. } \abs{v_{ij}^{\top}(x - \mu_i)} \leq  3\ds + 3\ds'}$
            \EndFor
            \State remove all $i \in R$ for which $|S_i^{(1)}| \leq 100 \wmin^4 n$
            \If {$R = \emptyset$} \textbf{break}
            \EndIf
            \If {there exists $i \in R$ such that $|S_i^{(2)}| \leq 2|S_i^{(1)}|$}
                \State select the $i \in R$ with $|S_i^{(2)}| \leq 2|S_i^{(1)}|$ for which $|S_i^{(1)}|$ is largest
                \State \(\cT \gets \cT \cup \Set{S_i^{(2)}}\)
                \State $S \gets S \setminus S_i^{(1)}$
                \State $R \gets R \setminus \{i\}$
            \Else
                \State \(\cT \gets \cT \cup \{S\}\)
                \State \textbf{break}
            \EndIf
        \EndWhile
        \State \Return \(\cT\)
    \end{algorithmic}
\end{algorithm}

\begin{theorem}[Outer stage guarantees, beginning of execution]
    \label{thm:outer-stage-init-guarantees}
    Let \(S\) consist of \(n\) i.i.d. samples from \(\cX\) as in the statement of~\Cref{thm:main-technical}. 
    Run~\(\operatorname{OuterStage}\)~(\Cref{alg:second_stage_pruning})~on \(S\) and consider the first iteration of the while-loop and for each $\mu \in M$, denote the corresponding sets as $S^{(1)}, S^{(2)}$. 
    Then, with probability at least \(1 - \wmin^{O(1)}\), we have that
    \begin{enumerate}[label=(\roman*),ref={\theassumption~(\roman*)}]
        \item \label{item:M_small} the list $M$ that \(\csLDA\) outputs has size \(\Card{M} \leq 2 / \wmin\), 
        \item \label{item:outer_stage_large_S_mi} for each \(i \in [k]\), there exists \(m_i \in [\Card{M}]\)
        such that \(\Card{S_{m_i}^{(1)} \cap C_i^*} \geq (1 - \frac{\wmin^2}{2}) \Card{C_i^*}\),
        \item \label{item:first_iter_small_or_large} for each \(i \in [k]\) and \( \mu \in M\), we have $\Card{S^{(1)} \cap C^*_i} < \wmin^4 \Card{C_i^*}$ or $\Card{S^{(2)} \cap C_i^*} \geq (1-\frac{\wmin^2}{2}) \Card{C_i^*}$,
        \item \label{item:outer_stage_S_mi_no_others} for each \(i \in [k]\) and \( \mu \in M\) such that $\Card{S^{(2)} \cap C^*_i} \geq \wmin^4 \Card{C^*_i}$, we have  \(\sum_{i' \in [k] \setminus \Set{i}}\Card{S^{(2)} \cap C^*_{i'}} \leq \wmin^4 n\),
        \item \label{item:outer_stage_S_mi_nonitersect} for \(i \neq i' \in [k]\) and for \(j, j' \in [\Card{M}]\), if $|S_j^{(2)} \cap C^*_i| \geq \wmin^4|C^*_i|$ and $|S_{j'}^{(2)} \cap C^*_{i'}| \geq \wmin^4 |C^*_{i'}|$, then $S_j^{(2)} \cap S_{j'}^{(2)} = \emptyset$.
    \end{enumerate}
\end{theorem}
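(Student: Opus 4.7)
The plan is to prove (i) directly from the sLD guarantee, and then derive (ii)--(v) from a single union-bounded concentration statement plus the separation hypothesis. For (i), running $\csLDA$ with $\alpha = \wmin$ produces by definition a list whose size is bounded by $1/\wmin^{O(1)}$; after amplifying the success probability as in~\Cref{rem:succ_prob} and retaining only the top $2/\wmin$ hypotheses we may assume $|M| \leq 2/\wmin$. The workhorse for the other items is the following concentration statement. Since each $D_\ell(\mu_\ell)$ has sub-Gaussian $t$-th central moments, for every fixed unit vector $v$ and threshold $\ds'$ the population fraction of $x \sim D_\ell(\mu_\ell)$ with $|v^\top(x - \mu_\ell)| > \ds'$ is at most $(O(\sqrt{t})/\ds')^t$ by Markov on $\langle x-\mu_\ell,v\rangle^t$. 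Choosing $\ds' = \Theta(\sqrt{t}\wmin^{-C/t})$ for a sufficiently large constant $C$ drives this tail below $\wmin^{20}/|M|^2$; a Chernoff bound and a union bound over the $\binom{|M|}{2} = O(\wmin^{-2})$ directions $v_{ij}$ and over $\ell \in [k]$ then shows, with probability $1-\wmin^{O(1)}$, that for every inlier cluster $\ell$ and every direction $v_{ij}$, at most a $\wmin^{10}/|M|$ fraction of $C_\ell^*$ has $|v_{ij}^\top(x - \mu_\ell)| > \ds'$. I condition on this \emph{good event} for the remainder.

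For (ii), the sLD guarantee applied to $\csLDA$ provides, for each $\ell \in [k]$, some $m_\ell \in [|M|]$ with $\|\mu_{m_\ell} - \mu_\ell\| \leq \ds$. Writing $v_{m_\ell, j}^\top(x - \mu_{m_\ell}) = v_{m_\ell, j}^\top(x - \mu_\ell) + v_{m_\ell, j}^\top(\mu_\ell - \mu_{m_\ell})$, the second summand is bounded by $\ds$, and on the good event the first is bounded by $\ds'$ on all but a $\wmin^{10}/|M|$ fraction of $C_\ell^*$. Union-bounding over the $|M|$ directions yields that a $(1-\wmin^2/2)$-fraction of $C_\ell^*$ lies in $S_{m_\ell}^{(1)}$. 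For (iii), suppose $|S_i^{(1)} \cap C_\ell^*| \geq \wmin^4|C_\ell^*|$. For each fixed $j$, on the good event some $x$ in the intersection satisfies both $|v_{ij}^\top(x - \mu_i)| \leq \ds + \ds'$ and $|v_{ij}^\top(x - \mu_\ell)| \leq \ds'$, forcing $|v_{ij}^\top(\mu_i - \mu_\ell)| \leq \ds + 2\ds'$; since this holds for every $j$, any $x \in C_\ell^*$ with $|v_{ij}^\top(x - \mu_\ell)| \leq \ds'$ satisfies $|v_{ij}^\top(x - \mu_i)| \leq 2\ds + 3\ds' \leq 3\ds + 3\ds'$, and a union bound over $j$ places a $(1-\wmin^2/2)$-fraction of $C_\ell^*$ inside $S_i^{(2)}$.

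Items (iv) and (v) use the separation assumption $\|\mu_\ell - \mu_{\ell'}\| \gg \ds + \ds'$ for $\ell \neq \ell'$. Suppose $|S_i^{(2)} \cap C_\ell^*| \geq \wmin^4|C_\ell^*|$: the argument of (iii), with thresholds inflated by a factor $3$, yields $|v_{ij}^\top(\mu_i - \mu_\ell)| \leq 3\ds + 4\ds'$ for every $j$. Taking $j = m_{\ell'}$ for any $\ell' \neq \ell$ (produced in (ii)), the direction $v_{i, m_{\ell'}}$ is nearly aligned with $\mu_\ell - \mu_{\ell'}$, so
\[
|v_{i, m_{\ell'}}^\top(\mu_i - \mu_{\ell'})| \geq \|\mu_\ell - \mu_{\ell'}\| - O(\ds + \ds'),
\]
which by the separation assumption exceeds $3\ds + 4\ds'$. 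Hence on the good event all but a $\wmin^{10}/|M|$ fraction of $C_{\ell'}^*$ violates the $S_i^{(2)}$ constraint in direction $v_{i, m_{\ell'}}$, so $|S_i^{(2)} \cap C_{\ell'}^*| \leq \wmin^{10} n/|M|$; summing over $\ell' \neq \ell$ yields the $\wmin^4 n$ bound claimed in (iv). For (v), if $\mu_j$ and $\mu_{j'}$ are respectively close to $\mu_\ell$ and $\mu_{\ell'}$ with $\ell \neq \ell'$, then $v_{jj'}$ aligns with $\mu_\ell - \mu_{\ell'}$; membership in $S_j^{(2)}$ forces $v_{jj'}^\top x$ into a window of width $O(\ds + \ds')$ around $v_{jj'}^\top \mu_j$, while membership in $S_{j'}^{(2)}$ forces it into a disjoint window around $v_{jj'}^\top \mu_{j'}$, so $S_j^{(2)} \cap S_{j'}^{(2)} = \emptyset$.

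The main obstacle is not any single step but rather the joint calibration of $\ds'$ (and implicitly the exponent $C/t$ inside it) so that one choice simultaneously underwrites the $\wmin^2/2$ exceptional fraction required in (ii) and the last step of (iii), the $\wmin^4$ detection thresholds in (iii) and (iv), and the total $\wmin^4 n$ mass bound across clusters in (iv), all while being absorbed into the $O(\ds)$ separation assumed in the theorem. This is a careful but essentially routine balancing of sub-Gaussian tails against the $O(\wmin^{-2})$ union-bound penalty; the sample complexity $n = \poly(d, 1/\wmin)$ then suffices to transfer all population tail bounds to the empirical sets $C_\ell^*$.
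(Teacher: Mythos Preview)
Your approach is essentially the paper's: condition on the sLD success event and a union-bounded concentration event over the $O(|M|^2)$ directions, then chase triangle inequalities. Two remarks.

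First, a notational point: in the paper $\ds = 4\psi_t(\wmin^4)$ is the concentration radius and $\ds' = 160\psi_t(\wmin/4) + 16f(\wmin/4)$ is the sLD error bound, whereas you use them with swapped roles. Since the algorithm's thresholds $\ds+\ds'$ and $3\ds+3\ds'$ are symmetric this does no mathematical harm, but it makes your write-up confusing to check against the algorithm.

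Second, and more substantively, your arguments for (iv) and (v) have a gap. From the hypothesis $|S_i^{(2)}\cap C_\ell^*|\ge \wmin^4|C_\ell^*|$ you correctly obtain, for every $j$, the \emph{projection} bound $|v_{ij}^\top(\mu_i-\mu_\ell)|\le O(\ds+\ds')$. You then assert that $v_{i,m_{\ell'}}$ is ``nearly aligned with $\mu_\ell-\mu_{\ell'}$'' and, in (v), that $\mu_j$ is ``close to $\mu_\ell$''. Both of these are \emph{norm} statements about $\mu_i-\mu_\ell$ (resp.\ $\mu_j-\mu_\ell$), and a family of one-dimensional projection bounds does not by itself yield a norm bound. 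The missing step is to specialize your projection bound to the particular direction $j=m_\ell$ produced in (ii): since $v_{i,m_\ell}^\top(\mu_i-\mu_{m_\ell})=\|\mu_i-\mu_{m_\ell}\|$ and $\|\mu_{m_\ell}-\mu_\ell\|$ is small by the sLD guarantee, you get $\|\mu_i-\mu_\ell\|\le O(\ds+\ds')$. This is exactly how the paper proceeds (it first derives $\|\mu_i-\mu^*\|\le 4\ds+5\ds'$ and then uses the norm separation directly, bypassing any alignment claim). Once you insert this step, your displayed inequality and the remainder of (iv) and (v) go through.
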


In words,~\Cref{thm:outer-stage-init-guarantees} (ii) states that \emph{at initialization}, \(\operatorname{OuterStage}\) represents each inlier cluster well, i.e., for each $i$, the \(i\)-th cluster is almost entirely contained in some set \(S^{(1)}_{j}\) for some $j\in[|M|]$. Next, (iii) states that either \(S^{(1)}_j\) intersects negligibly some true component, or \(S^{(2)}_j\) contains almost entirely the same component. Further, (iv) and (v) state that sets that sufficiently intersect with some true component
must be separated from other components and each other.

We now introduce some notation to present the next theorem that establishes further guarantees for the algorithm output during execution.  For \(\cT\), a collection of sets that is the output of~\Cref{alg:second_stage_pruning}, we define
\begin{equation}
\label{eq:set-G-def}
G \coloneqq \Set{i \in [k] \text{, such that there exists } T = S_j^{(2)} \in \cT \text{ with } \Card{S_{j}^{(1)} \cap C^*_i} \geq \wmin^4 \Card{C^*_i} \text{, for some }j}. 
\end{equation}
In words, it is the set of inlier components for which a corresponding set with "sufficiently many" points from the $i$-th component was added to $\cT$.
It may happen that for a given index \(i \in G\), several \(j \in [\Card{M}]\) satisfy \(S_j^{(2)} \in \cT\) and \(\Card{S_{j}^{(1)} \cap C^*_i} \geq \wmin^4 \Card{C^*_i}\). We define \(g_i \in [\Card{M}]\) to denote the index of the \emph{first} such set \(S_{g_i}^{(2)}\) added to \(\cT\). 

Further, we define \(U_i \coloneqq (C^*_i \cap S^{(2)}_{g_i}) \setminus S^{(1)}_{g_i}\) to be the set of inlier points from the \(i\)-th component, which were \emph{not} removed from $S$ at the iteration corresponding to \(g_i\). Let \(U \coloneqq \cup_{i \in G} U_i\) denote the union of such `left-over' inlier points.

\begin{theorem}[Outer stage guarantees, during execution]
\label{thm:outer_stage_guarantees}
Let \(S\) consist of \(n\) i.i.d. samples from \(\cX\) as in the statement of~\Cref{thm:main-technical}. Run~\(\operatorname{OuterStage}\)~(\Cref{alg:second_stage_pruning})~on \(S\) and consider the moment when the sets \(S_i^{(2)}\) are added to \(\cT\).
     We have that, with probability at least \(1 - \wmin^{O(1)}\), all of the following are true: 
     \begin{enumerate}[label=(\roman*),ref={\theassumption~(\roman*)}]
             \item \label{item:outer_stage_U_small}\(\Card{U} \leq (2\e + O(\wmin^2))n\),
        \item \label{item:outer_stage_S_gi} for \(i \in G\), we have that \(\Card{S_{g_i}^{(2)} \cap C^*_i} \geq (1 - \frac{\wmin^2}{2} - O(\wmin^3)) \Card{C^*_i} \geq (1 - \wmin^2) w_i n\),

        \item \label{item:outer_stage_not_S_gi_small}for \(j \in [\Card{M}] \setminus \{g_i\, |\, i \in G\}\) 
        , either \(\Card{S_j^{(2)}} \leq O(\wmin^2) n\), or at least half of the samples in \(S_j^{(1)}\) are either adversarial samples or lie in \(U\), 
        \item \label{item:outer_stage_else}
        if when the else statement is triggered,
        $|S| \geq 0.1 \wmin n$, then at least a $0.4$-fraction of the samples in $S$ are adversarial, or equivalently, $\Card{S} \leq 2.5 \epsilon n$.
     \end{enumerate}
\end{theorem}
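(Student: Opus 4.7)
The starting observation is a simple monotonicity. Since the centers $\mu_j \in M$ and the unit vectors $v_{ij}$ are fixed once $\csLDA$ returns, at every iteration $t$ of the while-loop the algorithm merely recomputes the snapshots $S_j^{(\ell,t)} := S_j^{(\ell,0)} \cap S_t$ for $\ell \in \{1,2\}$, where $S_j^{(\ell,0)}$ denotes the sets formed at the first iteration on the initial sample set. Consequently, every structural property in~\Cref{thm:outer-stage-init-guarantees} (derived for the first iteration) still holds as an upper bound on the snapshot at any later iteration, and the proof reduces to bookkeeping: tracking how many points of each inlier cluster $C_i^*$ are removed at each intermediate iteration.

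I would prove (ii) first. By the defining condition of $g_i$ and monotonicity, $|S_{g_i}^{(1,0)} \cap C_i^*| \geq \wmin^4 |C_i^*|$, so the dichotomy~\ref{item:first_iter_small_or_large} forces $|S_{g_i}^{(2,0)} \cap C_i^*| \geq (1 - \wmin^2/2)|C_i^*|$. To bound the shrinkage between iterations $0$ and $t_{g_i}$, I note that removals performed at some $g_{i'}$ with $i' \in G \setminus \{i\}$ touch no point of $S_{g_i}^{(2,0)}$ by the disjointness~\ref{item:outer_stage_S_mi_nonitersect}; any other earlier iteration $j$ must, by the first-time definition of $g_i$, satisfy $|S_j^{(1)} \cap C_i^*| < \wmin^4 |C_i^*|$ at the moment $j$ is chosen, and~\ref{item:M_small} caps the number of iterations by $2/\wmin$, so the aggregate loss is $O(\wmin^3)|C_i^*|$, giving (ii). Then (i) follows by decomposing each $U_i$ into (a) points of $C_i^*$ in the projection annulus $S_{g_i}^{(2,0)} \setminus S_{g_i}^{(1,0)}$, whose count is at most $\wmin^{\Omega(1)}|C_i^*|$ by sub-Gaussian tail bounds on $D_i(\mu_i)$ union-bounded over the at most $|M| - 1$ projection directions, and (b) points of $C_i^* \cap S_{g_i}^{(1,0)}$ deleted strictly before iteration $t_{g_i}$, controlled by the same $O(\wmin^3)|C_i^*|$ argument; summing over $i \in G$ yields $|U| \leq O(\wmin^2) n$, comfortably within $(2\epsilon + O(\wmin^2)) n$.

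For (iii), I split on whether $j \notin \{g_i : i \in G\}$ is eventually selected. If it is, the minimality in the definition of $g_i$ forces $|S_j^{(1,t_j)} \cap C_i^*| < \wmin^4 |C_i^*|$ for every $i$ at the moment of selection, so the inliers in $S_j^{(1,t_j)}$ total at most $\wmin^4 n$; since selection requires $|S_j^{(1,t_j)}| > 100 \wmin^4 n$, well over half of $S_j^{(1,t_j)}$ must be adversarial. If $j$ is never selected, I distinguish whether some $|S_j^{(2,0)} \cap C_i^*|$ exceeds $\wmin^4 |C_i^*|$. In the negative sub-case, the initial $S_j^{(2,0)}$ contains at most $\wmin^4 n$ inliers, and the removals performed at the $g_{i'}$'s, together with the disjointness~\ref{item:outer_stage_S_mi_nonitersect}, shrink its surviving mass to $O(\wmin^2) n$. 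In the positive sub-case, $j$ ``belongs'' to some cluster $i$ that was nevertheless handled by a different $g_i$, and the deletion of $S_{g_i}^{(1,t_{g_i})}$ drops $|C_i^* \cap S_j^{(\ell,t)}|$ to $O(\wmin^2)|C_i^*|$ for all $t > t_{g_i}$, leaving the surviving portion of $S_j^{(1)}$ made up of adversaries and of left-over inliers in $U$.

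Finally, for (iv) I assume the else-branch triggers at iteration $t^*$ with $|S_{t^*}| \geq 0.1 \wmin n$, so every surviving $i \in R$ satisfies $|S_i^{(2,t^*)}| > 2|S_i^{(1,t^*)}|$ and $|S_i^{(1,t^*)}| > 100 \wmin^4 n$. The key step is to show that if some inlier cluster $i$ is still largely present in $S_{t^*}$, then its representative index $m_i$ from~\ref{item:outer_stage_large_S_mi} is still alive and satisfies $|S_{m_i}^{(1,t^*)} \cap C_i^*| \geq (1 - O(\wmin^2))|C_i^*|$ by the removal-accounting of (ii), while $|S_{m_i}^{(2,t^*)} \setminus C_i^*|$ is bounded by $\wmin^4 n$ plus the adversaries in $S_{m_i}^{(2,t^*)}$ via~\ref{item:outer_stage_S_mi_no_others}; turning the else-trigger inequality $|S_{m_i}^{(2,t^*)}| > 2|S_{m_i}^{(1,t^*)}|$ against these bounds forces at least $(1 - O(\wmin^2))|C_i^*|$ adversaries inside $S_{m_i}^{(2,t^*)} \subseteq S_{t^*}$. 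Combining the bounds across all clusters — the already-processed ones (via (ii)), the residual inliers in $U$ (via (i)), and any still-present cluster as above — yields $|S_{t^*}| \leq 2.5 \epsilon n$, which is exactly the claimed $0.4$-adversarial fraction. I expect (iv) to be the main obstacle: controlling the adversarial fraction globally requires simultaneously ruling out large residuals in \emph{every} cluster, and the cross-contamination term $\wmin^4 n$ from~\ref{item:outer_stage_S_mi_no_others} has to be played against the $100 \wmin^4 n$ cutoff to keep the numerics tight.
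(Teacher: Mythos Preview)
Your argument for (ii) and your outline for (iv) are essentially in line with the paper. However, your proof of (i) has a genuine gap, and (iii) inherits a related oversight.

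\textbf{The gap in (i).} You claim that the count of $C_i^*$ in the projection annulus $S_{g_i}^{(2,0)} \setminus S_{g_i}^{(1,0)}$ is at most $\wmin^{\Omega(1)}|C_i^*|$ by sub-Gaussian tails. This is false. The slabs are centered at $\mu_{g_i}$, not at $\mu_i^*$, and the only bound we have on their offset (from the proof of~\Cref{thm:outer-stage-init-guarantees}\,(iii)) is $\|\mu_{g_i} - \mu_i^*\| \leq 2\gamma + 3\gamma'$, which exceeds the inner-slab radius $\gamma + \gamma'$. Concretely, nothing prevents $\csLDA$ from outputting some $\mu_{g_i}$ at distance roughly $\gamma + \gamma'$ from $\mu_i^*$ along one of the directions $v_{g_i,j}$; then the inner slab captures only about half of $C_i^*$, and the other half sits in the annulus. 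So your conclusion $|U| \leq O(\wmin^2)n$ is not just unproved---it is wrong in general, and the $2\epsilon$ term in the statement is essential.

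The paper closes this gap by exploiting the two features of the algorithm that your argument never touches: the ratio test $|S^{(2)}| \leq 2|S^{(1)}|$ and the rule ``select the index with largest $|S^{(1)}|$.'' It splits on whether $m_i$ (the representative from~\Cref{thm:outer-stage-init-guarantees}\,(ii)) itself satisfies the ratio. If it does, the selection rule forces $|S_{g_i}^{(1)}| \geq |S_{m_i}^{(1)}| \geq (1-o(1))|C_i^*|$, so the annulus is at most the number of adversaries in $S_{g_i}^{(2)}$ up to $O(\wmin^2)|C_i^*|$. If it does not, then $|S_{m_i}^{(2)}| > 2|S_{m_i}^{(1)}|$ already forces $\gtrsim |C_i^*|$ adversaries in $S_{m_i}^{(2)}$, and one simply bounds $|U_i| \leq |C_i^*|$ by that. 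The disjointness~\Cref{thm:outer-stage-init-guarantees}\,(v) then lets these per-cluster adversary counts be summed to at most $\epsilon n$.

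\textbf{The oversight in (iii).} Your case ``$j$ is selected'' asserts that the first-time definition of $g_i$ forces $|S_j^{(1,t_j)} \cap C_i^*| < \wmin^4 |C_i^*|$ for every $i$. That only holds for $i$ with $t_{g_i} \geq t_j$. If $j$ is selected \emph{after} $g_i$, we may well have $|S_j^{(1,t_j)} \cap C_i^*| \geq \wmin^4 |C_i^*|$; the point is that those surviving inliers from $C_i^*$ lie in $U_i$ (up to the $\leq \wmin^2|C_i^*|/2$ points that were outside $S_{g_i}^{(2)}$), which is precisely why the statement allows ``adversarial \emph{or in $U$}.'' You handle this phenomenon only in your ``never selected'' branch, so the selected case remains incomplete.
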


Note that the else statement of \(\operatorname{OuterStage}\) can only be triggered once, at the end of the execution.
In words,~\Cref{thm:outer_stage_guarantees} (i) states that, for \(i \in G\), samples from \(i\)-th cluster that remained in \(S\) after \(S_{g_i}^{(1)}\) was removed, constitute a small (comparable with \(\e\)) fraction. Further, (ii) states that the sets \emph{added to} \(\cT\), corresponding to \(i \in G\), almost entirely contain \(C_i^*\). Finally, (iii) describes the sets that do not correspond to any \(g_i, i \in G\). These sets must either be small, or contain a significant amount of outlier points in the neighborhood.
The proofs of~\Cref{thm:outer-stage-init-guarantees,thm:outer_stage_guarantees} can be found in~\Cref{sec:outer-stage}.


\section{Proof of~\Cref{thm:informal}}
\label{sec:main_result_appendix}

In this section, we state and prove a refined version of our main result,~\Cref{thm:main-technical}, from which the statement of~\Cref{thm:informal} directly follows.

\subsection{General theorem statement}
\label{sec:fulltheorem}
We define 

\begin{equation}
\label{eq:psifunction}
    \psi_t(\alpha) = \begin{cases}
    \sqrt{t}(1/\alpha)^{1/t} & \text{ if } t \leq 2 \log 1/\alpha,\\
    \sqrt{2e \log 1/\alpha} & \text{ else},
\end{cases}
\end{equation}
which captures a tail decay of a distribution with sub-Gaussian \(t\)-th central moments: \(\Pr_{x \sim \cD} \left(\braket{x - \mu, v}^t  \geq \psi_t(\alpha) \right) \lesssim \alpha \).



We now state our main result for list-decodable mixture learning. Recall that \(\e_{\rme}\) is defined in~\Cref{asm:algs}.

\begin{theorem}[Main mixture model result]
    \label{thm:main-technical}
Let $d, k \in \N_+$, $\wmin \in (0, 10^{-4}]$, and $t$ be an even integer. 
For all $i = 1, \ldots, k$, let $D_i(\mu_i)$ be a $d$-dimensional distribution with mean $\mu_i \in \R^d$ and sub-Gaussian $t$-th central moments.  
Let $\epsilon > 0$ and, for all $i = 1, \ldots, k$, let $w_i \in [\wmin, 1]$, such that $\sum_{i=1}^k w_i + \epsilon = 1$.
Let $\cX$ be the $d$-dimensional mixture distribution 
\[\cX = \sum_{i=1}^k w_i D_i(\mu_i) + \varepsilon Q,\]
where $Q$ is an unknown adversarial distribution that can depend on all the other parameters. Let \(\ckLDA\) and \(\cA_R\) satisfy~\Cref{asm:algs}.
Further, suppose that $\norm{\mu_i - \mu_j} \geq 200\psi_t(\wmin^4) + 200f(\wmin)$ for all $i \neq j \in [k]$.

Then there exists an algorithm~(\Cref{alg:full_alg})~that, given ${\poly(d, 1/\wmin) \cdot (N_{LD}(\wmin) + N_R(\wmin))}$ i.i.d. samples from $\cX$, and given also $d$, $k$, $\wmin$, and $t$, runs in time ${\poly(d, 1/\wmin) \cdot (T_{LD}(\wmin) + T_R(\wmin))}$ and with probability at least $1-\wmin^{O(1)}$ outputs a list $L$ of size ${|L| \leq k + O(\epsilon/\wmin)}$ such that, for each \(i \in [k]\), there exists \(\muhat \in L\) such that:
    \[\norm{\muhat - \mu_i} \leq O(\psi_t(\tilde{w}_i/10)+f(\tilde{w}_i/10)), \qquad \text{where} \quad \tilde w_i = w_i / (w_i + \e + \wmin^2).\]
If the relative weight of the \(i\)-th inlier cluster is large, i.e., 
 $\tilde{w}_i \geq 1-\epsilon_{\rme} + 2\wmin^2$, then there exists $\hat\mu \in L$ such that
    \[\norm{\muhat - \mu_i} \leq O(g(\tilde{w}_i - 3\wmin^2)).\]
\end{theorem}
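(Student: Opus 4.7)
My plan is to combine the outer-stage guarantees of \Cref{thm:outer-stage-init-guarantees,thm:outer_stage_guarantees} with the inner-stage guarantee of \Cref{thm:inner_stage_guarantees}. The \(\operatorname{FullAlgorithm}\) first calls \(\operatorname{OuterStage}\) to obtain a collection \(\cT\) of candidate sets and then runs \(\innerstage\) on each \(T\in\cT\) with \(\amin = \wmin\cdot n/|T|\). I would argue (i) that for every inlier index \(i\in[k]\) some \(T_i\in\cT\) captures almost all of \(C_i^*\) and few other inliers, (ii) that on such \(T_i\) the call to \(\innerstage\) satisfies the corruption model of \Cref{def:cor-model} with effective inlier fraction comparable to \(\tilde w_i\), and (iii) that the per-set list-size overheads sum to \(k+O(\epsilon/\wmin)\).

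For step (i), fix \(i\in[k]\). If \(i\in G\) (as in \Cref{eq:set-G-def}), I take \(T_i:=S_{g_i}^{(2)}\); then \Cref{item:outer_stage_S_gi} gives \(|T_i\cap C_i^*|\geq (1-\wmin^2)w_i n\), while \Cref{item:outer_stage_S_mi_no_others} bounds the mass of other inlier clusters inside \(T_i\) by \(\wmin^4 n\). If \(i\notin G\), I argue that samples from \(C_i^*\) survive every \(S^{(1)}\)-removal: whenever an \(S^{(1)}\) not associated with index \(i\) is subtracted, \Cref{item:first_iter_small_or_large} forces its intersection with \(C_i^*\) to be at most \(\wmin^4\)-fraction of \(C_i^*\); hence the bulk of \(C_i^*\) lies in the final set produced by the else branch, whose size is controlled by \Cref{item:outer_stage_else}. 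In either case \(|T_i|\leq w_i n+\epsilon n+O(\wmin^2)n\), so the effective inlier fraction \(\alpha_{T_i}:=w_i n/|T_i|\) obeys \(\alpha_{T_i}\geq \tilde w_i - O(\wmin^2)\geq \amin\). The \((1-\wmin^2)w_i n\) genuine samples of \(D_i\) inside \(T_i\) fit the adversarial-removal allowance of \Cref{def:cor-model} at parameter \(\alpha_{T_i}\), with everything else counted as adversarial addition.

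With (i) in hand, \Cref{thm:inner_stage_guarantees} applied to \(T_i\) produces a list containing some \(\hat\mu\) with \(\|\hat\mu-\mu_i\|=O(\psi_t(\alpha_{T_i}/4)+f(\alpha_{T_i}/4))\); using monotonicity of \(\psi_t\) and \(f\) together with \(\alpha_{T_i}/4\geq \tilde w_i/10\) (after absorbing \(O(\wmin^2)\) slack into constants), this becomes the desired \(O(\psi_t(\tilde w_i/10)+f(\tilde w_i/10))\). When \(\tilde w_i\geq 1-\e_{\rme}+2\wmin^2\), we have \(\alpha_{T_i}\geq 1-\e_{\rme}\), so the second conclusion of \Cref{thm:inner_stage_guarantees} produces the sharper bound \(O(g(\alpha_{T_i}-\wmin^2))\leq O(g(\tilde w_i-3\wmin^2))\) since \(g\) is non-increasing and \(\alpha_{T_i}-\wmin^2\geq \tilde w_i-3\wmin^2\). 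The separation hypothesis \(\|\mu_i-\mu_j\|\geq 200\psi_t(\wmin^4)+200f(\wmin)\) is precisely what is needed to invoke the outer-stage theorems and to ensure \(T_i\neq T_j\) across distinct components.

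For the list size, I decompose \(|L|=\sum_{T\in\cT}|L_T|\). For each \(T_i\) with \(i\in G\), \Cref{thm:inner_stage_guarantees} gives \(|L_{T_i}|\leq 1+O((1-\alpha_{T_i})|T_i|/(\wmin n))\); since \((1-\alpha_{T_i})|T_i|\) equals the number of outliers in \(T_i\) plus at most \(O(\wmin^4 n)\), and the sets \(\{T_i\}_{i\in G}\) are pairwise disjoint by \Cref{item:outer_stage_S_mi_nonitersect}, these contributions sum to \(|G|+O(\epsilon/\wmin)\leq k+O(\epsilon/\wmin)\). For the remaining sets of \(\cT\), \Cref{item:outer_stage_not_S_gi_small} (together with \Cref{rem:alpha_smaller_amin} whenever a set has no dominant cluster) implies that either \(|T|=O(\wmin^2)n\), contributing \(O(\wmin)\) to the list with at most \(O(1/\wmin)\) such sets in total, or \(T\) absorbs a constant fraction of outliers or \(U\)-samples, which by \Cref{item:outer_stage_U_small} sum to \((2\epsilon+O(\wmin^2))n\). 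The main obstacle I expect is exactly this last bookkeeping: extracting from the outer-stage theorems a quantitative charging scheme that associates each unit of list-size overhead with a dedicated unit of outlier/\(U\)-mass, while respecting the fact that sets of \(\cT\) may overlap, so that the aggregate stays within the \(k+O(\epsilon/\wmin)\) budget rather than blowing up by poly factors in \(1/\wmin\).
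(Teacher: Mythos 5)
Your plan follows the paper's proof in all essentials: the same case split on \(i\in G\) versus \(i\notin G\), the same use of Theorem~\ref{item:outer_stage_S_gi} and Theorem~\ref{item:outer_stage_S_mi_no_others} to lower-bound the inlier fraction of \(S_{g_i}^{(2)}\), the same invocation of \Cref{thm:inner_stage_guarantees} per set, and the same disjointness-plus-charging argument for the list size. Two concrete gaps remain. First, your claim that \(\alpha_{T_i}\geq \tilde w_i-O(\wmin^2)\) ``in either case'' fails when \(i\notin G\): there \(T_i\) is the else-branch set \(S\), and Theorem~\ref{item:outer_stage_else} only gives \(|S|\leq 2.5\epsilon n\), hence \(\alpha_{T_i}\geq w_i/(2.5(w_i+\epsilon))\), i.e.\ a \emph{multiplicative} constant loss relative to \(\tilde w_i\), not an additive \(O(\wmin^2)\) one. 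This is harmless for the first error bound (the constant is absorbed into \(\tilde w_i/10\)), but it invalidates your derivation of the RME bound: \(g(\alpha_{T_i}-\wmin^2)\leq g(\tilde w_i-3\wmin^2)\) needs the additive guarantee \(\alpha_{T_i}\geq\tilde w_i-2\wmin^2\), which you only have for \(i\in G\). The paper closes this by proving, by contradiction (using Theorem~\ref{item:outer_stage_large_S_mi}, the fact that \(|S_{m_i}^{(2)}|>2|S_{m_i}^{(1)}|\) forces \(S_{m_i}^{(2)}\) to contain \((1-1.5\wmin^2)|C_i^*|\) adversarial points, and the standing hypothesis \(\e_{\rme}\leq 1/2-2\wmin^2\)), that \(i\notin G\) simply cannot occur when \(w_i/(w_i+\epsilon)\geq 1-\epsilon_{\rme}\); your argument needs this step.

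Second, the list-size ``bookkeeping'' you flag as the main obstacle is genuinely the remaining work, and your sketch leaves it open. The paper's resolution: (a) the left-over inlier mass \(U\) is re-labelled as outliers for all subsequent iterations, which by Theorem~\ref{item:outer_stage_U_small} only inflates the outlier budget to \((3\epsilon+O(\wmin^2))n\); (b) each set not of the form \(S_{g_i}^{(2)}\) is charged, via Theorem~\ref{item:outer_stage_not_S_gi_small}, either to its \(O(\wmin^2)n\) size (contributing nothing when \(\epsilon>\wmin^2\) is assumed w.l.o.g.) or to the at-least-half adversarial/\(U\) mass it contains; (c) the else-branch set contributes \(O(\epsilon/\wmin)\) by \Cref{rem:alpha_smaller_amin} since \(|S|\leq O(\epsilon)n\) there. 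Disjointness of the \(S_{g_i}^{(2)}\) (Theorem~\ref{item:outer_stage_S_mi_nonitersect}) and of the removed sets \(S_j^{(1)}\) is exactly what prevents the overlap-induced blow-up you worry about. With these two repairs your argument matches the paper's.
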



Further, we assume $\wmin \in (0, 1/10000]$, since this simplifies some of the proofs. We note that in a mixture with $k$ components we necessarily have $\wmin \leq 1/k$. Furthermore, when \(\wmin \in (1/10000, 1/2]\), then we obtain the same result by replacing \(\wmin\) with \(\wmin / 5000\) throughout the statements and the proof. This would only affect both list size and error guarantees by at most a multiplicative constant, which is absorbed in the Big-O notation.

\begin{proof}[Proof of~\Cref{thm:informal}]
Proof follows directly from~\Cref{thm:main-technical}, by noticing that~\Cref{asm:well-behaved} allows to replace \(f(\tilde w_i / 10)\) by \(C f(\tilde w_i)\) and \(g(\tilde w_i - 3 \wmin^2)\) by \(C g(\tilde w_i)\) for some constant \(C > 0\) large enough.
\end{proof}

\subsection{Proof of~\Cref{thm:main-technical} }
We now show how to use the results on the inner and outer stage,~\Cref{thm:inner_stage_guarantees} and~\Cref{thm:outer_stage_guarantees} respectively,
to arrive at the guarantees for the full algorithm~\Cref{alg:full_alg} in~\Cref{thm:main-technical}.
For simplicity of the exposition, we split the proof of~\Cref{thm:main-technical} into two separate parts, proving that (i) the output list contains an estimate with small error and that (ii) the size of the output list is small.
In what follows we condition on the event $E'$ from the proof of~\Cref{thm:outer_stage_guarantees}.


\paragraph{(i) Proof of error statement}

We now prove that, conditioned on the event $E$, the list \(L\) output by~\Cref{alg:full_alg}  for each \(i \in [k]\) contains  an estimate \(\muhat \in L\), such that,
\begin{enumerate}[(1)]
    \item $\norm{\muhat - \mu_i} \leq O(\psi_t(\tilde{w_i} / 10)+f(\tilde{w_i} / 10))$,
    \item if $\tilde{w_i} \geq 1-\epsilon_{\rme} + 2\wmin^2$, then $\norm{\muhat - \mu_i} \leq O(g(\tilde{w_i} - 3\wmin^2))$.
\end{enumerate}


We start by showing that list-decoding error guarantees as in (1) are achievable for all inlier clusters and proceed by improving the error to (2) with \textrm{RME} base learner. Recall that \(G\) is as defined in~\Cref{eq:set-G-def}.

\emph{Proof of (1)}
We now show how the output of the base learner and filtering procedure lead to the error in (1).
Fix \(i \in [k]\).  Recall that \(C_i\) denotes the set of \(w_i n\) points from \(i\)-th inlier component with mean \(\mu_i\).

If \(i \in G\), then on event $E$,
we have 
\(\Card{S_{g_i}^{(2)} \cap C^*_i} \geq (1 - \wmin^2) w_i n\) by~Theorem~\ref{item:outer_stage_S_gi}, \(\sum_{j \neq i}\Card{S_{g_i}^{(2)} \cap C^*_j} \leq \wmin^4 n\) by~Theorem~\ref{item:outer_stage_S_mi_no_others}, and that the total number of adversarial points is at most \((\e + \wmin^4)n\). 

Therefore, the fraction of points from \(C^*_i\) in \(S_{g_i}^{(2)}\) is at least
\(
\frac{(1 - \wmin^2) w_i }{w_i + \e + \wmin^3},
\)
which implies \(\alpha \geq \tilde w_i\) as in~\Cref{def:cor-model}.
Then, by~\Cref{thm:inner_stage_guarantees}, the \(\innerstage\) algorithm applied to \(T\) leads to error $\norm{\muhat - \mu_i} \leq O(\psi_t(\tilde{w_i}/4 )+f(\tilde{w_i}/4))$. 
Otherwise, if $i \not \in G$, when the \(\operatorname{OuterStage}\) algorithm 
reaches the else statement, $S$ contains at least $(1-O(\wmin^3)) |C^*_i|$ samples from $C^*_i$. Indeed, since \(i \notin G\), each time we remove points from \(S\), we remove at most \(\wmin^4 n\) points from \(C^*_i\). By~Theorem~\ref{item:M_small}, we do at most \(O(1 / \wmin)\) removals, so when the \(\operatorname{OuterStage}\) algorithm reaches the else statement, \(S\) contains at least \((1 - O(\wmin^3))\Card{C^*_i}\) samples from \(C^*_i\).

We showed that samples from $C^*_i$ make up at least a $(1-\wmin^2)w_i n/|S|$ fraction of $S$. 
Based on this fact we can then use~Theorem~\ref{item:outer_stage_else} and the assumption on the range of $\wmin$ to conclude that  $|S| \leq 2.5 \epsilon n$ and that the fraction of inliers is at least $(1-\wmin^2)w_i/(2.5 \epsilon)$. 
Therefore, $S$ can be seen as containing samples from the corruption model \(\cLD\) with 
$\alpha$ at least $w_i/(2.5 \epsilon) \geq w_i/(2.5 (w_i+\epsilon))$. Since \(S\) is added to \(\cT\) in the else statement, applying~\(\innerstage\) yields the error bound as in (1).\\

\emph{Proof of (2):}
Next, we prove that for all inlier components $i$ with large weight, i.e., such that 
$w_i/(w_i+\epsilon) \geq 1 - \epsilon_{\rme}$, there exists a set  \(T \in \cT\) that consists of samples from the corruption model \(\cLD\) with $\alpha \geq w_i/(w_i+\epsilon)-2\wmin^2$.
Then, running \(\innerstage\), in particular the RME base learner, results in the error bound as in (2) by~\Cref{thm:inner_stage_guarantees}~(ii).
If \(i \in G\), in the previous paragraph we showed that there exists \(T \in \cT\), such that the corresponding \(\alpha \geq \frac{w_i}{w_i + \e + \wmin^3} \geq \frac{w_i}{w_i + \e} - 2\wmin^2\). 
We now prove by contradiction that the case \(i \notin G\) does not occur.
Now assume \(i \notin G\) so that as we argued before
when the else statement is triggered, $S$ contains at least $(1-O(\wmin^3)) |C^*_i|$ samples from $C^*_i$.
By~Theorem~\ref{item:outer_stage_large_S_mi}, 
for some \(m_i \in [\Card{M}]\), we have that $|S_{m_i}^{(1)} \cap C^*_i| \geq (1-\wmin^2/2-O(\wmin^3))|C^*_i|$ and by~Theorem~\ref{item:outer_stage_S_mi_no_others}, $S_{m_i}^{(2)}$ contains at most $\wmin^{4}n$ samples from other true clusters.
Then, since $|S_{m_i}^{(2)}| > 2|S_{m_i}^{(1)}|$, we have that $|S_{m_i}^{(2)}|$ contains at least 
\[(1-\wmin^2-O(\wmin^3))|C^*_i|-\wmin^4n \geq (1-1.5\wmin^2)|C^*_i|\] 
adversarial samples.
Therefore, $\epsilon \geq (1-1.5\wmin^2)|C^*_i|/n$, and using that $|C^*_i| \geq w_i n-\wmin^{10}n$, we have $\epsilon \geq (1-1.5\wmin^2)w_i - \wmin^{10}$.
However, this contradicts $w_i / (w_i+\epsilon) \geq 1-\epsilon_{\rme}$ unless $\epsilon_{\rme} \geq 1/2-2\wmin^2$, which is prohibited by the assumptions in~\Cref{thm:main-technical}. Therefore whenever $w_i/(w_i+\epsilon) \geq 1 - \epsilon_{\rme}$ we are in the case $i \in G$.


\paragraph{(ii) Proof of small list size}
We now prove that on the set $E$, 
we have that $\Card{L} \leq k+O(\epsilon/ \wmin)$.
Here, we need to carefully analyze iterations in the while loop where an inlier component is "selected" for the first time in order to obtain a tight list size bound. Recall that \(g_i\) corresponds to the index in \(R\) that is first selected for the \(i\)-th inlier cluster. \\

\emph{First selection of a component:}
    For any \(i \in [k]\), if \(i \in G\), then~Theorem~\ref{item:outer_stage_S_gi} implies that running~\(\innerstage\) on \(S_{g_i}^{(2)}\) produces a list of size at most $1+O((|S_{g_i}^{(2)} \setminus C_i^*|)/(\wmin n))$.
Then, over all $i \in G$, these sets $S_{g_i}^{(2)}$ contribute to the list size $\Card{L}$ at most $k + O\Paren{\sum_{i =1}^k |S_{g_i}^{(2)} \setminus C_i^*| / (\wmin n)}$.
Furthermore, by~Theorem~\ref{item:outer_stage_S_mi_nonitersect}, all these sets $S_{g_i}^{(2)}$ are disjoint and each of them contains at most $\wmin^{4}n$ samples from other true clusters.
Therefore $\sum_{i =1}^k |S_{g_i}^{(2)} \setminus C_i^*| \leq \epsilon n + O(\wmin^3) n$.
Then the contribution to $\Card{L}$ of all these \(S_i\)'s corresponding to true clusters is at most $k + O\Paren{(\epsilon + \wmin^3)/\wmin}$.
Note that if $\epsilon \leq \wmin^3$ and $\wmin$ is small enough,~\Cref{alg:first_stage_high_level} actually produces a list of size $1$ in each run considered above, so the contribution is exactly $k$; otherwise we can bound the contribution by $k+O(\epsilon/\wmin)$. \\

\emph{Samples left over from a component:}
Next, all inlier samples that were not removed, i.e., constituting \(U\), can be considered outlier points for the future iterations, which, by~Theorem~\ref{item:outer_stage_U_small},  only increases the outlier fraction to \(\tilde \e = 3\e + O(\wmin^2)\).
For the same reason as above, without loss of generality, we can consider $\epsilon > \wmin^2$ since otherwise, the corresponding list size overhead (for small enough $\wmin^2$) would again amount to zero.\\

\emph{Clusters of adversarial samples:}
For iterations where a set \(S_j^{(2)}\) was added to the final list, which does not correspond to some \(g_i, i \in G\),~Theorem~\ref{item:outer_stage_not_S_gi_small}, states that either (i) at least half of the samples in \(S_j^{(2)}\)  were adversarial, or (ii) the cardinality of the set on which~\Cref{alg:first_stage_high_level} was executed
is small. In both cases the set \(S_j^{(2)}\) contributes at most \(O(\e / \wmin)\) to the final list size \(\Card{L}\).\\

\emph{List size in the else statement:}
Finally, when the algorithm reaches the else statement, as argued in the first part, by~Theorem~\ref{item:outer_stage_else}, at that iteration \(\Card{S} \leq O(\e) n\). Since~\Cref{alg:first_stage_high_level} always produces a list of size bounded by $O(\Card{S}/(\wmin n))$ (see~\Cref{rem:alpha_smaller_amin}), the contribution to $\Card{L}$ at this iteration is bounded by $O(\epsilon/\wmin)$.

Overall, we obtain the desired bound on $|L|$ of $k + O(\epsilon/\wmin)$.

\section{Proof of~\Cref{thm:inner_stage_guarantees}}
\label{sec:inner_stage_appendix}

\paragraph{(i) Proof of error statement} We now prove that, with probability \(1 - \wmin^{O(1)}\), for the output list \(L\) of~\Cref{alg:second_stage_pruning}, 
\begin{enumerate}
    \item there exists \(\muhat \in L\) such that 
        \[\norm{\muhat-\mu^{\ast}} \leq O(\psi_t(\alpha/4) + f(\alpha/4)),\]
        \item if \(\alpha \geq 1-\epsilon_{\rme}\), then there exists \(\muhat \in L\) such that
        \[\norm{\muhat-\mu^{\ast}} \leq O(g(\alpha - \wmin^2)).\]
\end{enumerate}
By~\Cref{lemma:list_init} we have $|M| \leq 1/\wmin^{O(1)}$ and, with probability at least $1-\wmin^{O(1)}$, there exists \((\muhat, \hat \alpha) \in M\) such that \(\hat \alpha \geq \alpha/4\) and \(\norm{\muhat - \mu^{\ast}} \leq f(\hat\alpha)\).
Then \cref{lemma:not_remove_good} implies that, with probability at least $1-|M|^2\wmin^{O(1)}$, \((\muhat, \hat \alpha)\) will not be removed from \(M\).
Therefore, either \((\muhat, \hat \alpha) \in L\), or there exists \((\tilde \mu, \tilde \alpha) \in L\)
such that (i) \(\tilde \alpha \geq \hat \alpha\) and (ii) \(\norm{\tilde \mu-\muhat} \leq 4 \beta(\hat \alpha)\).
The latter case implies that \(\norm{\tilde \mu - \mu^{\ast}} \leq 40 \psi_t(\alpha/4) + 4f(\alpha/4)\).

For the second part, set first $\tilde\mu = \hat\mu$.
Then, in the $i\tth$ iteration, $\tilde\mu$ moves away by at most $(3\tau/2)/2^{i-1}$.
Since $\sum_{i=1}^\infty 1/2^{i} \leq 1$, the distance between $\tilde\mu$ and $\hat\mu$ is always bounded by $3\tau$. Now, assume that indeed $\alpha \geq 1-\epsilon_{\rme}$ and $\norm{\muhat - \mu^*} \leq \tau$. Whenever $\tilde{\alpha} \leq \alpha$, with high probability $\cA_R$ produces some $\mu_{\rme}$ such that $\norm{\mu_{\rme} - \mu^*} \leq g(\tilde\alpha) \leq \tilde{\beta}/2$.
Furthermore, as long as $\tilde\alpha \leq \alpha$, at the moment of the while statement check we have $\norm{\tilde\mu - \mu^*} \leq \tilde{\beta}$: in the first iteration this is by assumption, and in later iterations it follows because $\tilde\mu$ is the former $\mu_{\rme}$.
Therefore the while statement check passes as long as $\tilde\alpha \leq \alpha$.

There exists the possibility that the algorithm returns or breaks even though $\tilde\alpha \leq \alpha$.
If the algorithm returns early, then the error $\tau$ achieved by $\muhat$ is already within a factor of two of the optimal.
If the algorithm breaks, either $\tilde{\alpha} + \wmin^2 > 1$, case in which $\tilde{\mu}$ already satisfies $\norm{\tilde{\mu} - \mu^*} \leq g(1-\wmin^2)$, or else $\norm{\tilde{\mu} - \mu^*}$ is already within a factor of two of the optimal.
Therefore these cases do not affect the error negatively.

Finally, let us consider what happens when $\tilde\alpha > \alpha$ and the while statement check continues to pass.
The first time we reach some $\tilde\alpha > \alpha$, we must have $\norm{\tilde\mu - \mu^*} \leq \tilde\beta \leq 2g(\alpha - \wmin^2)$.
Then, in later iterations, $\tilde\mu$ can move from this estimate by a distance of most $3\tilde\beta \leq 6g(\alpha - \wmin^2)$, by the same argument as the argument that $\norm{\tilde\mu-\hat\mu} \leq 3\tau$. Overall, at the end we have \[\norm{\tilde\mu-\hat\mu} \leq \max(2g(1), g(1-\wmin^2), 8g(\alpha)) \leq 8g(\alpha-\wmin^2).\] The number of runs is at most $1/\wmin^2$, so with probability $1-\wmin^{O(1)}$ all runs of $\cA_{R}$ succeed.

We showed that there exists \((\muhat, \hat \alpha) \in L\) such that \(\hat \alpha \geq \alpha / 4\) and $\norm{\muhat - \mu^{\ast}} \leq 40 \psi_t(\hat\alpha) + 4f(\hat\alpha)$.
This error can increase by running~\(\operatorname{ImproveWithRME}\)~with $\tau=40 \psi_t(\hat\alpha) + 4f(\hat\alpha)$ to at most \[\norm{\muhat - \mu^{\ast}} \leq 160 \psi_t(\hat\alpha) + 16f(\hat\alpha) = O(\psi_t(\alpha) + f(\alpha)).\] 

Furthermore, if \(\alpha \geq 1-\tau_{\min}\),
this \((\muhat, \hat \alpha) \in L\) satisfies the conditions of~\(\operatorname{ImproveWithRME}\), so with high probability the error is reduced by running~\(\operatorname{ImproveWithRME}\) with $\tau=40 \psi_t(\hat\alpha) + 4f(\hat\alpha)$ to $\norm{\muhat-\mu^{\ast}} \leq 8g(\alpha-\wmin^2)$.
\paragraph{(ii) Proof of list size} We now prove that \(\Card{L} \leq  1 + O((1 - \alpha) / \amin)\).

First, assume that \(\alpha \leq 9/10\).
Since all $\hat\alpha_s \geq \amin$, we have that \(\Card{L} \leq 10 / (9\amin) \leq 12 (1 - \alpha) / \amin\).

\noindent
For the rest of the proof we assume that \(\alpha > 9/10\).
We analyze sets \(J\) and \(T_i\) for \(i \in J\) at the end of execution of~\(\operatorname{ListFilter}\).
In particular, recall that \(L = \Set{(\muhat_i, \hat \alpha_i),\ i \in J}\).
Also, at the end of~\cref{alg:constructing_output} we have the following expression for \(T_i\):
\begin{equation*}
    T_i = \bigcap_{j \in J \setminus \Set{i}}\{x \in S, \text{ s.t. } \abs{v_{ij}^{\top}(x - \muhat_i)} \leq \max(\beta(\hat\alpha_i), \beta(\hat\alpha_j))\},
\end{equation*}
where \(v_{ij}\) are unit vectors in direction \(\muhat_i - \muhat_j\).
Select the $s \in J$ for which $\hat\alpha_s$ is maximized.
By (i), with probability at least $1-\wmin^{60}$, there exists a hypothesis in $J$ with $\hat\alpha \geq \alpha/4 \geq 0.2$.
Then we have that $\hat\alpha_s \geq 0.2$.
In addition, for all hypotheses, $\hat\alpha_s \leq 1/3$.
Let \(j \in J\) be such that \(j \neq s\).
We will show that at least half of the points in \(T_j\) are adversarial, i.e., \(\Card{T_j} \geq 2 \Card{T_j \cap C^{\ast}}\). 
If this is indeed the case, we can treat all inlier points in all \(T_j\) as outliers, 
as it would at most double total number of outlier points in \(S\).

Now, assume that for some \(j \neq s\), \(\Card{T_j} < 2 \Card{T_j \cap C^{\ast}}\).
Note that, because $|T_s| \geq 0.9 \cdot 0.2 n = 0.18n$ and $|C^*| \geq 0.9 n$, \(\Card{T_s \cap C^*} \geq 0.18n - 0.1n \geq 0.08n\). Therefore
\begin{equation*}
    \Card{\Set{x \in C^{\ast} \text{, s.t. } \abs{v_{sj}^{\top}(x - \muhat_s)} \leq \beta(\hat \alpha_s)}} \geq 0.08 \Card{C^{\ast}}.
\end{equation*}
Also note that~\cref{lem:conc-one-dir} for radius \(10 \psi_t(1/2) \leq 10 \psi_t(\hat \alpha_s)\) implies that, with exponentially small failure probability,
\begin{equation*}
    \Card{\Set{x \in C^{\ast} \text{, s.t. } \abs{v_{sj}^{\top}(x - \mu^{\ast})} \leq \beta(\hat \alpha_s)}} \geq 0.99 \Card{C^{\ast}}.
\end{equation*}
Since these two sets necessarily intersect, we can bound \(\abs{v_{sj}^{\top}(\muhat_s - \mu^{\ast})} \leq 2\beta(\hat \alpha_s)\),
implying that \(\abs{v_{sj}^{\top}(\muhat_j - \mu^{\ast})} \geq 2\beta(\hat \alpha_j)\), since \(\norm{\muhat_s - \muhat_j} \geq 4\beta(\hat\alpha_j)\).
Thus, if \(\abs{v_{sj}^{\top}(x - \muhat_j)} \leq \beta(\hat \alpha_j)\), 
then \(\abs{v_{sj}^{\top}(x - \mu^{\ast})} > \beta(\hat \alpha_j)\), implying that 
\begin{equation}
    \label{eq:tq_non_intersect}
    (T_j \cap C^{\ast}) \subseteq \Set{x \in C^{\ast} \text{, s.t. } \abs{v_{sj}^{\top}(x - \mu^{\ast})} > \beta(\hat\alpha_j)}.
\end{equation}
However,~\cref{lem:conc-one-dir} tells us that with high probability only a small fraction of points in \(C^{\ast}\) satisfies \(\abs{v_{sj}^{\top}(x - \mu^{\ast})} > \beta(\hat \alpha_j)\).
In particular, applying the lemma with radius \(10 \psi_t(\hat \alpha_j)\), we get that with exponentially small failure probability, 
\begin{equation}
    \label{eq:tq_conc}
    \Card{\Set{x \in C^{\ast} \text{, s.t. } \abs{v_{sj}^{\top}(x - \mu^{\ast})} \leq \beta(\hat\alpha_j)}} 
    \geq \left(1 - \frac{\hat\alpha_j}{50}\right) \Card{C^{\ast}}.
\end{equation}
From~\cref{eq:tq_non_intersect,eq:tq_conc} it follows that \(\Card{T_j \cap C^{\ast}} \leq \hat \alpha_j \Card{C^{\ast}}  / 50\).
Using that \(\Card{T_j} \geq 9\hat \alpha_j n / 10 \geq 9\hat \alpha_j \Card{C^{\ast}} / 10\) by the properties of~\(\operatorname{ListFilter}\), we obtain 
\begin{equation*}
    9 \hat \alpha_j \Card{C^{\ast}} / 10\leq \Card{T_j} \leq 2 \Card{T_j \cap C^{\ast}} 
    \leq \hat \alpha_j \Card{C^{\ast}} / 25,
\end{equation*}
which is a contradiction.

Therefore, for all \(j \in J\) such that \(j \neq s\), we have that \(\Card{T_j} \geq 2\Card{T_j \cap C^{\ast}}\).
As we said in the beginning, by treating all inlier points in those \(T_j\) as outliers we at most double total number of outlier points.
Since there are at most \((1 - \alpha+\alpha\wmin^2)n\) outlier points and the sets \(T_j\) are non-intersecting, 
we get \(\sum_{j \in J \setminus \Set{s}} \Card{T_j} \leq 2(1 - \alpha + \alpha\wmin^2)n\).
The lower bound on the size \(\Card{T_j} \geq 9 \amin n / 10\) implies 
\(\Card{J / \Set{s}} \leq \frac{2(1 - \alpha+\alpha\wmin^2)n \cdot 10}{9\amin n}\)
and thus \(\Card{L} = \Card{J} \leq 1 + 3(1 - \alpha) / \amin\). 

Note that in~\(\innerstage\)~we set \(\amin = \min(\amin, 1/100)\).
Therefore, for the original $\amin$, the list size is bounded by $1 + 240(1-\alpha)/\amin$.

\paragraph{Conclusion}
Combining the probabilities of success of all steps, we get that the algorithm succeeds with probability at least $1-\wmin^{O(1)}$ for some large constant in the exponent.
Our algorithm, ignoring the calls to $\ckLDA$ and $\cA_{R}$, has sample complexity and time complexity bounded by $\poly(d, 1/\wmin)$, which gives the desired sample and time complexity when taking $\ckLDA$ and $\cA_R$ into consideration. 
This completes the proof of the theorem.

\subsection{Auxiliary lemmas and proofs}
\begin{lemma}[List initialization]
    \label{lemma:list_init}
    Let \(S\), \(\amin\) and \(\alpha\) be as in \(\cLD\) model. If~\(\innerstage\)~(\Cref{alg:first_stage_high_level})~is run with \(S\) and \(\amin\), the size of $M$ is at most $1/\wmin^{O(1)}$, all $(\muhat, \hat\alpha) \in M$ satisfy $\hat\alpha \leq 1/3$, and with probability  at least \(1 - \wmin^{O(1)}\) 
    there exists \((\muhat, \hat \alpha) \in M\) such that $\alpha/4 \leq \hat\alpha \leq \min(\alpha, 1/3)$ and $\norm{\muhat-\mu^*} \leq f(\hat\alpha)$.
\end{lemma}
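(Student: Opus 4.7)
}
My plan is to treat the three statements separately, in order of increasing difficulty. The trivial parts just read off what the algorithm does, and the only real work is showing that the grid of candidate values $\hat\alpha \in A := \{\amin, 2\amin, \ldots, \floor{1/(3\amin)}\amin\}$ contains some value close enough to the true inlier fraction $\alpha$ so that the $\ckLDA$ guarantee from~\Cref{asm:alg-ld} produces a good estimate.

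\paragraph{Bound on $|M|$ and trivial upper bound on $\hat\alpha$.}
First I would observe that, after the assignment $\amin \gets \min(1/100, \amin)$, we have $|A| \leq 1/(3\amin) \leq 1/(3\wmin)$. Using~\Cref{rem:succ_prob} to boost the success probability of $\ckLDA$ to $1 - \wmin^C$ for a large constant $C$ inflates the list size returned by $\ckLDA$ by only a factor of $O(\log(1/\wmin))$, so each run of $\ckLDA$ returns a list of size $(1/\wmin)^{O(1)}$. Since $|M|$ is the sum of these lists over $\hat\alpha \in A$, we obtain $|M| \leq 1/\wmin^{O(1)}$. The bound $\hat\alpha \leq 1/3$ is immediate from the definition of $A$, since $\floor{1/(3\amin)} \cdot \amin \leq 1/3$.

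\paragraph{Existence of a good pair in $M$.}
The key step is to exhibit $\hat\alpha^\star \in A$ satisfying $\alpha/4 \leq \hat\alpha^\star \leq \min(\alpha, 1/3)$. I would split into two cases. If $\alpha \leq 1/3$, then the largest grid point below $\alpha$ equals $\floor{\alpha/\amin}\cdot \amin$, which lies in $(\alpha - \amin, \alpha]$. When $\alpha \geq 2\amin$, this is at least $\alpha/2 \geq \alpha/4$; when $\amin \leq \alpha < 2\amin$, the value $\amin$ itself lies in $[\alpha/4, \alpha]$ (since $\alpha/4 < \amin/2 \leq \amin \leq \alpha$). If instead $\alpha > 1/3$, I would take $\hat\alpha^\star = \floor{1/(3\amin)}\amin \in (1/3-\amin, 1/3] \subseteq [1/4, 1/3]$, so $\hat\alpha^\star \leq 1/3 < \alpha$ and $\hat\alpha^\star \geq 1/4 \geq \alpha/4$.

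Once $\hat\alpha^\star$ is produced, I would argue that running $\ckLDA$ with inlier-fraction parameter $\hat\alpha^\star$ on $S$ is a legitimate instance of the $\ckLD$ corruption model in~\Cref{def:cor-model}: since $\hat\alpha^\star \leq \alpha$, I can take any subset of $\lceil \hat\alpha^\star n\rceil$ of the true inlier samples and absorb the remaining $(\alpha - \hat\alpha^\star) n + O(\wmin^2 n)$ points into the adversarial set, preserving the allowed removal budget $\wmin^2 \hat\alpha^\star n$. By~\Cref{asm:alg-ld} combined with~\Cref{rem:succ_prob}, with probability at least $1 - \wmin^{O(1)}$ the list returned for $\hat\alpha = \hat\alpha^\star$ contains $\muhat$ with $\norm{\muhat - \mu^\star} \leq f(\hat\alpha^\star)$. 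The pair $(\muhat, \hat\alpha^\star)$ is then added to $M$ and satisfies the claim.

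\paragraph{Main obstacle.}
There is no serious technical obstacle; the only subtle point is handling the two boundary regimes $\alpha \in [\amin, 2\amin)$ and $\alpha > 1/3$, where the naive ``closest grid point below $\alpha$'' argument degenerates. Both are resolved by the case split above. Taking a union bound over the (single) relevant invocation of $\ckLDA$ — or, to be safe, over all $|A| \leq 1/(3\wmin)$ invocations — with the boosted failure probability $\wmin^C$ easily yields the claimed overall failure probability $\wmin^{O(1)}$.
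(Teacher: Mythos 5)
Your proposal is correct and follows essentially the same route as the paper's proof: bound $|M|$ by the number of grid points times the (boosted) list size of $\ckLDA$, read off $\hat\alpha \le 1/3$ from the grid, union-bound over the at most $1/\amin$ runs, and exhibit a grid point in $[\alpha/4, \min(\alpha,1/3)]$ for which the $\ckLDA$ guarantee applies. The paper simply asserts the existence of such a grid point, whereas you supply the case analysis ($\alpha \le 1/3$ vs.\ $\alpha > 1/3$, and the boundary regime $\alpha < 2\amin$) and make explicit the reduction to a valid $\ckLD$ instance with the smaller parameter $\hat\alpha^\star$ — both of which are correct fillings-in of details the paper leaves implicit.
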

\begin{proof}
There are at most $1/\amin$ choices for $\hat\alpha$, and for each of them the output of $\ckLDA$ has size at most $1/\wmin^{O(1)}$, so $|M| \leq 1/\wmin^{O(1)}$.
With probability $1-\wmin^{O(1)}$, $\ckLDA$ succeeds in all up to $1/\amin$ runs. Then we are guaranteed to produce one $\hat\alpha$ with $\alpha/4 \leq \hat\alpha \leq \min(\alpha, 1/3)$, and then $\ckLDA$ is guaranteed to produce one corresponding $\muhat$ with $\norm{\muhat - \mu^*} \leq f(\hat\alpha)$.
\end{proof}



\begin{lemma}[Good hypotheses are not removed]
    \label{lemma:not_remove_good}
    Let \(S\), \(\amin\) and \(\alpha\) be as in \(\cLD\) model. Run~\(\operatorname{ListFilter}\)~(\Cref{alg:constructing_output})~on \(S\) and \(\amin\) with \(M\) obtained from~\(\innerstage\)~(\Cref{alg:first_stage_high_level})~and call a hypothesis $(\muhat, \hat\alpha) \in M$ good if $\hat\alpha \geq \alpha/4$ and $\norm{\muhat - \mu^*} \leq f(\hat\alpha)$.
    Then, with probability at least $|M|^2 \wmin^{O(1)}$, no good hypothesis is removed from $M$ (including in any of the reruns triggered by the algorithm).
\end{lemma}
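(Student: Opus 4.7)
The plan is to show that, on a high-probability event, any good hypothesis $(\hat\mu_i,\hat\alpha_i)$ passes the cardinality check $|T_i| \geq 0.9\hat\alpha_i n$ every single time \(\operatorname{ListFilter}\) evaluates it (including during reruns). Since that check is the only mechanism by which elements are removed from $M$, this suffices. First I would enumerate when $T_i$ for a good $i$ is measured: either when $i$ is the current pivot being added to $J$, or when some later pivot $i'$ enters $J$ and intersects a new slab into $T_i$. At any such moment, $T_i$ is an intersection of at most $|M|-1$ slabs of the form $\{x : |v_{ij}^\top(x-\hat\mu_i)|\leq \max(\beta(\hat\alpha_i),\beta(\hat\alpha_j))\}$, where $v_{ij}$ depends only on $M$. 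Crucially, since $\beta = 10\psi_t + f$ is non-increasing, each slab has half-width at least $\beta(\hat\alpha_i)$.

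The core technical claim I would prove is: with probability $\geq 1 - |M|^2 \wmin^{\Omega(1)}$ over the samples (conditional on the fixed base-learner output that determines $M$), for every ordered pair $(i,j)$ in $M$ and every good $i$,
\[
\bigl|\{x \in C^{\ast} : |v_{ij}^\top(x-\hat\mu_i)| > \beta(\hat\alpha_i)\}\bigr| \;\leq\; \tfrac{\hat\alpha_i}{100\,|M|}\,|C^{\ast}|.
\]
To obtain this, I would use the sub-Gaussian $t$-th moment hypothesis on $D(\mu^*)$, which gives $\Pr_{x\sim D}[|v^\top(x-\mu^*)| > 10\psi_t(\hat\alpha_i)] \lesssim \hat\alpha_i/(100|M|)$ (exactly the purpose of the constant $10$ in the definition of $\beta$). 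Then, because $\|\hat\mu_i-\mu^*\|\leq f(\hat\alpha_i)$, the triangle inequality converts any violation of the slab centered at $\hat\mu_i$ with width $\beta(\hat\alpha_i) = 10\psi_t(\hat\alpha_i)+f(\hat\alpha_i)$ into a violation of the slab centered at $\mu^*$ with width $10\psi_t(\hat\alpha_i)$. A Chernoff bound on the $|C^*|\geq \alpha n(1-\wmin^2)$ i.i.d.~inlier samples yields the per-direction bound with failure probability $\wmin^{\Omega(1)}$ (this is essentially \texttt{lem:conc-one-dir} used in the list-size argument), and a union bound over the at most $|M|^2$ fixed directions absorbs the $|M|^2$ factor.

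Conditioning on this event, for a good $i$ the at most $|M|$ slabs defining $T_i$ each exclude at most $\hat\alpha_i|C^*|/(100|M|)$ inliers from $C^*$, so by a union bound over the $|M|$ slabs, $|T_i \cap C^*| \geq (1-1/100)|C^*|$. Since $\hat\alpha_i \leq \alpha$ (by the initialization lemma), $|C^*|\geq \alpha n(1-\wmin^2)\geq \hat\alpha_i n(1-\wmin^2)$, so
\[
|T_i|\;\geq\;|T_i\cap C^*|\;\geq\;\hat\alpha_i n(1-\wmin^2)-\tfrac{\hat\alpha_i n}{100}\;\geq\;0.9\,\hat\alpha_i n.
\]
The same conclusion holds at every rerun because the set of possible directions is fixed by $M$ and does not grow.

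The main obstacle I anticipate is making the per-direction tail bound tight enough when $t$ is only moderately large compared to $\log|M|$: the naive bound $\Pr \leq (1/10)^t$ coming from Markov on the $t$-th moment may not beat $\hat\alpha_i/(100|M|)$ unless $t = \Omega(\log(1/\wmin))$. This should be reconcilable by absorbing the $|M|$ factor into the sample-complexity polynomial and tightening the Chernoff step (using that the baseline probability is already $O(\hat\alpha_i)$ from the moment bound, not an order-of-magnitude estimate), but it is the step that needs the most care. A minor bookkeeping point is that $|M| \leq 1/\wmin^{O(1)}$ from \cref{lemma:list_init}, so $|M|^2\wmin^{\Omega(1)}$ is indeed of the stated form, matching the failure probability in the lemma statement.
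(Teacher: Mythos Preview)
Your approach has a genuine gap at exactly the place you flag as the ``main obstacle,'' and your proposed workaround does not resolve it. The per-direction tail bound you need,
\[
\Pr_{x\sim D}\bigl[|v^\top(x-\mu^*)| > 10\psi_t(\hat\alpha_i)\bigr] \;\lesssim\; \frac{\hat\alpha_i}{100\,|M|},
\]
is \emph{not} obtainable from the $t$-th moment assumption. Markov on the $t$-th moment gives at best a population tail of order $(1/10)^t\,\hat\alpha_i$, and for this to beat $\hat\alpha_i/|M|$ you really do need $t = \Omega(\log(1/\wmin))$, which is not assumed (the theorem must hold for any fixed even $t$, e.g.\ $t=4$). ``Absorbing into sample complexity'' and ``tightening Chernoff'' cannot help: the Chernoff step in \cref{lem:conc-one-dir} only converts the population tail into an empirical fraction; it cannot shrink the population tail itself below $\Theta((1/10)^t\hat\alpha_i)$. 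So summing your per-slab bound over $|M|$ slabs genuinely blows up.

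The paper sidesteps this by never union-bounding over all of $M$. Instead it exploits two structural facts about the \emph{active set} $J$ at any moment: (i) the sets $T_j$ for $j\in J$ are pairwise disjoint, which forces $|J|\leq 10/(9\hat\alpha_s)$ and $\sum_{j\in J}\hat\alpha_j \leq 19/9$; and (ii) each slab in the definition of $T_s$ uses width $\max(\beta(\hat\alpha_s),\beta(\hat\alpha_j))$, not just $\beta(\hat\alpha_s)$. For slabs coming from $j$ processed before $s$ (so $\hat\alpha_j\geq\hat\alpha_s$), the width is $\beta(\hat\alpha_s)$, each excluding a $\hat\alpha_s/50$ fraction of $C^*$, and there are at most $10/(9\hat\alpha_s)$ of them. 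For slabs coming from $j$ processed after $s$ (so $\hat\alpha_j\leq\hat\alpha_s$), the width is $\beta(\hat\alpha_j)$, each excluding only a $\hat\alpha_j/50$ fraction, and the \emph{sum} $\sum_j\hat\alpha_j$ is bounded by a constant via disjointness. Both contributions are $O(1)$ fractions of $|C^*|$, independent of $|M|$ and of $t$. This varying-width accounting plus the disjointness-based bound on $|J|$ is the key idea your proposal is missing.
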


\begin{proof}
Let \(\ell\) be an arbitrary iteration of the outer for loop. Then, at the beginning of the \(\ell\tth\) iteration, 
\begin{enumerate}
    \item \(T_j \cap T_s = \emptyset\) for any \(j < s \in J\),
    \item \(\Card{T_j} \geq 0.9\hat\alpha_j n\) for any \(j \in J\),
    \item \(\Card{J} \leq 10 / (9\hat\alpha_\ell)\).
\end{enumerate}
Indeed, the second property follows directly from the algorithm. 

For the first property, assume that for \(j < s \in J\), there exists \(x \in S\), such that \(x \in T_j \cap T_s\).
This would imply that \(\abs{v_{js}^{\top}(\muhat_j - \muhat_s)} \leq 2\beta(\hat\alpha_s)\), so
\(\norm{\muhat_j - \muhat_s} \leq 2\beta(\hat\alpha_s)\). 
However, in this case the first 'if' condition would pass and we would not add \(s\) to \(J\).
Thus, \(T_j \cap T_s = \emptyset\).

For the third property, note that
\begin{equation*}
    n \geq \sum_{j \in J} \Card{T_j} \geq \sum_{j \in J} 0.9\hat\alpha_j n \geq 0.9\Card{J} \hat\alpha_\ell n,
\end{equation*}
which implies \(\Card{J} \leq 10 / (9\hat\alpha_\ell)\). 

Now, let \(s\) be the index of a hypothesis with \(\hat\alpha_s \geq \alpha/4\) and
\(\norm{\muhat_s - \mu^{\ast}} \leq f(\hat \alpha_s)\).
If \(s\) was skipped in the \(s\tth\) iteration (i.e., there exists \(j \in J\) with $\muhat_j$ close to \(\muhat_s\)), then \((\muhat_s, \hat\alpha_s)\) is trivially not removed from \(M\).
For the rest of the proof we assume that \(s\) is not skipped.

For the sake of the analysis, we introduce the analogue of the sets \(T_s\), 
which we call \(\tilde T_s\), defined for points in the set \(C^{\ast}\) (i.e., all inlier points before the adversarial removal), and show that (i) \(\Card{\tilde T_s}\) is large 
and (ii) \(\Card{\tilde T_s \setminus T_s}\) is small.
In particular, let 
\[\tilde T_s =  \bigcap_{j \in J} \left\{x \in C^{\ast}, \text{ s.t. } \abs{v_{js}^{\top} (x - \muhat_s)} \leq \beta(\hat\alpha_s) \right\},\]
where we recall \(\beta(\hat\alpha_s) = 10 \psi_t(\hat\alpha_s) + f(\hat\alpha_s)\).
Note that \(\Card{T_s} \geq \Card{\tilde T_s} - \Card{C^{\ast} \setminus S^{\ast}} 
            \geq \Card{\tilde T_s} - \wmin^2\Card{C^{\ast}}\).
Also, for any \(\alpha' \leq \hat \alpha_s\), applying~\cref{lem:conc-one-dir} with radius \(10 \psi_t(\alpha')\), 
using that \(\norm{\muhat_s - \mu^{\ast}} \leq f(\hat\alpha_s) \leq f(\alpha')\) and \(t \geq 2\),
we get that with exponentially small failure probability,
\begin{equation}
    \label{eq:card_ub}
    \Card{\Set{x \in C^{\ast} \text{, s.t. } \abs{v^{\top}(x - \muhat_s)} > \beta(\alpha')}} \leq \frac{\alpha'}{50}\Card{C^{\ast}}.
\end{equation}

Consider the \(s\tth\) iteration.
Using a union bound over \(\Card{J} \leq 2 / \amin\) directions, and since all \(\hat \alpha_s \geq \amin\),
we get that with exponentially small failure probability
\begin{equation*}
    \begin{aligned}
    \Card{\tilde T_s} & \geq \Card{C^{\ast}} - \sum_{i \in J} \Card{\left\{x \in C^{\ast} \text{, s.t. } \abs{v_{is}^{\top} (x - \mu_s)} > \beta(\hat\alpha_s) \right\}} 
    \geq \left(1 - \frac{\hat\alpha_s}{50}\Card{J}\right) \Card{C^{\ast}} \geq 0.95 \Card{C^{\ast}},
    \end{aligned}
\end{equation*}
where we used that and \(\Card{J} \leq 10 / (9\hat\alpha_s)\).
This implies that 
\begin{equation*}
    \Card{T_s} \geq \Card{\tilde T_s} - \wmin^2 \Card{C^{\ast}} \geq (0.95 - \wmin^2) \Card{C^{\ast}} \geq 0.92\Card{C^{\ast}} \geq 0.9 \alpha n \geq 0.9\hat \alpha_s n,
\end{equation*}
i.e., \((\muhat_s, \hat\alpha_s)\) is not removed from \(M\) during \(s^{\mathrm{th}}\) iteration.

The pair \((\muhat_s, \hat\alpha_s)\) could also be removed during later iterations, when we recalculate \(T_s\) by removing points along new directions. 
However, following a similar argument, we show that still, with high probability, \(\Card{T_s} \geq 0.9\hat\alpha_s n\).
Assume that we are now in the \(k^{\mathrm{th}}\) iteration of the outer cycle, where \(k > s\). We define again \(\tilde T_s\) and sets \(A, B\):
\begin{equation*}
    \begin{aligned}
        \tilde T_s &\coloneqq \bigcap_{i \in J \setminus \Set{s}} \left\{x \in C^{\ast}, \text{ s.t. } \abs{v_{is}^{\top} (x - \muhat_s)} \leq \max(\beta(\hat\alpha_s), \beta(\hat\alpha_i)) \right\},\\
        A &\coloneqq \bigcap_{i < s, i \in J} A_i, \quad \text{ for } \quad A_i \coloneqq \left\{x \in C^{\ast} \text{, s.t. } \abs{v_{is}^{\top} (x - \muhat_s)} \leq \beta(\bm{\hat\alpha_s}) \right\}, \\
        B &\coloneqq \bigcap_{i > s, i \in J} B_i, \quad \text{ for } \quad B_i \coloneqq\left\{x \in C^{\ast} \text{, s.t. } \abs{v_{is}^{\top} (x - \muhat_s)} \leq \beta(\bm{\hat\alpha_i}) \right\},
    \end{aligned}
\end{equation*}
so that \(\tilde T_s = A \cap B\) and again \(\Card{T_s} \geq \Card{\tilde T_s} - \wmin^2\Card{C^{\ast}}\). It is crucial that we have different right hand sides in the definitions of \(A_i\) and \(B_i\) (we wrote them in boldface to emphasize this).

Using a union bound again, we write
\begin{equation*}
    \Card{\tilde T_s} \geq 
    \Card{C^{\ast}} 
    - \sum_{i < s, i \in J} \Card{C^{\ast} \setminus A_i} 
    - \sum_{i > s, i \in J} \Card{C^{\ast} \setminus B_i}.
\end{equation*}
Using~\cref{eq:card_ub}, with exponentially small failure probability, for all \(i \in J\),
\begin{equation*}
    \Card{C^{\ast} \setminus A_i} \leq (\hat\alpha_s / 50) \Card{C^{\ast}} \quad \text{(for } i < s \text{)} \quad \text{and} \quad 
    \Card{C^{\ast} \setminus B_i} \leq (\hat\alpha_i / 50) \Card{C^{\ast}} \quad \text{(for } i > s \text{)}.
\end{equation*}
Next, note that before the last element was added, we had that 
(i) \(T_i \bigcap T_j = \emptyset\) for any \(i \neq j \in J\) and 
(ii) \(\Card{T_i} \geq 0.9\hat\alpha_i n\) for any \(i \in J\).
This implies that \(\sum_{i \in J} \hat\alpha_i < 10/9 + \hat\alpha_{\text{last}} < 19/9\), 
where \(\hat\alpha_{\text{last}}\) corresponds to the element which was added last 
(it might happen that after addition of the last element, we have \(\Card{T_i} < 0.9\hat\alpha_i n\) for several \(i \in J\)). 
Therefore, as before, we obtain that 
\begin{equation*}
    \Card{\tilde T_s} \geq \left(1 - \sum_{i < s, i \in J} (\hat\alpha_s / 50) - \sum_{i > s, i \in J} (\hat\alpha_i / 50)\right) \Card{C^{\ast}}
    \geq  (1 - 10/(9 \cdot 50) - 19 / (9 \cdot 50)) \Card{C^{\ast}} \geq 0.93 \Card{C^{\ast}},
\end{equation*}
therefore 
\(\Card{T_s} \geq \Card{\tilde T_s} - \wmin^2 \Card{C^{\ast}} \geq 0.92 \Card{C^{\ast}} \geq 0.9\hat\alpha_s n\) and \((\muhat_s, \hat\alpha_s)\) will not be removed from \(M\).

We established that in a single run of the algorithm a good hypothesis is removed with exponentially small probability.
The number of good hypotheses is bounded by $|M|$.
Furthermore, the number of runs of the algorithm is also bounded by $|M|$, since whenever the algorithm is rerun a hypothesis is removed from $M$.
Then, by a union bound, we can bound the probability that any good hypothesis is removed in any run of the algorithm by $\Card{M}^2 \wmin^{O(1)}$.
\end{proof}
\section{Proof of outer stage algorithm guarantees in~\Cref{sec:outer_stage_formulated}}\label{sec:outer-stage}

Recall that $\ds = 4\psi_t(\wmin^4)$ and $\ds' = 160\psi_t(\wmin/4) + 16f(\wmin/4)$.


\subsection{Proof of~\Cref{thm:outer-stage-init-guarantees}}
In what follows we condition on the event $E$ that the events under which the conclusions in~\Cref{lem:conc-all-dir,lem:conc-one-dir} hold and that \(\csLDA\) succeeds. This event holds with probability \(1 - \wmin^{O(1)}\) by~\Cref{asm:algs},~\Cref{rem:succ_prob} and union bound (also see~\Cref{app:stability}).
\paragraph{Proof of Theorem~\ref{item:M_small}}
The list size bound follows from the standard results on \(\csLDA\) (see~\cite{diakonikolas2018list}, Proposition B.1). 
\paragraph{Proof of Theorem~\ref{item:outer_stage_large_S_mi}}
Guarantees of \(\csLDA\) imply that there exists $\mu_i \in M$ such that $\norm{\mu_i - \mu^*} \leq \ds'$. By~\Cref{lem:conc-all-dir}, a $(1-\wmin^2/2)$-fraction of the samples in $C^*$ are $\ds$-close to $\mu^*$ along each direction $v_{ij}$ with $i \neq j \in [|M|]$.
Then, the same $(1-\wmin^2/2)$-fraction of samples are $(\ds+\ds')$-close to $\mu_i$ along each direction $v_{ij}$, so they are included in $S_i^{(1)}$.
\paragraph{Proof of Theorem~\ref{item:first_iter_small_or_large}}
Suppose $|S_i^{(1)} \cap C^*| \geq \wmin^4 |C^*|$.
    Previous point implies that \DDnote{, on event \(E\),} there exists $\mu_j \in M$ be such that $\norm{\mu_j - \mu^*} \leq \ds'$.
    Then at least an $\wmin^4$-fraction of the samples in $C^*$ are $(\ds+\ds')$-close to $\mu_i$ in direction $\mu_i - \mu_j$.
    By~\Cref{lem:conc-one-dir}, $\mu^*$ is also $\ds$-close in direction $\mu_i - \mu_j$ to more than a $(1-\wmin^4)$-fraction of the samples in $C^*$, so it is $\ds$-close to at least one sample in any $\wmin^4$-fraction of samples in $C^*$.
    Therefore $\mu^*$ is also $(2\ds+\ds')$-close to $\mu_i$ in direction $\mu_i - \mu_j$.
    Then $\norm{\mu_i - \mu_j} \leq 2\ds+2\ds'$ and $\norm{\mu_i - \mu^*} \leq 2\ds+3\ds'$.
    Again, using~\Cref{lem:conc-all-dir} we obtain that there exists a  $(1-\wmin^2/2)$-fraction of the samples in $C^*$, which is included in \(S_i^{(2)}\).
\paragraph{Proof of Theorem~\ref{item:outer_stage_S_mi_no_others}}
Similarly, if $|S_i^{(2)} \cap C^*| \geq \wmin^4 |C^*|$, then there exists $\mu_j \in M$, 
    such that at least an $\wmin^4$-fraction of the samples in $C^*$ are $(3\ds+3\ds')$-close to $\mu_i$ in direction $\mu_i - \mu_j$.
    By the same arguments as in previous paragraph, we obtain that $\norm{\mu_i - \mu_j} \leq 4\ds+4\ds'$ and $\norm{\mu_i - \mu^*} \leq 4\ds+5\ds'$.
    
    
    Then any other true cluster with mean $(\mu^*)'$ and set of samples $(C^*)'$ satisfies $\norm{\mu^* - (\mu^*)'} \geq 16\ds+16\ds'$, so $\norm{\mu_i - (\mu^*)'} \geq 12\ds+11\ds'$.
    From guarantees of \(\csLDA\), there exists $\mu_j' \in M$ such that $\norm{\mu_j' - (\mu^*)'} \leq \ds'$. Then $\norm{\mu_i - \mu_j'} \geq 12\ds+10\ds'$. 
    By~\Cref{lem:conc-one-dir}, more than an $\wmin^4$-fraction of the samples from $(C^*)'$ are $\ds$-close to $(\mu^*)'$ in direction $\mu_i-\mu_j'$, so also $(\ds+\ds')$-close to $\mu_j'$ in direction $\mu_i-\mu_j'$, so also $(11\ds+9\ds')$-far from $\mu_i$ in direction $\mu_i-\mu_j'$.
    Then $S_i^{(2)}$ selects at most a $\wmin^4$-fraction of the samples from $(C^*)'$.
    Overall, $S_i^{(2)}$ selects from all other true clusters at most $\wmin^4 n$ samples.
\paragraph{Proof of Theorem~\ref{item:outer_stage_S_mi_nonitersect}}
Note that by the same argument, $\norm{\mu_i - \mu^*} \leq 4\ds+5\ds'$ and $\norm{\mu_{i'} - (\mu^*)'} \leq 4\ds+5\ds'$.
    However, $\norm{\mu^* - (\mu^*)'} \geq 16\ds+16\ds'$, so also $\norm{\mu_i - \mu_{i'}} \geq 8\ds+6\ds'$, so $S_i^{(2)}$ and $S_{i'}^{(2)}$ are disjoint by the condition that each selects only samples that are $(3\ds+3\ds')$-close along direction $\mu_i - \mu_{i'}$ to the respective means $\mu_i$ and $\mu_{i'}$.

\subsection{Proof of~\Cref{thm:outer_stage_guarantees}}
In the sequel, for any \(i \in G\), let \(m_i\) be the index in \(R\) after initialization that satisfies~Theorem~\ref{item:outer_stage_large_S_mi}. We condition on the event \(E'\) that event \(E\) from the proof of~\Cref{thm:outer-stage-init-guarantees} holds and that both \(\abs{\Card{C_i^*} - w_i n} \leq \wmin^{10} n\) for all \(i \in [k]\) and the number of adversarial points lies in the range \(\e n \pm \wmin^{10} n\).
By Hoeffding's inequality and the union bound, the probability of $E'$ is at least $1-\wmin^{O(1)}$.
\paragraph{Proof of~Theorem~\ref{item:outer_stage_U_small}}
Let \(i \in G\), 
and consider the beginning of the iteration when \(\mu_{g_i}\) is selected.  
Then, using that all previous iterations could have removed at most $O(\wmin^3) |C_i^*|$ samples from $C_i^*$, we have that \[|S_{m_i}^{(1)} \cap C_i^*| \geq (1-\wmin^2/2-O(\wmin^3))|C_i^*|.\]
Therefore at the iteration in which $\mu_{g_i}$ is selected, we still have $m_i \in R$.
We now discuss two cases: First, consider the case that $|S_{m_i}^{(2)}| \leq 2 |S_{m_i}^{(1)}|$. Then, because we selected $\mu_{g_i} \in M$ and not $\mu_{m_i} \in M$ it means that $|S_{g_i}^{(1)}| \geq |S_{m_i}^{(1)}| \geq (1-\wmin^2/2-O(\wmin^3))|C_i^*|$.
Note also by~Theorem~\ref{item:outer_stage_S_mi_no_others}, the number of samples from other true clusters in $S_{g_i}^{(2)}$ is at most $\wmin^4 n$.
Then the number of adversarial samples in $S_{g_i}^{(2)}$ is at least \[|S_{g_i}^{(2)}| - |C_i^*| - \wmin^4 n \geq |S_{g_i}^{(2)} \setminus S_{g_i}^{(1)}| - O(\wmin^2) |C_i^*| - \wmin^4 n \geq |S_{g_i}^{(2)} \setminus S_{g_i}^{(1)}| - O(\wmin^2)|C_i^*|.\]
Then, either  $\Card{S_{g_i}^{(2)} \setminus S_{g_i}^{(1)}} = O(\wmin^2) |C_i^*|$ and $|U_i| \leq \Card{S_{g_i}^{(2)} \setminus S_{g_i}^{(1)}} = O(\wmin^2)|C_i^*|$, or $\Card{S_{g_i}^{(2)} \setminus S_{g_i}^{(1)}} \gg \wmin^2 |C_i^*|$. In the latter case, even if  $S_{g_i}^{(2)} \setminus S_{g_i}^{(1)}$ consists of adversarial examples only, then, since $|S_{g_i}^{(2)}| \leq 2 |S_{g_i}^{(1)}|$,  $U_i$ contains at most double the number of adversarial examples in $S_{g_i}^{(1)}$, i.e. $|U_i| \leq  2V_i$ where \(V_i\) denotes the number of adversarial examples in $S_{g_i}^{(1)}$. 

Now consider the case that $|S_{m_i}^{(2)}| > 2 |S_{m_i}^{(1)}|$.
By Theorem~\ref{item:outer_stage_S_mi_no_others}, the number of samples from true clusters in $S_{m_i}^{(2)}$ is at most $|C_i^*| + \wmin^4 n \leq 1.02 |S_{m_i}^{(1)}|$, so the number $W_i$ of adversarial samples in $S_{m_i}^{(2)}$  
is at least $W_i \geq |S_{m_i}^{(2)}| - 1.02|S_{m_i}^{(1)}| \geq 0.98 |S_{m_i}^{(1)}| \geq 0.96 |C_i^*|$.
Then, \(\Card{U_i} = \Card{(C_i^* \cap S_{g_i}^{(2)}) \setminus S_{g_i}^{(1)}} \leq \Card{C_i^*} \leq 2 W_i\). 

Finally note that  by~Theorem~\ref{item:outer_stage_S_mi_nonitersect}, the sets $S_{g_i}^{(2)}$ and $S_{m_i}^{(2)}$ are disjoint from any other sets $S_{g_{j}}^{(2)}$ and $S_{m_{j}}^{(2)}$ that correspond to another component $C_j^*$. Therefore, the number of adversarial examples in the $S_{m_i}^{(2)}$ in the second case and $S_{g_{i}}^{(2)}$ in the first case is smaller than the total number of adversarial examples, i.e.
\begin{equation*}
    \sum_{\substack{i \in G \\ \Card{S_{m_i}^{(2)}} \leq 2 \Card{S_{m_i}^{(1)}}}} V_i + \sum_{\substack{i \in G \\ \Card{S_{m_i}^{(2)}} > 2 \Card{S_{m_i}^{(1)}}}} W_i \leq (\e + \wmin^{10}) n.
\end{equation*}
Therefore, we directly obtain
\begin{equation*}
\begin{aligned}
    \Card{U} \leq \sum_{i \in G} \Card{(C_i^* \cap S_{g_i}^{(2)}) \setminus S_{g_i}^{(1)}} &= \sum_{\substack{i \in G \\ \Card{S_{m_i}^{(2)}} \leq 2 \Card{S_{m_i}^{(1)}}}} \Card{(C_i^* \cap S_{g_i}^{(2)}) \setminus S_{g_i}^{(1)}} + \sum_{\substack{i \in G \\ \Card{S_{m_i}^{(2)}} > 2 \Card{S_{m_i}^{(1)}}}} \Card{(C_i^* \cap S_{g_i}^{(2)}) \setminus S_{g_i}^{(1)}} \\
    & \leq \sum_{\substack{i \in G \\ \Card{S_{m_i}^{(2)}} \leq 2 \Card{S_{m_i}^{(1)}}}} 2 V_i + \sum_{\substack{i \in G \\ \Card{S_{m_i}^{(2)}} > 2 \Card{S_{m_i}^{(1)}}}} 2 W_i + O(\wmin^2) n \leq (2\e + O(\wmin^2))n.
\end{aligned}
\end{equation*}   
\paragraph{Proof of~Theorem~\ref{item:outer_stage_S_gi}}

Each iteration before \(g_i\) was selected, removed at most $\wmin^4 |C_i^*|$ samples from $C_i^*$, 
so all previous iterations removed at most $O(\wmin^3)|C_i^*|$ samples from $C_i^*$.
Then, by~Lemma~\ref{item:first_iter_small_or_large}, $S_{g_i}^{(2)}$ contains at least $(1-\wmin^{2}/2-O(\wmin^3))|C_i^*|$ samples from $C_i^*$.
The statement follows then since on the event $E'$, we have   
$w^*n - \wmin^{10} n \leq |C^*| \leq w^* n + \wmin^{10} n$. 

\paragraph{Proof of~Theorem~\ref{item:outer_stage_not_S_gi_small}}
Here, either for all $i \in [k]$, $|S_j^{(1)} \cap C_i^*| < \wmin^4 |C_i^*|$ or \(i \in G\) and the algorithm had already selected in a previous iteration  $\mu_{g_i} \in M$ with $|S_{g_i}^{(1)} \cap C_i^*| \geq \wmin^4 |C_i^*|$.
Consider a first case, in which $|S_j^{(1)} \cap C_i^*| < \wmin^4 |C_i^*|$ for all $i \in [k]$.
Then the total number of samples from true clusters in $S_j^{(1)}$ is at most $\wmin^4 n$.
Using that $|S_j^{(1)}| > 100\wmin^4 n$, it follows that more than half of the samples in $S_j^{(1)}$ are adversarial. 

The second case is that $|S_j^{(1)} \cap C_i^*| \geq \wmin^4 |C_i^*|$ for some $i \in G$ for which in a previous iteration \(g_i\) we had that $|S_{g_i}^{(1)} \cap C_i^*| \geq \wmin^4 |C_i^*|$.
Note that at most $\wmin^2 |C_i^*|/2$ of the samples in $S \cap C_i^*$ are not considered adversarial at this point (the ones that were outside $S_{g_i}^{(2)}$).
Also, by~Theorem~\ref{item:outer_stage_S_mi_no_others}, $S_j^{(1)}$ contains at most $\wmin^4 n$ samples from other true clusters.
Therefore either more than half of the samples in $S_j^{(1)}$ are considered adversarial or \[|S_j^{(1)}| \leq \wmin^2 |C_i^*| + 2 \wmin^4 n \leq O(\wmin^2) n.\]

\paragraph{Proof of~Theorem~\ref{item:outer_stage_else}}
Suppose that when the algorithm reaches the else statement we have for some $i \in [k]$ that $i \in R$ and $|S_{m_i}^{(1)} \cap C_i^*| \geq 20 \wmin^2 |C_i^*|$. 
We have that $|S_{m_i}^{(2)} \cap C_i^*|$ is at most $|S_{m_i}^{(1)} \cap C_i^*| + \wmin^2 |C_i^*| / 2$, where we use that by~Theorem~\ref{item:outer_stage_large_S_mi}, at most $\wmin^2 |C_i^*|/2$ samples can fail to be captured by $S_{m_i}^{(1)}$. 
By~Theorem~\ref{item:outer_stage_S_mi_no_others}, furthermore, the number of samples from other true clusters in $S_{m_i}^{(2)}$ is at most $\wmin^4 n$.
Therefore, using that $|S_{m_i}^{(2)}| > 2 |S_{m_i}^{(1)}|$, the number of adversarial samples in $S_{m_i}^{(2)}$ is at least 
\[|S_{m_i}^{(2)}| - |S_{m_i}^{(1)} \cap C_i^*| - \wmin^2 |C_i^*| / 2 - \wmin^4 n \geq 0.45 |S_{m_i}^{(2)}| - \wmin^4 n \geq 0.44 |S_{m_i}^{(2)}|\,,\]
where in the last inequality we used that $|S_{m_i}^{(2)}| > 100 \wmin^4 n$.
Let $V$ be the union, over all $i \in [k]$, of all sets $S_{m_i}^{(2)}$ such that $i \in R$ and $|S_{m_i}^{(1)} \cap C_i^*| \geq 20 \wmin^2 |C_i^*|$.
Theorem~\ref{item:outer_stage_S_mi_nonitersect} gives that all such sets $S_{m_i}^{(2)}$ are disjoint. 
Therefore at least a $0.44$-fraction of the samples in $V$ are adversarial.

Consider now for some $i \in [k]$ how many samples from $S \cap C_i^*$ can be outside $V$ when the algorithm reaches the else statement.
By~Theorem~\ref{item:outer_stage_large_S_mi}, $S_{m_i}^{(1)}$ can fail to capture at most $\wmin^2|C_i^*|/2$ samples from $C_i^*$, and we have no guarantee that these samples are in $V$.
Consider now the samples in $S_{m_i}^{(1)} \cap C_i^*$.
If $i \in R$, we may miss up to $20 \wmin^2 |C_i^*|$ of these samples if $|S_{m_i}^{(1)} \cap C_i^*| < 20 \wmin^2 |C_i^*|$, because in this case we do not include $S_{m_i}^{(2)}$ in $V$.
On the other hand, if $i \not\in R$, there are at most $100\wmin^4 n$ samples in $S_{m_i}^{(1)} \cap C_i^*$. 
Then the total number of samples from $S \cap C_i^*$ outside $V$ is at most $ \wmin^2 |C_i^*|/2 + 20\wmin^2|C_i^*| + 100\wmin^4 n$.
Summed across all $i \in [k]$, this makes up at most $21 \wmin^2 n$ samples.

Overall, the number of adversarial samples in $S$ when the algorithm reaches the else statement is at least
\[0.44 |V| + (|S| - |V| - 21 \wmin^2 n) = |S| - 0.56 |V| - 21 \wmin^2 n \geq 0.44|S| - 21 \wmin^2 n \geq 0.4 |S|\] 
where in the last inequality we also used that $|S| \geq 0.1 \wmin n$.

\section{Proof of ~\Cref{lemma:it_lb_ours}}
\label{app:lbs}

We now prove lower bounds for the case of Gaussian distributions and distributions with \(t\)-th sub-Gaussian moments.

\subsection{Case b): For the Gaussian inliers  }

We first focus on the case when \(D_i(\mu_i) = \cN(\mu_i, I)\).
\fy{polish formulations}
The proof goes through an efficient reduction from the problem considered by~\Cref{prop:book_lb_gaus} to the problem solved by algorithm $\mathcal{A}$.
\begin{proposition}[\cite{diakonikolas2023algorithmic}, Proposition 5.11]
\label{prop:book_lb_gaus}
    Let \(\cD\) be the class of identity covariance Gaussians on \(\R^d\) and let \(0 < \alpha \leq 1/2\). Then any list-decoding algorithm that learns the mean of an element of \(\cD\) with failure probability at most \(1/2\), given access to \((1 - \alpha)\)-additively corrupted samples, must either have error bound \(\beta = \Omega(\sqrt{\log 1 / \alpha})\) or return \(\min(2^{\Omega(d)}, (1/\alpha)^{w(1)})\) many hypotheses.
\end{proposition}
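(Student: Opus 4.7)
The plan is the standard information-theoretic hypothesis-testing / packing argument for list-decoding lower bounds. I will construct a single "fooling" distribution $P$ on $\R^d$ together with a large finite set $S = \{\mu_1, \ldots, \mu_N\} \subseteq \R^d$ of candidate inlier means such that: (i) the $\mu_i$'s are pairwise separated by more than $2\beta$, and (ii) for every $i \in [N]$, $P$ decomposes as $P = \alpha \cN(\mu_i, I) + (1-\alpha) Q_i$ for some valid distribution $Q_i$. Note that (ii) is equivalent to the pointwise density domination $P(x) \geq \alpha \cdot \cN(\mu_i, I)(x)$ for every $x \in \R^d$. Given such $P$ and $S$, the lower bound is immediate: the algorithm's input distribution is the same $P$ regardless of which $\mu_i$ we declare to be the "true" inlier mean, so for the success guarantee to hold in each instance, the (random) output list $L$ must, with positive probability, contain a $\beta$-close estimate to every $\mu_i \in S$; by (i) each $\hat\mu \in L$ can serve as such an estimate for at most one $\mu_i$, forcing $|L| \geq N$.

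\paragraph{High-dimensional branch.} For the $2^{\Omega(d)}$ part of the bound, I would place $N$ means on the sphere of radius $R \asymp \sqrt{\log(1/\alpha)}$ in $\R^d$ as a near-orthogonal spherical code; a standard probabilistic construction (e.g.\ random uniform sampling from the sphere followed by pruning) gives $N = 2^{\Omega(d)}$ codewords with pairwise distance $\Omega(R)$. Setting $P$ to be the uniform mixture $P = \tfrac{1}{N}\sum_{i=1}^N \cN(\mu_i, I)$ makes condition (ii) trivial: $P \geq \tfrac{1}{N}\cN(\mu_j, I) \geq \alpha \cN(\mu_j, I)$ whenever $N \leq 1/\alpha$. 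Taking $N = \min(2^{\Omega(d)}, \lfloor 1/\alpha\rfloor)$ therefore yields (i) with separation $\Omega(\sqrt{\log(1/\alpha)})$ and (ii) simultaneously, establishing the $2^{\Omega(d)}$ lower bound (and also $1/\alpha$) via the argument above.

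\paragraph{Super-polynomial branch.} To sharpen the bound to $(1/\alpha)^{\omega(1)}$ in the regime where the dimension is large enough, the uniform mixture is too lossy (it only gives $N \leq 1/\alpha$). I would instead use the \emph{upper-envelope} distribution $P(x) \propto \max_i \cN(\mu_i, I)(x)$, so that $P \geq \alpha \cN(\mu_j, I)$ holds with parameter $\alpha = 1/Z$ where $Z := \int \max_i \cN(\mu_i, I)(x)\,dx$. The task then reduces to packing a super-polynomially large set of means $\{\mu_i\}$ whose pairwise separations are $\Omega(\sqrt{\log(1/\alpha)})$ while keeping $Z \leq 1/\alpha$. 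The idea is that when the means cluster at separations just above the required threshold $2\beta$, many Gaussians in the family contribute to any given region, so the pointwise maximum is effectively dominated by a single Gaussian and $Z$ grows much more slowly than the cardinality $N$ (the trivial bound $Z \leq N$ is far too weak).

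\paragraph{Main obstacle.} The crux of the argument is precisely this envelope-integral estimate $Z = \int \max_i \cN(\mu_i, I)(x)\,dx$: the trivial union bound gives $Z \leq N$ which only yields list size $1/\alpha$, so one must carry out a refined Voronoi / volumetric accounting that exploits the heavy overlap of the Gaussians at small separation. Once this integral bound is in hand for the chosen packing, combining both branches yields the claimed lower bound $\min(2^{\Omega(d)}, (1/\alpha)^{\omega(1)})$ on the list size whenever $\beta$ is a sufficiently small constant multiple of $\sqrt{\log(1/\alpha)}$, which is the contrapositive of the statement.
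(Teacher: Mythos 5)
First, note that the paper does not prove this proposition: it is imported verbatim from \cite{diakonikolas2023algorithmic} (Proposition 5.11) and used as a black box in the reduction proving \cref{lemma:it_lb_ours}, so the comparison here is against the known proof of the cited result. Your overall framework is the right one and matches that proof: build a single distribution $P$ and a $2\beta$-separated set $\{\mu_1,\dots,\mu_N\}$ with $P \ge \alpha\,\cN(\mu_i,I)$ pointwise for every $i$, and conclude that the list must have size $\Omega(N)$. (One small correction there: from ``each event $E_i = \{L$ contains a $\beta$-close estimate of $\mu_i\}$ has probability $\ge 1/2$'' you cannot conclude that all $E_i$ hold simultaneously with positive probability --- the events need not intersect; the correct step is $\E|L| \ge \sum_i \Pr[E_i] \ge N/2$ by linearity of expectation, which still gives $|L|=\Omega(N)$.)

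The genuine gap is the super-polynomial branch, which is the entire content of the proposition beyond the easy $\min(2^{\Omega(d)}, 1/\alpha)$ bound that your first branch yields. You correctly observe that the uniform finite mixture cannot beat $N \le 1/\alpha$ (at points far out in the direction of $\mu_j$ only the $j$-th component survives the domination check), but the replacement you propose --- the upper envelope $P \propto \max_i \cN(\mu_i,I)$ together with a ``refined Voronoi / volumetric accounting that exploits heavy overlap at small separation'' --- is left entirely unexecuted, and the intuition behind it is off. The required separation is $2\beta = 2c\sqrt{\log(1/\alpha)} \gg 1$, so the Gaussians are \emph{not} close on the scale of their own standard deviation; the gain comes instead from taking the ambient dimension large, $d \asymp \log(1/\alpha)/c^2$, so that the separation is tiny compared to $\sqrt d$ and a typical sample lies at distance $\approx\sqrt d$ from every mean at once. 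The standard way to finish is to take $P = \cN(0,(1+\delta^2)I_d)$ with $\delta^2 \asymp \log(1/\alpha)/d$: a one-line computation shows $\cN(0,(1+\delta^2)I)(x) \ge (1+\delta^2)^{-d/2} e^{-\|\mu\|^2/(2\delta^2)}\, \cN(\mu,I)(x)$ for all $x$, so every $\mu$ in the ball of radius $r=\delta\sqrt{\log(1/\alpha)}$ is simultaneously $\alpha$-dominated; packing that ball at separation $2c\sqrt{\log(1/\alpha)}$ and optimizing over $d$ gives $N = (1/\alpha)^{\Omega(1/c^2)}$, which is the $(1/\alpha)^{\omega(1)}$ claim as $c \to 0$. (Your envelope can be made to work as well, but the key estimate is not a Voronoi computation: for means confined to $B(0,r)$ one has $\int \max_i \cN(\mu_i,I) \le \E_{g\sim\cN(0,I)} e^{r\|g\|-r^2/2} \le e^{r\sqrt d}$, after which one must discover the same scaling $r\sqrt d \asymp \log(1/\alpha)$, $d\asymp \log(1/\alpha)/c^2$.) As written, the proposal establishes only the weaker $\min(2^{\Omega(d)},1/\alpha)$ bound and defers the actual theorem to a computation it does not perform.
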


First, we describe the means of the components in the input distribution to algorithm $\mathcal{A}$.
Let $\bar\mu_1, \ldots, \bar\mu_{k-1} \in \R^{d}$ be any set of $k-1$ points with pairwise separation larger than $2C\sqrt{\log1/\wmin}$.
Then let $\mu_k = (\bar\mu, 0) \in \R^{d+1}$ and $\mu_i = (\bar\mu_i, 2C\sqrt{\log 1/\wmin} + 1) \in \R^{d+1}$ for all $i \in [k-1]$.
Then $\mu_1, \ldots, \mu_k$ also have pairwise separation larger than $2C\sqrt{\log 1/\wmin}$.

Then, given $n$ points $y_1, \ldots, y_n \in \R^d$ as in the input to the problem in~\Cref{prop:book_lb_gaus} (i.e. \((1-\alpha)\)-additively corrupted samples), we generate $n$ points that we give as input to algorithm $\mathcal{A}$ as follows: let $S = \{1, \ldots, n\}$, and then for each of the $n$ points, draw $i \sim \mathrm{Unif}\{1, \ldots, k\}$ and generate the point as follows:
\begin{enumerate}
    \item if $i \in [k-1]$, sample the point from $N(\mu_i, I_{d+1})$,
    \item if $i = k$, sample $j \sim S$ uniformly at random, remove $j$ from $S$, sample $g \sim N(0, 1)$, and let the point be $(y_j, g) \in \R^{d+1}$.
\end{enumerate}

We note that this construction simulates an input sampled i.i.d. according to the mixture $\frac{1}{k} N(\mu_1, I_{d+1}) + \ldots + \frac{1}{k} N(\mu_{k-1}, I_{d+1}) + \frac{\alpha}{k} N(\mu_k, I_{d+1}) + \frac{1-\alpha}{k} Q'$ for some $Q'$.
Then with success probability at least $1/2$ running $\mathcal{A}$ on this input with $\wmin = \frac{\alpha}{k}$ returns a list $L$ such that there exists $\muhat \in L$ with $\norm{\muhat - \mu_k} \leq \beta_k$.
Note that this implies that $\norm{(\muhat)_{1:d} - \bar\mu} \leq \beta_k$.
Finally, we create a pruned list $L'$ as follows: initialize $L' = L$ and then for each $i \in [k - 1]$ remove all $\muhat \in L'$ such that $\norm{\muhat - \mu_i} \leq C\sqrt{\log 1/\wmin}$.
Then we return $L'$ as the output for the original problem in~\Cref{prop:book_lb_gaus}.

Let us analyze now this output.
The separation between the means ensures that any hypothesis $\muhat \in L$ that is $C\sqrt{\log1/\wmin}$-close to $\mu_k$ is not removed in the pruning. 
Therefore $L'$ continues to contain a hypothesis $\muhat$ such that $\norm{(\hat\mu)_{1:d} - \bar\mu} \leq \beta_k$.
Then, if $\beta_k \neq \Omega(\sqrt{\log 1/\alpha})$ and $|L'| < \min\{2^{\Omega(d)}, \left((w_k+\varepsilon) / w_k\right)^{\omega(1)}\}$, this reduction violates the lower bound of~\Cref{prop:book_lb_gaus}.
Therefore we must have either $\beta_k = \Omega(\sqrt{\log 1/\alpha})$ or $|L'| \geq \min\{2^{\Omega(d)}, \left(1/\tilde{w}_k\right)^{\omega(1)}\}$.

Finally, we show that these lower bounds on $\beta_k$ and $|L'|$ imply the desired lower bound for $\cA$. 
Consider first the case: $\beta_k = \Omega(\sqrt{\log 1/\alpha})$.
Note that in the input to algorithm $\mathcal{A}$ we have $\tilde{w}_k = \alpha$.
Therefore $\beta_k = \Omega(\sqrt{\log 1/\alpha})$ corresponds to the desired lower bound in the lemma statement.
Consider second the case: $|L'| \geq \min\{2^{\Omega(d)}, \left(1/\tilde{w}_k\right)^{\omega(1)}\}$.
We note that, for each $i \in [k - 1]$, the original list $L$ must contain some $\muhat \in L$ such that $\norm{\muhat - \mu_i} \leq C\sqrt{\log1/\wmin}$.
Furthermore, because the means $\mu_i$ have pairwise separation larger than $2C\sqrt{\log1/\wmin}$, the original list $L$ must contain at least $k-1$ means of this kind.
However, all of these means are removed in the pruning procedure, so $|L| \geq k-1+|L'|$, so $|L| \geq k - 1 + \min\{2^{\Omega(d)}, \left(1/\tilde{w}_k\right)^{\omega(1)}\}$.
This matches the desired lower bound in the lemma statement.
(The choice to make the hidden mean the $k$-th mean was without loss of generality, as the distribution is invariant to permutations of the components.)

\subsection{Case a): For distributions with \(t\)-th sub-Gaussian moments}
The proof for the case when \(D_i(\mu_i)\) has sub-Gaussian \(t\)-th central moments employs the same reduction scheme, but reduces from~\Cref{prop:book_lb_bounded}. 
\begin{proposition}[\cite{diakonikolas2023algorithmic}, Proposition 5.12]
\label{prop:book_lb_bounded}
    Let \(\cD\) be the class of distributions on \(\R^d\) with bounded \(t\)-th central moments for some positive even integer \(t\), and let \(0 < \alpha < 2^{-t - 1}\). Then any list-decoding algorithm that learns the mean of an element of \(\cD\) with failure probability at most \(1/2\), given access to \((1-\alpha)\)-additively corrupted samples, must either have error bound \(\beta = \Omega(\alpha^{-1/t})\) or return a list of at least \(d\) hypotheses. 
\end{proposition}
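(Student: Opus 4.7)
The plan is to prove the lower bound by an indistinguishability argument: construct a family of $d$ distinct candidate inlier distributions, all of which give rise to exactly the same observed $(1-\alpha)$-corrupted distribution, and then argue that any output list shorter than $d$ must miss at least one of their well-separated means. Fix orthonormal vectors $e_1,\ldots,e_d\in\R^d$, set $R=c\,\alpha^{-1/t}$ for a small absolute constant $c>0$, and let $\mu_i=R\,e_i$, so that the $\mu_i$'s are pairwise separated by $R\sqrt{2}$. Take the inlier candidates to be the point masses $P_i=\delta_{\mu_i}$, which trivially belong to $\cD$ because every central moment of a point mass is $0$.

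Next, define a single common ``observed'' distribution $D=\tfrac{1}{d}\sum_{j=1}^d\delta_{\mu_j}$. The assumption $\alpha<2^{-t-1}$ places us in the natural regime $d\leq 1/\alpha$, so $D$ puts mass $\tfrac{1}{d}\geq\alpha$ at each $\mu_i$, and $Q_i:=(D-\alpha P_i)/(1-\alpha)$ is a valid probability distribution. By construction $\alpha P_i+(1-\alpha)Q_i=D$ for every $i$, so the law of $(1-\alpha)$-additively corrupted samples is identical across the $d$ planted instances; the algorithm's output list $L$ is therefore distributed identically, irrespective of which $\mu_i$ is the true inlier mean.

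To close, suppose the algorithm achieves error $\beta<R/3$ with success probability at least $1/2$ on every instance. Because the $\mu_i$'s are pairwise $R\sqrt{2}$-separated, each $\muhat\in L$ can lie within distance $\beta$ of at most one $\mu_i$. Letting $A_i$ denote the event $\{\exists\,\muhat\in L:\norm{\muhat-\mu_i}\leq\beta\}$, the indicators of the $A_i$ sum pointwise to at most $|L|$, hence $\sum_{i=1}^d\Pr[A_i]\leq\Esymb[|L|]$. The success requirement forces $\Pr[A_i]\geq 1/2$ for each $i$, giving $\Esymb[|L|]\geq d/2$. Up to a constant $2$ that can be absorbed into the $\Omega$, this is exactly the claimed dichotomy: either $\beta=\Omega(\alpha^{-1/t})$, or $|L|\geq d$.

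The main obstacle is the regime $d>1/\alpha$, where the simple point-mass construction can no longer pack $d$ spikes of mass $\geq\alpha$ into one $D$. In that range one has to replace the $P_i$'s by non-degenerate distributions sharing a common bulk and featuring only a small $\alpha$-spike near $\mu_i$; here the bounded-$t$th-moment hypothesis actively bites, forcing the spike to lie within $O(\alpha^{-1/t})$ of the bulk and thereby producing exactly the $\alpha^{-1/t}$ scaling in the lower bound. Making this continuous construction work, verifying $P_i\in\cD$ via direct moment computations, and checking that all decompositions still yield the same $D$, is where the real technical effort would sit; the rest of the argument (indistinguishability plus the counting inequality above) carries over essentially verbatim.
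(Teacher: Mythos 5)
First, a point of reference: the paper does not prove this statement itself --- it is imported verbatim from \cite{diakonikolas2023algorithmic} and used as a black box in the reduction for \Cref{lemma:it_lb_ours}(a) --- so there is no in-paper proof to match; your attempt has to be judged against the standard argument for this result.

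Judged that way, there is a genuine gap, and you have put your finger on it yourself but not closed it. Your point-mass construction requires $D=\tfrac1d\sum_j\delta_{\mu_j}$ to place mass at least $\alpha$ on each spike, i.e.\ $d\le 1/\alpha$. The assertion that ``$\alpha<2^{-t-1}$ places us in the natural regime $d\le 1/\alpha$'' is a non sequitur: $d$ is the ambient dimension and is unconstrained by the hypothesis, and the entire content of the proposition lies in the regime $d\gg 1/\alpha$, where a list of size $d-1$ is far larger than the trivial $1/\alpha$ bound and yet still cannot achieve error $o(\alpha^{-1/t})$. A telltale sign that your construction misses the point is that it never uses the bounded-moment hypothesis and would yield an \emph{unbounded} error lower bound (point masses lie in $\cD$ for any separation $R$); what you have actually re-derived is the trivial ``list size $\ge \Omega(1/\alpha)$'' fact, not the error--list-size tradeoff. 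The standard proof handles all $d$ with a product construction: take a one-dimensional mixture $U=\alpha\,\delta_{m}+(1-\alpha)U_0$ with $\E[U]=0$ and $t$-th central moments of $U$ bounded, which forces $m=\Theta(\alpha^{-1/t})$ (this is exactly where the moment bound ``bites''); set $D=U^{\otimes d}$ and, for each $i\in[d]$, decompose $D=\alpha P_i+(1-\alpha)Q_i$ with $P_i=\delta_{m}\otimes U^{\otimes(d-1)}$ placed in coordinate $i$, so that $P_i\in\cD$ has mean $m e_i$. All $d$ decompositions yield the same observed distribution for \emph{every} $d$, and your (correct) counting argument then finishes the proof. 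So the indistinguishability-plus-counting skeleton is right, but the construction you defer to future work is precisely the heart of the proposition.
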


Furthermore, in~\cite{diakonikolas2018list}, formal evidence of computational hardness was obtained (see their Theorem 5.7, which gives a lower bound in the statistical query model introduced by~\cite{kearns1998efficient}) that suggests obtaining error $\Omega_t((1/\wtilde_s)^{1/t})$ requires running time at least $d^{\Omega(t)}$.
This was proved for Gaussian inliers and the running time matches ours up to a constant in the exponent.
\section{Stability of list-decoding algorithms}
\label{app:stability}
In this section we discuss two of the existing list-decodable mean estimation algorithms for identity-covariance Gaussian distributions and show that they also work when a $\wmin^2$-fraction of the inliers is adversarially removed.

First, we consider the algorithm in Theorem 3.1 in~\cite{diakonikolas2018list}.
A central object in their analysis is an ``$\alpha$-good multiset", which is a multiset of samples such that all are within distance $O(\sqrt{d})$ of each other and at least an $\alpha$-fraction of them come from a $(1-\Omega(\alpha))$-fraction of an i.i.d. set of samples from a Gaussian distribution $N(\mu, I_d)$.
Then their algorithm essentially works as long as the input contains an $\alpha$-good multiset.
For our case, after the removal of a $\wmin^2$-fraction of inliers, the input  essentially continues to contain a $(1-\wmin^2)\alpha$-good multiset, so the algorithm continues to work in our corruption model.

Second, we consider the algorithm in Theorem 6.12 in~\cite{diakonikolas2023algorithmic}.
The main distributional requirement of their algorithm is that $\mathbb{E}_{x,y \sim S^*} [p^2(x-y)] \leq 2\mathbb{E}_{g, h\sim N(0, I_d)}[p^2(g-h)]$ for all degree-$(t/2)$ polynomials $p$, where $S^*$ is the set of inliers.
Concentration arguments give with high probability that $\mathbb{E}_{x,y \sim C^*} [p^2(x-y)] \leq 1.5\mathbb{E}_{g, h\sim N(0, I_d)}[p^2(g-h)]$.
Furthermore, the distribution over $x, y \sim S^*$ can be seen as a $(1-\wmin^2)^2$-fraction of the distribution over $x, y \sim C^*$. 
Then~\Cref{fact:stability}, which follows by standard probability calculations, also gives that any event under the former distribution can be bounded in terms of the second distribution:

\begin{fact}
\label{fact:stability}
    For any event \(A\), 
    \begin{equation}
        \Pr_{x, y \sim S^{\ast}}(A) \leq \Pr_{x, y \sim C^{\ast}}(A) / (1 - \wmin^2)^2,
    \end{equation}
    where probabilities are taken over a uniform sample from \(S^{\ast}\) and \(C^{\ast}\) respectively.
\end{fact}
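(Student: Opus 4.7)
The plan is to give a direct counting proof using the set inclusion $S^* \subseteq C^*$ and the size bound $|S^*| \geq (1-\wmin^2)|C^*|$ that follows from the adversary removing at most a $\wmin^2$-fraction of the inliers (per \Cref{def:cor-model}). Since both probabilities are uniform over pairs, the fact reduces to comparing two ratios of counts.

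Concretely, I would first write both probabilities as counting ratios,
\[\Pr_{x,y\sim S^*}(A) = \frac{\bigcard{\{(x,y)\in S^*\times S^* : (x,y)\in A\}}}{\card{S^*}^2}, \qquad \Pr_{x,y\sim C^*}(A) = \frac{\bigcard{\{(x,y)\in C^*\times C^* : (x,y)\in A\}}}{\card{C^*}^2}.\]
Because $S^* \subseteq C^*$, the numerator on the left is at most the numerator on the right. Dividing by the left denominator $\card{S^*}^2$, which satisfies $\card{S^*}^2 \geq (1-\wmin^2)^2 \card{C^*}^2$, immediately yields
\[\Pr_{x,y\sim S^*}(A) \leq \frac{\bigcard{\{(x,y)\in C^*\times C^* : (x,y)\in A\}}}{(1-\wmin^2)^2 \card{C^*}^2} = \frac{\Pr_{x,y\sim C^*}(A)}{(1-\wmin^2)^2}.\]

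There is no real obstacle here; the only subtlety worth flagging is that $x$ and $y$ are independent uniform samples (with replacement) from the respective sets, so the product-set interpretation of the denominators is correct, and the argument applies equally whether $A$ depends symmetrically on $(x,y)$ or not. The same argument would work for any number of independent samples, replacing the exponent $2$ with the appropriate count, but we only need the pairwise version as invoked in the stability discussion for the algorithm of \cite{diakonikolas2023algorithmic}.
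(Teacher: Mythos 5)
Your proof is correct and is essentially the formalization of what the paper intends: the paper justifies the fact by noting that the uniform distribution on $S^{\ast}\times S^{\ast}$ places at most $1/(1-\wmin^2)^2$ times the mass that the uniform distribution on $C^{\ast}\times C^{\ast}$ places on any pair, which is exactly your counting argument with $S^{\ast}\subseteq C^{\ast}$ and $\card{S^{\ast}}\geq(1-\wmin^2)\card{C^{\ast}}$. No gap; same approach.
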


Overall we obtain
\[\mathbb{E}_{x,y \sim S^*} [p^2(x-y)] \leq 1.5/(1-\wmin^2)^2\mathbb{E}_{g, h\sim N(0, I_d)}[p^2(g-h)],\]
so for $\wmin$ small enough we have $\mathbb{E}_{x,y \sim S^*} [p^2(x-y)] \leq 2\mathbb{E}_{g, h\sim N(0, I_d)}[p^2(g-h)]$ and their algorithm continues to work in our corruption model.
\section{Concentration bounds}

In this section we prove some concentration bounds essential to our analysis.

\begin{lemma}
\label{lem:conc-general}
Let $D$ be a $d$-dimensional distribution with mean $\mu^* \in \R^d$ and sub-Gaussian $t$-th central moments with parameter $1$.
Fix a unit vector $v \in \R^d$.
Then
\[\Pr_{x \sim D}[|\langle x - \mu^*, v\rangle| \leq R] \geq 1 - \Paren{\frac{\sqrt{t}}{R}}^t\,.\]
\end{lemma}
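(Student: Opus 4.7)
The plan is to apply Markov's inequality to the $t$-th power of $|\langle x - \mu^*, v\rangle|$, leveraging the definition of sub-Gaussian $t$-th central moments. Since $t$ is even (by the definition's requirement), we have $|\langle x - \mu^*, v\rangle|^t = \langle x - \mu^*, v\rangle^t$, so \Cref{def:bounded} with $s = t$ directly gives the moment bound
\[\E_{x \sim D}\, \langle x - \mu^*, v\rangle^t \leq (t-1)!!.\]

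Next, I would apply Markov's inequality to the non-negative random variable $\langle x - \mu^*, v\rangle^t$:
\[\Pr_{x \sim D}\bigl[|\langle x - \mu^*, v\rangle| > R\bigr] = \Pr_{x \sim D}\bigl[\langle x - \mu^*, v\rangle^t > R^t\bigr] \leq \frac{\E \langle x - \mu^*, v\rangle^t}{R^t} \leq \frac{(t-1)!!}{R^t}.\]

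Finally, I would bound the double factorial by $(t-1)!! \leq t^{t/2}$. This follows by writing $(t-1)!! = \prod_{i=1}^{t/2}(2i-1)$ and noting that each factor $2i-1 \leq t$ for $i \leq t/2$, so the product is at most $t^{t/2}$. Substituting gives
\[\Pr_{x \sim D}\bigl[|\langle x - \mu^*, v\rangle| > R\bigr] \leq \left(\frac{\sqrt{t}}{R}\right)^{t},\]
and taking complements yields the claimed bound. There is no real obstacle here; the proof is essentially a one-line application of Markov's inequality combined with the standard inequality relating the double factorial $(t-1)!!$ to $t^{t/2}$.
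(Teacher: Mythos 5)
Your proposal is correct and is essentially identical to the paper's own proof: both apply Markov's inequality to $\langle x - \mu^*, v\rangle^t$, invoke the sub-Gaussian $t$-th central moment bound $\E \langle x - \mu^*, v\rangle^t \leq (t-1)!!$, and conclude via $(t-1)!! \leq t^{t/2} = \sqrt{t}^{\,t}$. No gaps.
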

\begin{proof}
We have that
\begin{align*}
    \Pr_{x \sim D}[|\langle x - \mu^*, v\rangle| > R]
    &\leq \frac{\E_{x \sim D} \langle x - \mu^*, v\rangle^t }{R^t}
    \leq \frac{(t-1)!!}{R^t}
    \leq \Paren{\frac{\sqrt{t}}{R}}^t\,,
\end{align*}
where we used that $(t-1)!! \leq t^{t/2} = \sqrt{t}^t$.
\end{proof}

\begin{lemma}
\label{lem:conc-one-dir}
Let $D$ be a $d$-dimensional distribution with mean $\mu^* \in \R^d$ and sub-Gaussian $t$-th central moments with parameter $1$.
Let $C^*$ be a set of i.i.d. samples drawn from $D$.
Fix a unit vector $v \in \R^d$.
Then with probability at least $1-\exp\Paren{-2|C^*|\Paren{\frac{\sqrt{t}}{R}}^{2t}}$,
\[\Abs{\Set{x \in C^* \text{, s.t. } \abs{\langle x - \mu^*, v\rangle} \leq R}} \geq \Paren{1-2\Paren{\frac{\sqrt{t}}{R}}^t}|C^*|\,.\]
\end{lemma}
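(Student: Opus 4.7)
The plan is to combine the per-sample tail bound from \Cref{lem:conc-general} with a standard Hoeffding inequality over the i.i.d.\ samples in $C^*$. Set $p \defeq \bigl(\sqrt{t}/R\bigr)^t$. For each $x \in C^*$, let $Y_x$ be the indicator of the event $\{|\iprod{x - \mu^*, v}| > R\}$. Since the samples are i.i.d.\ and drawn from $D$, the random variables $\{Y_x\}_{x \in C^*}$ are i.i.d.\ Bernoulli, and by \Cref{lem:conc-general} each satisfies $\E[Y_x] \leq p$.

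The number of ``bad'' samples (those for which $|\iprod{x - \mu^*, v}|$ exceeds $R$) is $\sum_{x \in C^*} Y_x$, a sum of $|C^*|$ independent $[0,1]$-valued random variables. By Hoeffding's inequality applied with deviation $p$ above the mean,
\[
\Pr\Brac{\sum_{x \in C^*} Y_x \;\geq\; 2p\, |C^*|} \;\leq\; \Pr\Brac{\sum_{x \in C^*} (Y_x - \E Y_x) \;\geq\; p\, |C^*|} \;\leq\; \exp\Paren{-2|C^*|\, p^2}.
\]
Taking the complement gives that with probability at least $1 - \exp\!\bigl(-2|C^*|(\sqrt{t}/R)^{2t}\bigr)$, fewer than $2p|C^*|$ samples lie outside the slab, hence at least $(1 - 2p)|C^*|$ samples $x \in C^*$ satisfy $|\iprod{x - \mu^*, v}| \leq R$, which is exactly the claim.

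There is no real obstacle here: the only subtlety is that $\E[Y_x]$ is only upper bounded by $p$ (not necessarily equal), but this only helps since Hoeffding's inequality applied to $Y_x - \E[Y_x]$ yields a deviation bound that strictly implies the stated tail bound after replacing $\E[Y_x]$ by the larger $p$. The exponent $2$ (rather than $2/(\mathrm{range})^2$) in the tail is precisely the constant that Hoeffding gives for $[0,1]$-bounded summands.
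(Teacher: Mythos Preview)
Your proof is correct and matches the paper's approach: the paper's proof reads simply ``The result follows by~\Cref{lem:conc-general} and a Binomial tail bound,'' and your explicit application of Hoeffding's inequality to the i.i.d.\ indicators $Y_x$ is exactly such a tail bound, yielding the stated constant $2$ in the exponent.
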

\begin{proof}
The result follows by~\Cref{lem:conc-general} and a Binomial tail bound.
\end{proof}


\begin{lemma}
\label{lem:conc-all-dir}
Let $D$ be a $d$-dimensional distribution with mean $\mu^* \in \R^d$ and sub-Gaussian $t$-th central moments with parameter $1$.
Let $C^*$ be a set of i.i.d. samples drawn from $D$.
Fix $m$ unit vectors $v_1, \ldots, v_m \in \R^d$.
Then with probability at least $1-\exp\Paren{-2|S^*|m^2\Paren{\frac{\sqrt{t}}{R}}^{2t}}$,
\[\Big| \bigcap_{i \in [m]}\Set{x \in S^* \text{, s.t. } \abs{\langle x - \mu^*, v_{i}\rangle} \leq R} \Big| \geq \Paren{1-2m\Paren{\frac{\sqrt{t}}{R}}^t}|S^*|\,.\]
\end{lemma}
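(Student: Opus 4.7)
The plan is to reduce the multi-direction statement to a single-sample union bound plus a Binomial tail bound, exactly paralleling the proof structure of \Cref{lem:conc-one-dir}.

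First I would apply \Cref{lem:conc-general} to each of the $m$ fixed unit directions $v_1, \ldots, v_m$ separately. For any single sample $x \sim D$ and any fixed $i \in [m]$, this gives $\Pr[|\langle x - \mu^*, v_i\rangle| > R] \leq (\sqrt{t}/R)^t$. Taking a union bound over the $m$ directions yields
\[
\Pr_{x \sim D}\Bigl[\exists\, i \in [m] \colon |\langle x - \mu^*, v_i\rangle| > R\Bigr] \leq m \Paren{\frac{\sqrt{t}}{R}}^t \eqqcolon p.
\]
Equivalently, a single sample lies in the intersection $\bigcap_{i \in [m]} \{|\langle \cdot - \mu^*, v_i\rangle| \leq R\}$ with probability at least $1-p$.

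Next I would set up a Binomial concentration step. Let $Z_j$ be the indicator that the $j$-th sample of $C^*$ fails to lie in the intersection, so the $Z_j$ are i.i.d.\ Bernoulli random variables with parameter at most $p$ and $\sum_{j} Z_j$ counts the samples that are excluded from the intersection. By Hoeffding's inequality applied with deviation $p$ above the mean,
\[
\Pr\Biggl[\sum_{j=1}^{|C^*|} Z_j \geq 2p\,|C^*|\Biggr] \leq \exp\!\Paren{-2|C^*|\, p^2} = \exp\!\Paren{-2|C^*|\,m^2\Paren{\tfrac{\sqrt{t}}{R}}^{2t}}.
\]
On the complement of this event, at most $2p\,|C^*| = 2m(\sqrt{t}/R)^t\,|C^*|$ samples are excluded, so the intersection contains at least $(1 - 2m(\sqrt{t}/R)^t)\,|C^*|$ samples, giving exactly the claim (modulo the apparent $C^*$ vs.\ $S^*$ typo in the displayed bound, which is resolved by whichever convention the paper uses for the i.i.d.\ sample set).

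There is essentially no obstacle here: the proof is a direct two-line combination of a union bound over directions and Hoeffding's inequality over samples, and it is structurally identical to the single-direction case, with the probability-per-sample inflated by the factor $m$. The only thing to be careful about is matching the exponent in the failure probability, which forces the natural choice of deviation $\varepsilon = p = m(\sqrt{t}/R)^t$ in Hoeffding so that both the slack $2p$ in the conclusion and the $m^2(\sqrt{t}/R)^{2t}$ term in the exponent come out correctly.
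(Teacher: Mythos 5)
Your proof is correct and matches the paper's argument exactly: a union bound over the $m$ directions via \Cref{lem:conc-general} followed by a Binomial/Hoeffding tail bound with deviation $p = m(\sqrt{t}/R)^t$. Your observation about the $C^*$ versus $S^*$ notation in the lemma statement is also accurate.
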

\begin{proof}
By~\Cref{lem:conc-general} and a union bound over the $m$ directions, we get
\[\Pr_{x \sim D}[|\langle x - \mu^*, v_i\rangle| \leq R, \forall i \in [m]] \geq 1 - m\Paren{\frac{\sqrt{t}}{R}}^t\,.\]
Then the result follows by a Binomial tail bound.
\end{proof}

\section{Experimental details}
\label{app:exp-details}

\paragraph{Adversarial line and adversarial clusters}
The following figure illustrates the adversarial distributions used in~\Cref{fig:list_size} and further in this section.
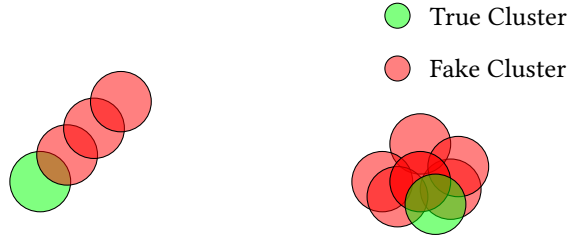
\begin{figure}[htbp]
    \centering
    \centering
    \begin{tikzpicture}
        \def\opacityValue{0.5}
    
        \begin{scope}[rotate=45]
            \node[draw, circle, fill=green, minimum size=0.8cm, fill opacity=\opacityValue] (green1) at (0,0) {};
    
            \foreach \x in {1,2,3}
                \node[draw, circle, fill=red, minimum size=0.8cm, fill opacity=\opacityValue] at (\x * 0.5,0) {};
        \end{scope}
    
        \def\horizontalShift{1.5}
    
        \node[draw, circle, fill=red, minimum size=0.8cm, fill opacity=\opacityValue] (green1) at ({3.9 + \horizontalShift}, -0.1) {};
        \node[draw, circle, fill=red, minimum size=0.8cm, fill opacity=\opacityValue] (green2) at ({3.5 + \horizontalShift}, 0.5) {};
        \node[draw, circle, fill=red, minimum size=0.8cm, fill opacity=\opacityValue] (green3) at ({3.0 + \horizontalShift}, 0) {};
        \node[draw, circle, fill=red, minimum size=0.8cm, fill opacity=\opacityValue] (green4) at ({4.0 + \horizontalShift}, 0.2) {};
        
        \node[draw, circle, fill=red, minimum size=0.8cm, fill opacity=\opacityValue] (green6) at ({3.2 + \horizontalShift}, -0.2) {};
        \node[draw, circle, fill=red, minimum size=0.8cm, fill opacity=0.7] (green7) at ({3.5 + \horizontalShift}, 0) {};
        \node[draw, circle, fill=green, minimum size=0.8cm, fill opacity=\opacityValue] (green5) at ({3.7 + \horizontalShift}, -0.3) {};
    
    
    
        \node[draw, circle, fill=green, minimum size=0.3cm, fill opacity=\opacityValue] (legend1) at (4.7,2.2) {};
        \node[draw, circle, fill=red, minimum size=0.3cm, fill opacity=\opacityValue] (legend2) at (4.7,1.5) {};
        \node[right] at (5,2.2) {True Cluster};
        \node[right] at (5,1.5) {Fake Cluster};
\end{tikzpicture}
    
    \caption{Two variants of adversarial distribution: adversarial line (left) and adversarial clusters (right).}
    \label{fig:process_1}

\end{figure}
    


\paragraph{Data Distribution}
We consider a mixture of \(k = 7\) well-separated (\(\norm{\mu_i - \mu_j} \geq 40\)) \(d = 100\) dimensional inlier clusters whose subgroup sizes range from $0.3$ to $0.02$. The experiments are conducted once using a Gaussian distribution and once using a heavy-tailed t-distribution with five degrees of freedom for both inlier and adversarial clusters. In ~\Cref{fig:heavy_tailed_results} the latter suggests that our algorithm works comparatively well even for mixture distributions which do not fulfill our assumptions. We set \(\wmin = 0.02\) and \(\epsilon=0.12\) so that it is larger than the smallest clusters but smaller than the largest ones and set the total number of data points to \(10000\). The Gaussian noise model simply computes the empirical mean and covariance matrix of the clean data and samples \(1200\) noisy samples from a Gaussian distribution with this mean and covariance. The adversarial cluster model and the adversarial model are as depicted in~\Cref{fig:process_1}.

\begin{figure}[t]
    \centering
    \begin{minipage}[t]{0.8\linewidth}
        \includegraphics[width=0.99\linewidth]{content/figures/new_legend.pdf}
        \end{minipage}
    \begin{minipage}[t]{\linewidth}
        \includegraphics[width=0.49\linewidth]{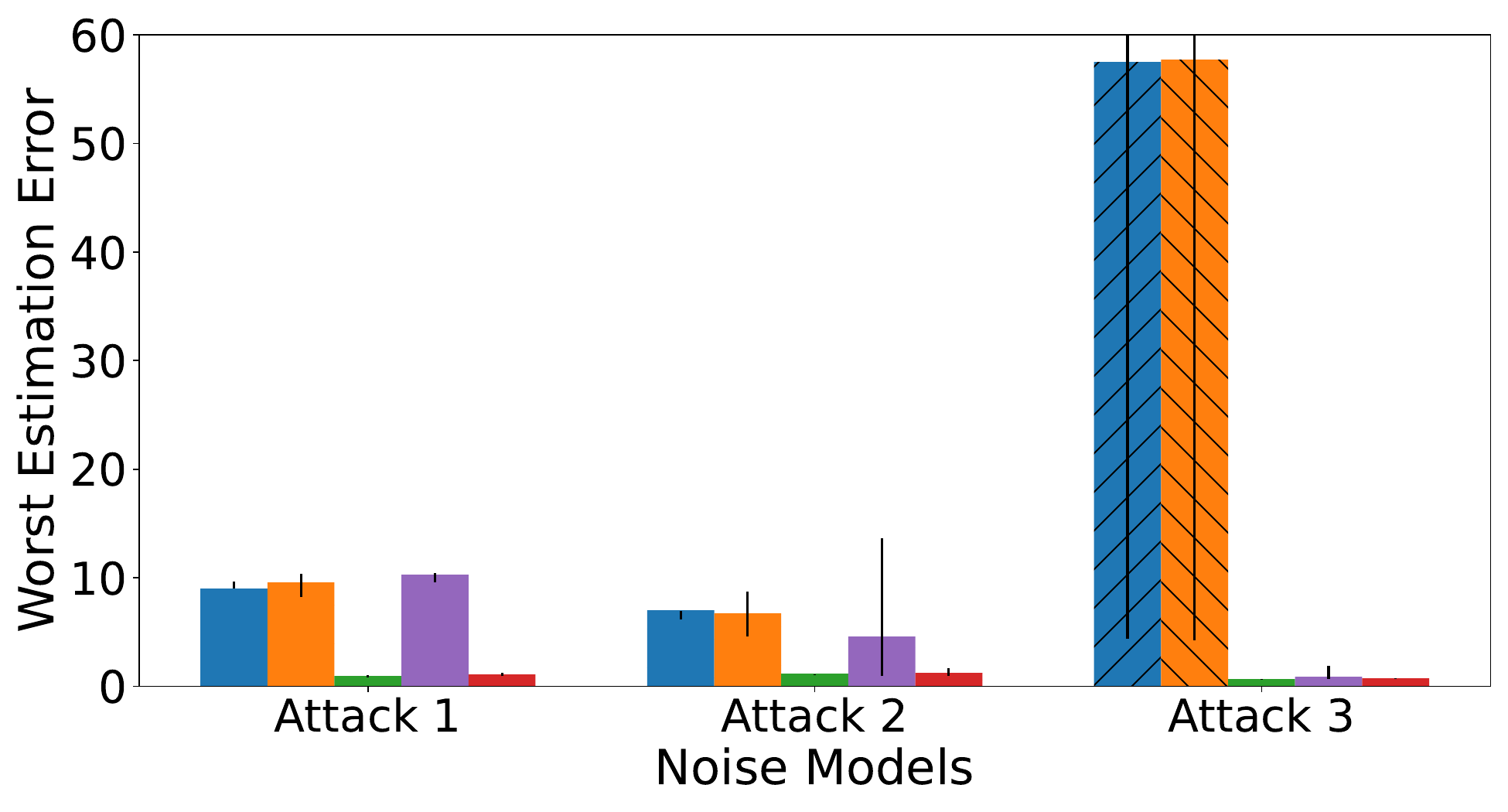}
        \includegraphics[width=0.49\linewidth]{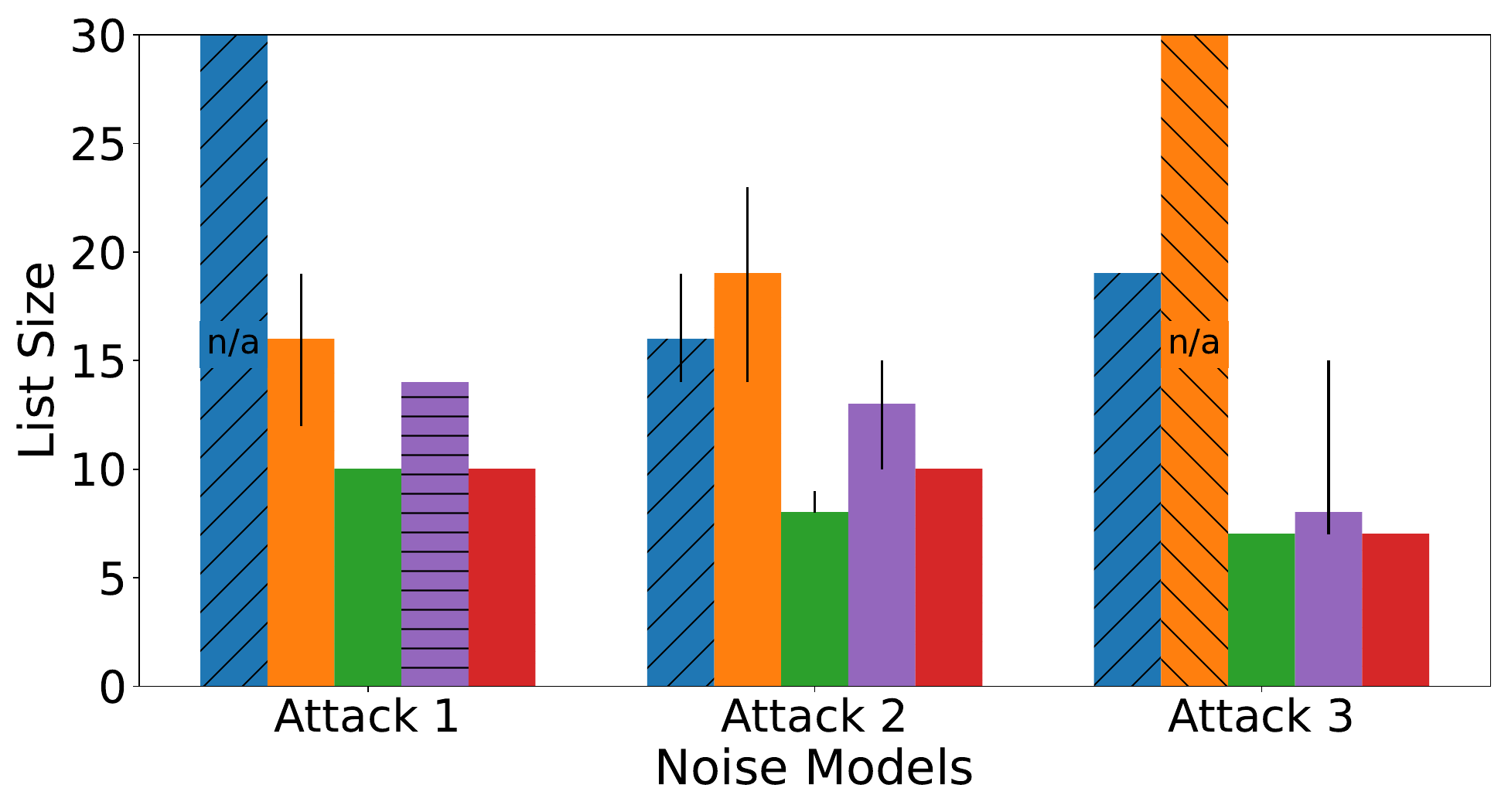}
    \end{minipage}
    \caption{Comparison of five algorithms with three adversarial noise models. On the left we show worst estimation error of algorithms with constrained list size and on the right the smallest list size with constrained error guarantee.
    We plot the median of the metrics with the error bars showing \(25\)th and \(75\)th percentile. We observe that our method consistently outperforms prior works in terms of list size and worst estimation error, with the exception of DBSCAN, which performs at a similiar level.}
    \label{fig:err}
\end{figure}

\paragraph{Attack distributions}
We consider three distinct adversarial models (see~\Cref{fig:process_1} for reference).
\begin{enumerate}
    \item \emph{Adversarial clusters}: After sampling the inlier cluster means, we choose the cluster with the smallest weight. Let \(\mu_s\) denote its mean. Then, we sample a random direction \(v_c\) with \(\norm{v_c} = 10\). After that, we sample three directions \(v_1, v_2\) and \(v_3\) with \(\norm{v_i} = 10\). Then we put three additional (outlier) clusters with means at \(\mu_s + v_c + v_i\). This roughly corresponds to the right picture in~\Cref{fig:process_1}. The samples for each adversarial cluster are drawn from a distribution that matches the covariance of the inlier clusters, with the sample size being twice as large as of the affected inlier cluster.
    \item \emph{Adversarial line}: After sampling the inlier cluster means, we again choose the cluster with the smallest weight.
    Let \(\mu_s\) denote its mean. Then, we sample a random direction \(v_c\) with \(\norm{v_c} = 10\). 
    We put three additional (outlier) clusters with means at \(\mu_s + v_c, \mu_s + 2 v_c\) and \(\mu_s + 3v_c\), which form a line as shown in~\Cref{fig:process_1}. The samples are drawn similarly to the adversarial clusters, with the difference that the covariance is scaled by a factor of \(5\) in the direction of the line.
    \item \emph{Gaussian adversary}: Here we simply introduce noise matching the empirical mean and covariance of all inlier data (i.e., as if all inlier clusters are generated from the same Gaussian distribution).
\end{enumerate}
Note that in the first and second attack, the adversary creates clusters that do not respect the separation assumption of the true inlier clusters: either adversarial clusters are placed around the smallest inlier cluster (\textrm{Adversarial Cluster}), or the adversarial clusters form a line, pointing out in some fixed direction (\textrm{Adversarial Line}).

\paragraph{Implementation details}
We implement the list-decodable mean estimation base learner in our InnerStage algorithm (Algorithm \ref{alg:first_stage_high_level}) based on \cite{diakonikolas2022clustering}. It leverages an iterative multi-filtering strategy and one-dimensional projections. In particular, we use the simplified gaussian version of the algorithm. It is designed for distributions sampled from a Gaussian but also shows promising results for the experiments involving a heavy-tailed t-distribution as depicted in~\Cref{fig:heavy_tailed_results}. The robust mean estimator used to improve the mean hypotheses for large clusters is omitted in our implementation.

\begin{figure}[t]
    \centering
    \begin{minipage}[t]{0.8\linewidth}
        \includegraphics[width=0.99\linewidth]{content/figures/new_legend.pdf}
        \end{minipage}
    \begin{minipage}[t]{\linewidth}
        \includegraphics[width=0.49\linewidth]{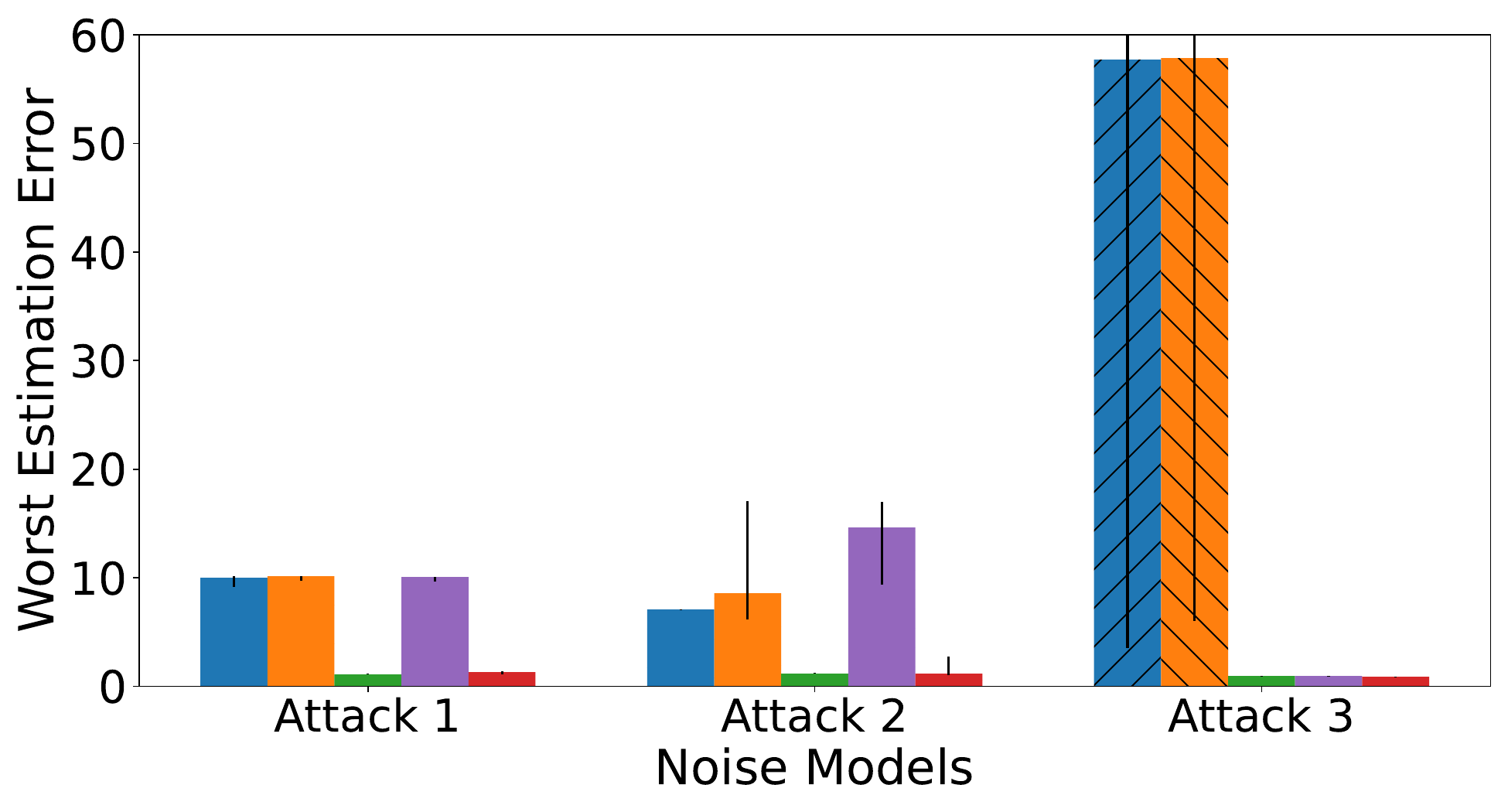}
        \includegraphics[width=0.49\linewidth]{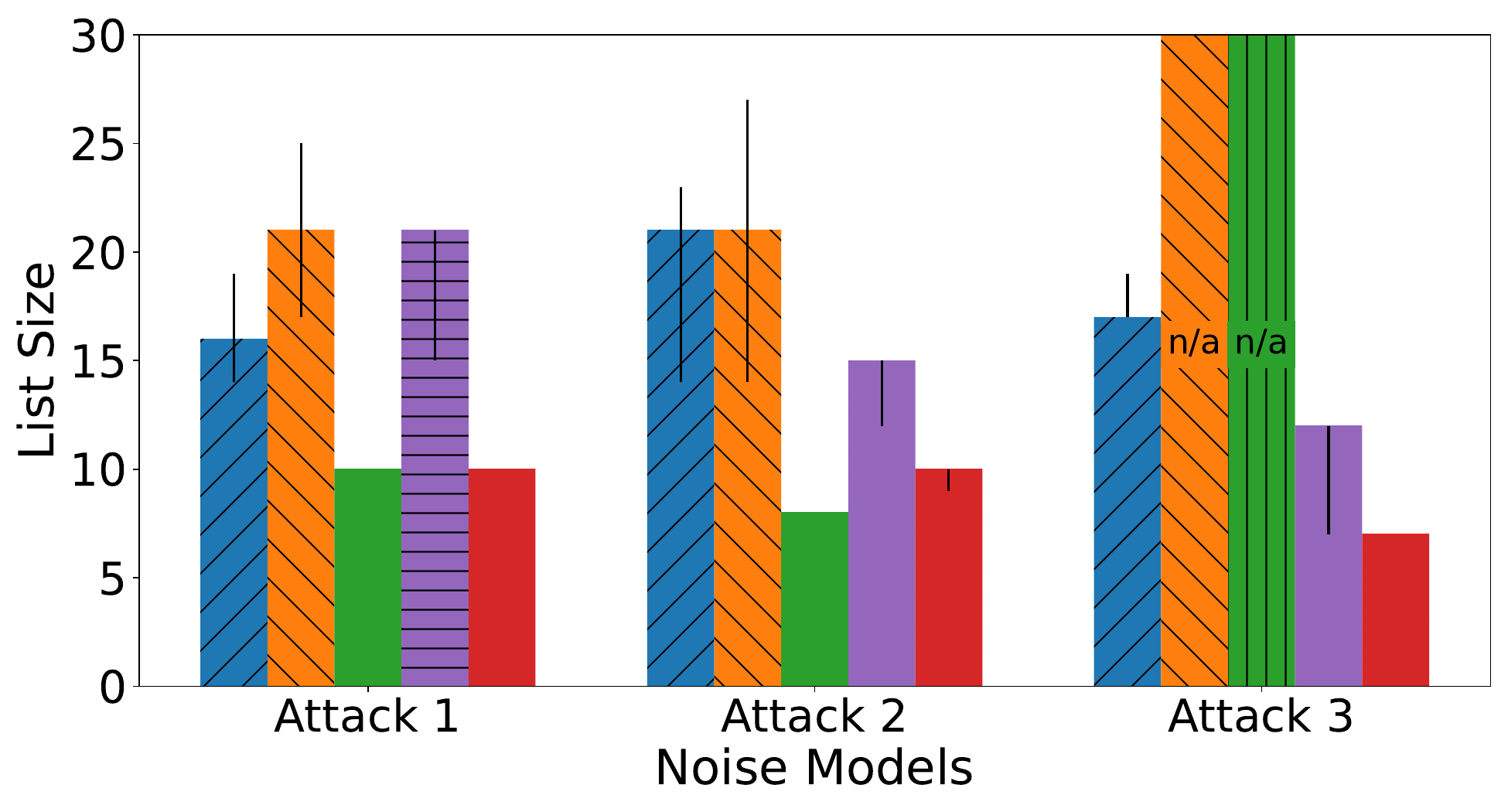}
    \end{minipage}
    \caption{Worst estimation error and list size comparison for the case where inlier distributions are heavy-tailed. We can observe numerical stability of our approach.}
    \label{fig:heavy_tailed_results}
\end{figure}

\paragraph{Hyper-parameter search and experiment execution}
The hyper-parameters of our algorithm are tuned beforehand based on the experimental setup. For the comparative algorithms, hyper-parameter searches are conducted within each experiment after initial tuning. For our algorithm, key parameters include the pruning radius \(\gamma\) used in the OuterStage routine (Algorithm \ref{alg:second_stage_pruning}) and \(\beta\) used in the InnerStage (Algorithm \ref{alg:constructing_output}). In addition, parameters for the LD-ME base learner, such as the cluster concentration threshold, also require careful selection, resulting in a total of 7 parameters. The tuning for these was performed using a grid search comparing about \(250\) different configurations. Similarly, we independently tune the vanilla LD-ME algorithm, which we run with \(\wmin\) as weight parameter. For DBSCAN, we optimize the list size and error metrics by searching over a range of \(100\) values for \(\epsilon\), which controls the maximum distance between samples considered in the same neighbourhood. The minimum samples threshold, which validates the density based clusters, is pretuned beforehand and adjusted based on \(\wmin\). For \(k\)-means and its robust version, utilizing a median-of-means weighting scheme, we explore \(21\) values for \(k\), including the true number of clusters. 
Each parameter setting is executed \(100\) times to account for stochastic variations in the algorithmic procedures, such as \(k\)-means initialization. The list size and worst estimation error for each list of clusters obtained is visualized exemplarily for one iteration of the experiment in~\Cref{fig:scatter_plots}. The plot provides insight into how the different algorithms perform and vary with different list sizes.

\begin{figure}[t]
    \centering
    \begin{minipage}[t]{\linewidth}
        \includegraphics[width=0.4\linewidth]
        {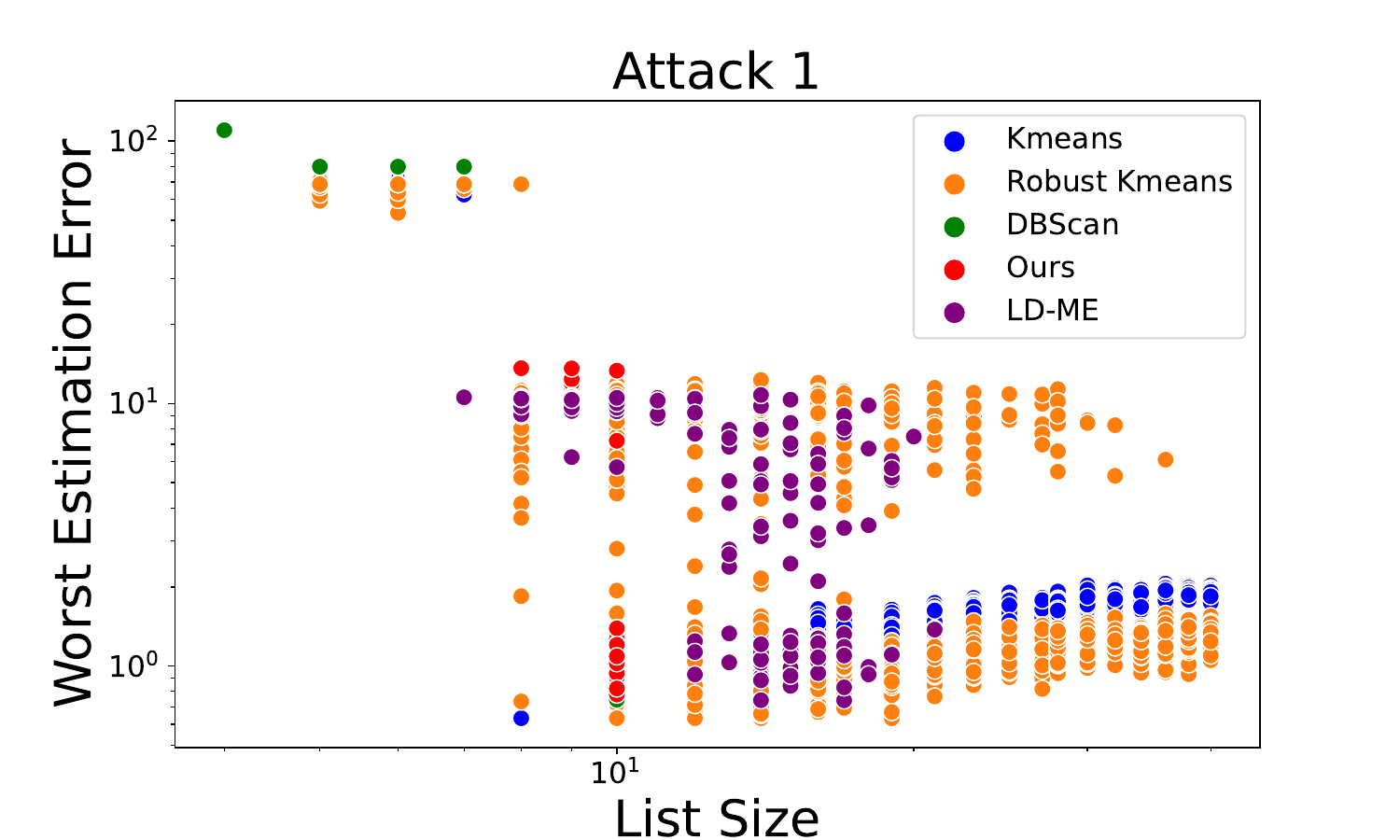}
        \includegraphics[width=0.4\linewidth]{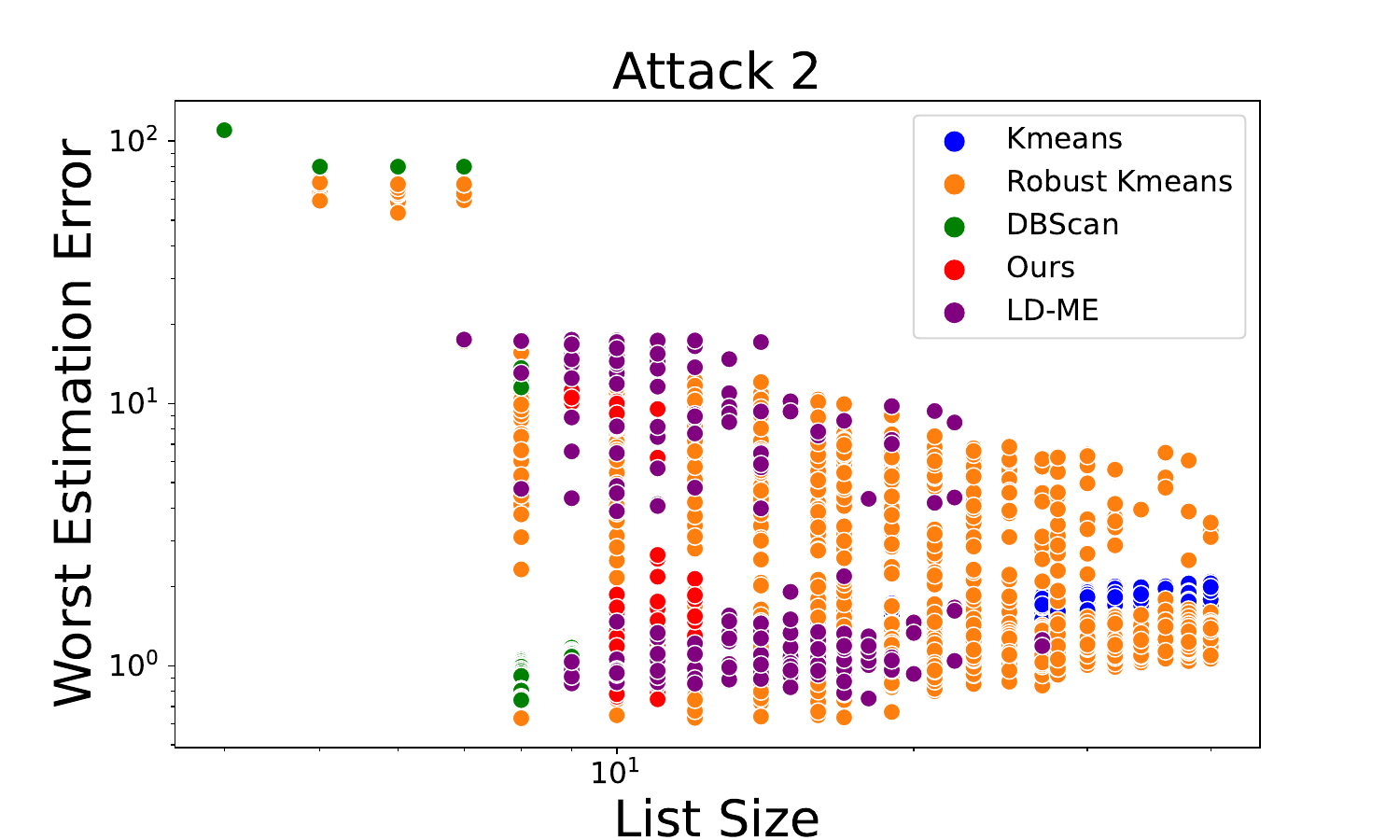}
    \end{minipage}
    \begin{minipage}[t]{\linewidth}
        \centering
        \includegraphics[width=0.4\linewidth]{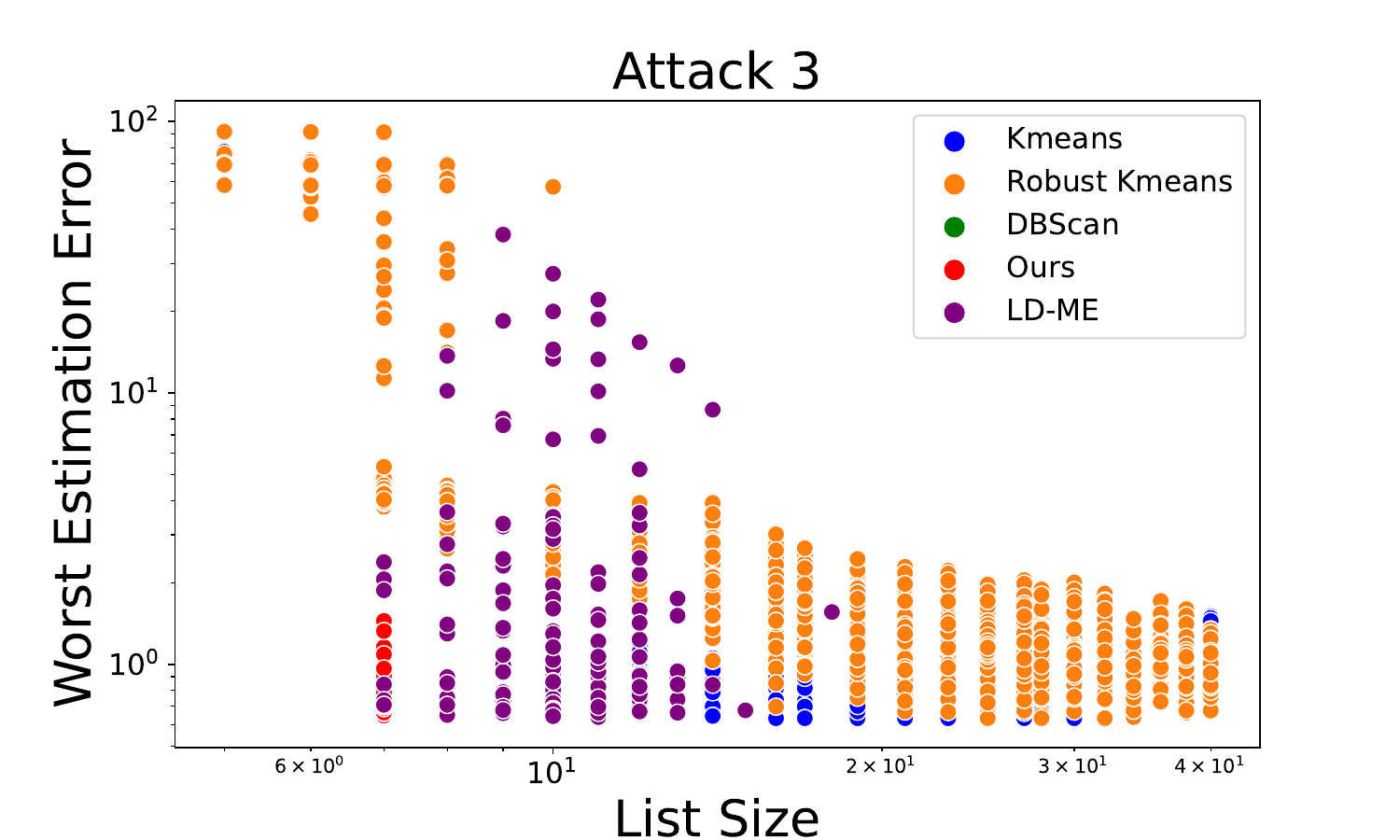}
    \end{minipage}
    \caption{Scatter plot of all results for one iteration of the experiment using three adversarial noise models.}
    \label{fig:scatter_plots}
\end{figure}

\paragraph{Evaluation details} 
Note that we have two sources of randomness: the data is random and also the algorithms themselves are random (except DBSCAN). For a clear comparison, we sample and fix one dataset for each attack model. 
we plot the performance of 
\(100\) runs of each algorithm for each parameter setting, 
each time recording the returned list size together with the worst estimation error \(\max_{i \in [k]} \min_{\muhat \in L} \norm{\mu_i - \muhat}\). Then we either (i) report the worst estimation error for all runs with constrained list size (we pick the list size most frequently returned by our algorithm, specifically \(7\) or \(10\) in our experiments) (see~\Cref{fig:err}, left), or (ii) report the smallest list size required to achieve the same or smaller worst estimation error (we pick the \(75\)th quantile of errors of our algorithm for a threshold) (see~\Cref{fig:err}, right).
Under size constraint (i), the bar plots correspond to the median over the runs, with error bars indicating  the \(25\)th and \(75\)th quantiles. Under error constraint (ii), the bar plots represent the minimum list size for which the median over the runs falls below the threshold, while the error bars show the minimum list size for which the \(25\)th and \(75\)th quantiles meet the constraint. Note that 'n/a' indicates that, within the scope of our parameter search, no list size achieves an error below the specified constraint.

In~\Cref{fig:heavy_tailed_results} we study the numerical stability of our approach. In particular, whether the performance degrades when inlier distribution does not satisfy required assumptions. We observe that if one uses our meta-algorithm with base learner designed for Gaussian inliers, we still obtain stable results even in the case of heavy-tailed inlier distribution.


\subsection{Variation of \(\wmin\)}
\label{sec:app-wlow}
To study the effect of varying \(\wmin\) input on the performance of our approach and LD-ME, we introduce a new noise model. As illustrated in ~\Cref{fig:wlow_variation_setup}, we consider a mixture of \(k = 3\) well-separated clusters: one small cluster with a weight of \(0.045\) and two large clusters, each with a weight of \(0.2\). We place two adversarial clusters (see paragraph on attack distributions for details): one near the small cluster and another near one of the large clusters. Furthermore, uniform noise is introduced, spanning the range of the data generated by the inlier and its nearby outlier cluster and accounting for \(10\%\) of the data in this region. Overall, \(\epsilon = 0.56\) and we draw \(22650\) samples from this mixture distribution.

For both algorithms we run 100 seeds for each \(\wmin\) ranging from 0.02 to 0.2, which corresponds to the weight of the largest inlier cluster. In ~\Cref{fig:wlow_variation}, we plot the median estimation error with error bars showing the \(25\)th and \(75\)th quantiles for the small cluster (top left) and the large cluster near the outlier cluster (top right). As expected from our theoretical results, we observe that our algorithm performs roughly constant in estimating the mean of the large cluster, regardless of the initial \(\wmin\). Meanwhile, the estimation error of LD-ME increases as \(\wmin\) decreases further below the true cluster weight. Furthermore, the plots show that our approach does consistently outperform LD-ME in terms of both worst estimation error and list size. ~\Cref{fig:wlow_variation_bar_plots} also compares the performance of the clustering algorithms in this experimental setup with results similar to the ones obtained in the previous experimental settings. 

\subsection{Computational resources}
Our implementation of the algorithm and experiments leverages multi-threading. It utilizes CPU resources of an internal cluster with \(128\) cores, which results in a execution time of about \(5\) minutes for a single run of the experiment for one noise model with \(10000\) samples. We remark that classic approaches like \(k\)-means and DBSCAN perform fast and the most time-consuming part is the execution of the LD-ME base learner. Given our experimental setup with three noise models, it takes about \(15\) minutes to reproduce all our results for one data distribution.

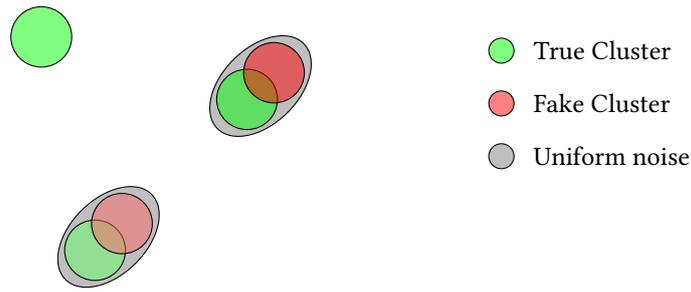
\begin{figure}[hb]
    \centering
    \begin{tikzpicture}
        \def\opacityValue{0.5}
    
        \begin{scope}[rotate=45]
            \draw[fill=gray, fill opacity=\opacityValue] (0.75, 0) ellipse (0.8cm and 0.5cm);
            
            \node[draw, circle, fill=green, minimum size=0.8cm, fill opacity=\opacityValue] (green1) at (0.5,0) {};
    
            \node[draw, circle, fill=red, minimum size=0.8cm, fill opacity=\opacityValue] at (1,0) {};

        \end{scope}

        \begin{scope}[shift={(-2, -2)}, rotate=45]
            \draw[fill=gray, fill opacity=\opacityValue] (0.75, 0) ellipse (0.8cm and 0.5cm);
            
            \node[draw, circle, fill=green!60, minimum size=0.8cm, fill opacity=\opacityValue] (green1) at (0.5,0) {};

            \node[draw, circle, fill=green, minimum size=0.8cm, fill opacity=\opacityValue] (green1) at (2,2.5) {};
    
            \node[draw, circle, fill=red!60, minimum size=0.8cm, fill opacity=\opacityValue] at (1,0) {};

        \end{scope}
    
        \node[draw, circle, fill=green, minimum size=0.3cm, fill opacity=\opacityValue] (legend1) at (3.7,1) {};
        \node[draw, circle, fill=red, minimum size=0.3cm, fill opacity=\opacityValue] (legend2) at (3.7,0.3) {};
        \node[draw, circle, fill=gray, minimum size=0.3cm, fill opacity=\opacityValue] (legend3) at (3.7, -0.4){};
        \node[right] at (4,1) {True Cluster};
        \node[right] at (4,0.3) {Fake Cluster};
        \node[right] at (4, -0.4) {Uniform noise};
        
\end{tikzpicture}
    
    \caption{Setup for \(\wmin\) variation experiment with clusters contaminated by an adversarial cluster and uniform noise. Lower color intensities indicate smaller cluster weights.}
    \label{fig:wlow_variation_setup}
\end{figure}

\begin{figure}[hb]
    \centering
    \begin{minipage}[t]{\linewidth}
        \includegraphics[width=0.4\linewidth]
        {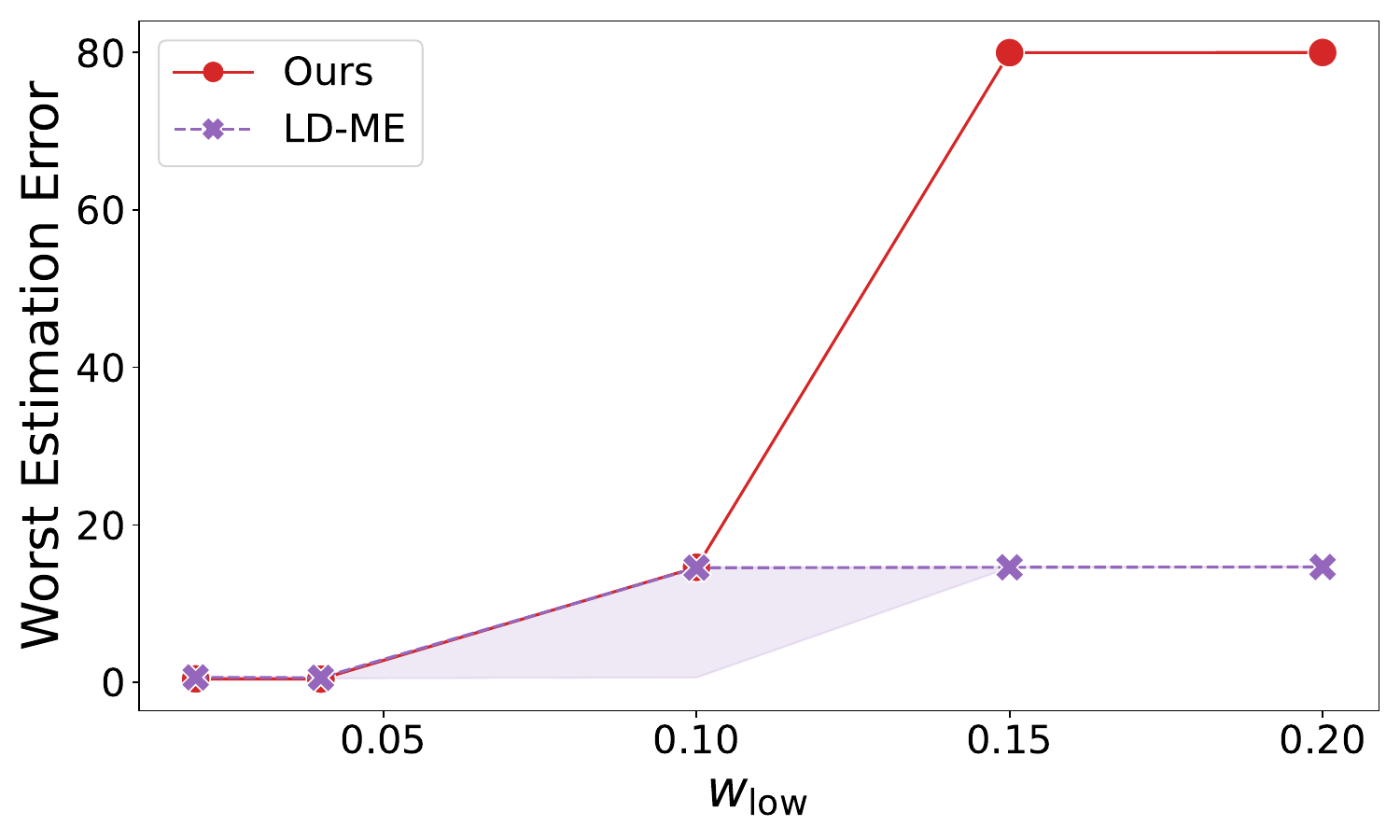}
        \includegraphics[width=0.4\linewidth]{content/figures/wlow_variation_error_large.pdf}
    \end{minipage}
    \begin{minipage}[t]{\linewidth}
        \centering
        \includegraphics[width=0.4\linewidth]{content/figures/wlow_variation_size.pdf}
    \end{minipage}
    \caption{Comparison of list size and estimation error for small and large inlier clusters for varying \(\wmin\) inputs. The experimental setup is illustrated in~\Cref{fig:wlow_variation_setup}. The plot on the top left shows the estimation error for the small cluster and the plot on the top right shows the error for the large cluster. We plot the median values with error bars indicating 25th and 75th quantiles. As \(\wmin\) decreases, our algorithm maintains a roughly constant estimation error for the large cluster, while the error for LD-ME increases.}
    \label{fig:wlow_variation}
\end{figure}

\begin{figure}[b]
    \centering
    \begin{minipage}[t]{\linewidth}
        \centering
        \includegraphics[width=0.4\linewidth]{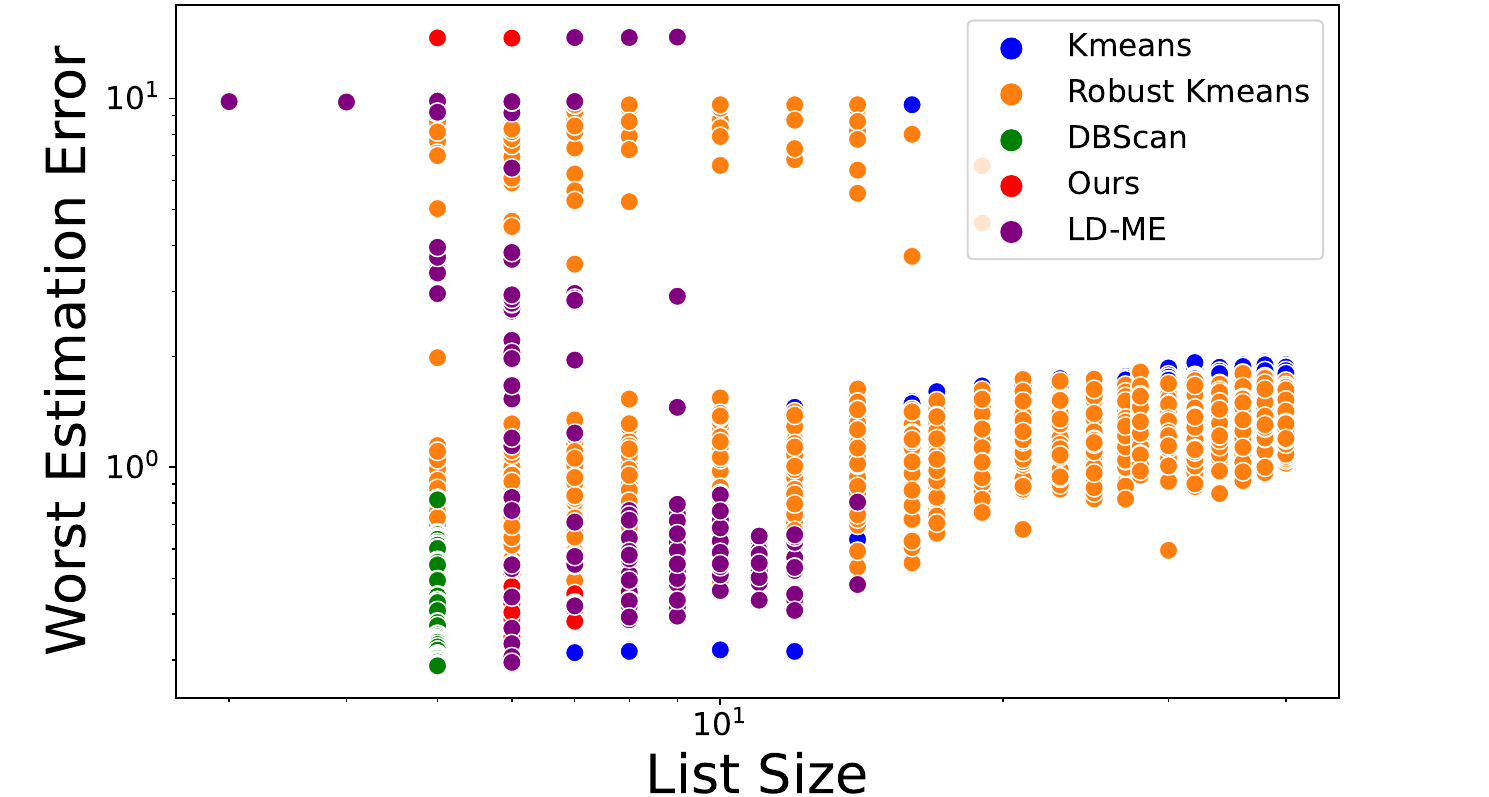}
        \vspace{3mm}
    \end{minipage}
    \begin{minipage}[t]{0.8\linewidth}
        \includegraphics[width=0.99\linewidth]{content/figures/new_legend.pdf}
        \end{minipage}
    \begin{minipage}[t]{\linewidth}
        \includegraphics[width=0.4\linewidth]{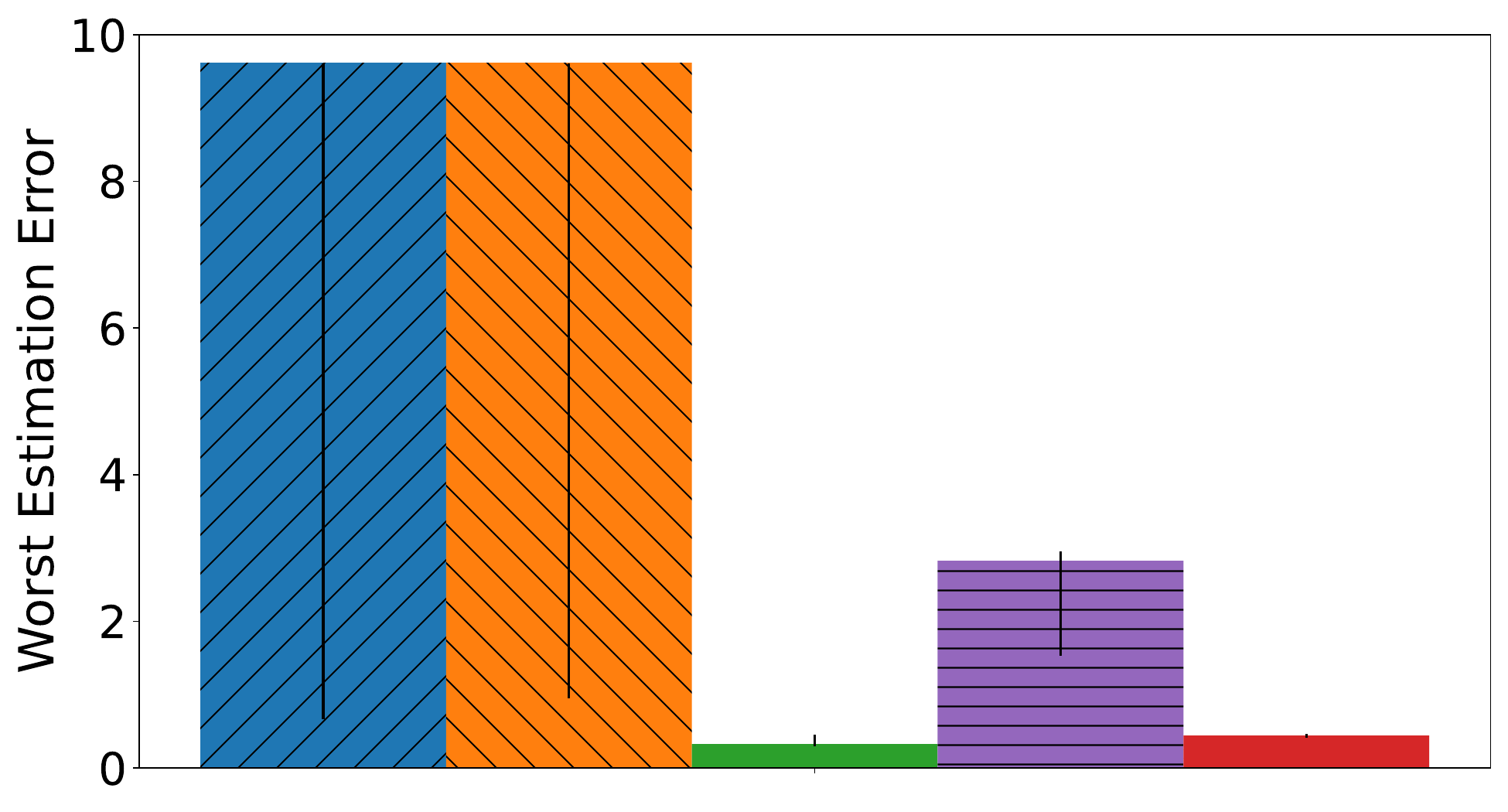}
        \includegraphics[width=0.4\linewidth]{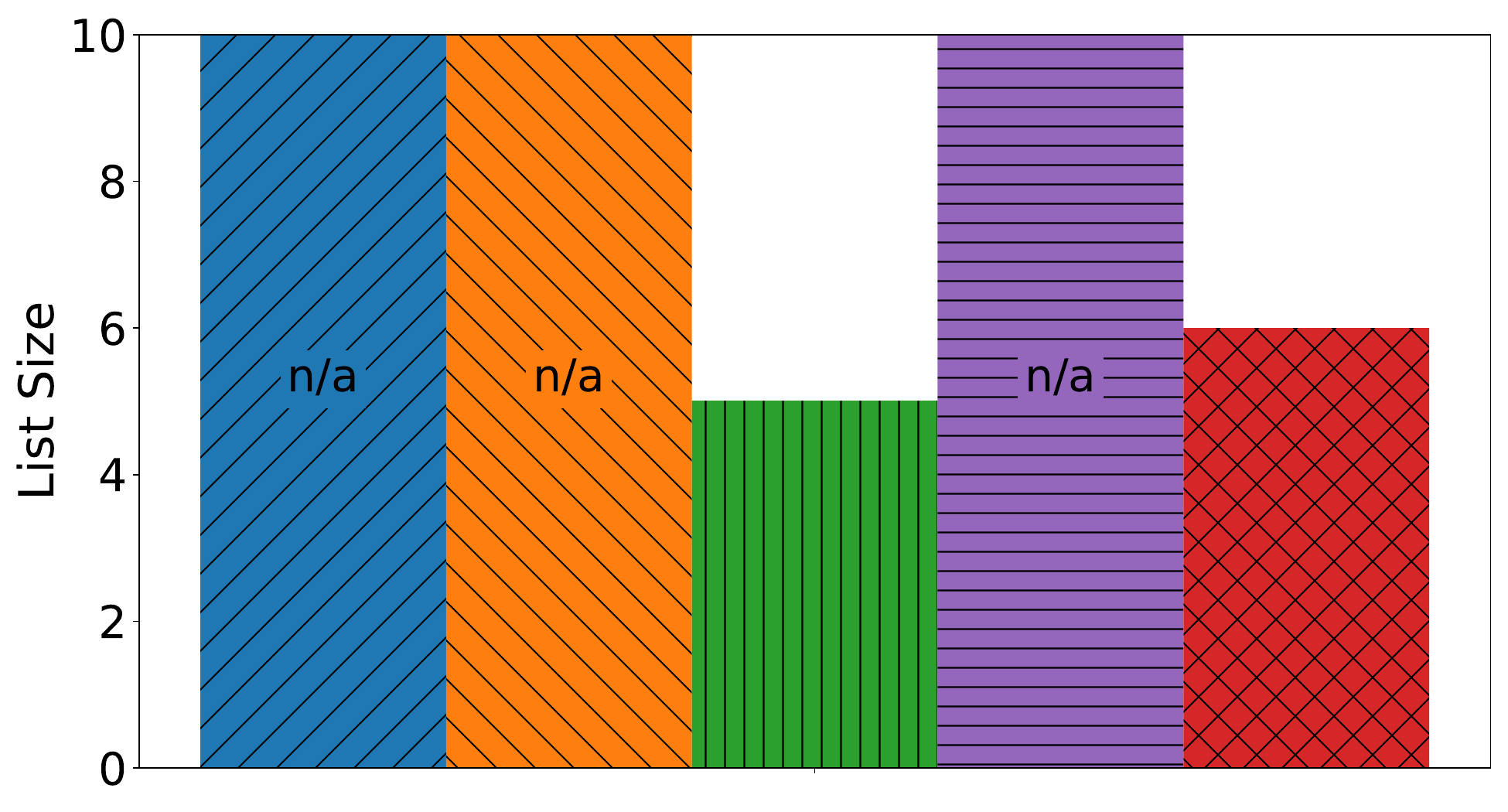}
    \end{minipage}
    \caption{Worst estimation error and list size comparison for the setup used in the \(\wmin\) variation experiment.}
    \label{fig:wlow_variation_bar_plots}
\end{figure}


\end{document}